\documentclass{article} 
\usepackage{iclr2024_conference,times}


\usepackage{hyperref}

\usepackage[utf8]{inputenc} 
\usepackage[T1]{fontenc}    
\hypersetup{
    colorlinks,
    linkcolor={red!50!black},
    citecolor={blue!50!black},
    urlcolor={blue!80!black}
}
\usepackage{url}            
\usepackage{booktabs}       
\usepackage{amsfonts}       
\usepackage{nicefrac}       
\usepackage{microtype}      
\usepackage{xcolor}         
\usepackage{cjhebrew}  
\usepackage{natbib}  
\newlength\mylen
\settowidth\mylen{\space}
\setcitestyle{numbers,square,citesep={,\kern-\mylen}}
\usepackage{dirtytalk}  
\usepackage{graphicx}
\usepackage{subcaption}  
\usepackage{amsmath}
\usepackage{multirow}
\usepackage{enumitem}
\usepackage{amssymb,amsmath,amsthm,dsfont,bm}
\usepackage{xurl}
\usepackage{comment}
\usepackage{mathtools}
\usepackage[bottom]{footmisc}
\usepackage{amsthm}
\newtheorem{theorem}{Theorem}
\usepackage[T1]{fontenc}
\usepackage[scaled=0.85]{sourcecodepro}

\renewcommand{\paragraph}[1]{\textbf{#1}\,\,}

\newcommand{\red}[1]{{\leavevmode\color{red}{#1}}}
\newcommand{\blue}[1]{{\leavevmode\color{blue}{#1}}}

\newcommand{\green}[1]{{\leavevmode\color[RGB]{0,128,0}{#1}}}

\newcount\Comments
\Comments=1
\usepackage{color}
\definecolor{darkgreen}{rgb}{0,0.5,0}
\newcommand{\kibitz}[2]{\ifnum\Comments=1{\color{#1}{#2}}\fi}
\newcommand{\gn}[1]{\kibitz{orange}{[Gali: #1]}}
\newcommand{\dcp}[1]{\kibitz{purple}{[DCP: #1]}}

\definecolor{antiquefuchsia}{rgb}{0.57, 0.36, 0.51}
\newcount\CommentsA
\CommentsA=1
\newcommand{\kibitzA}[2]{\ifnum\CommentsA=1{\color{#1}{#2}}\fi}

\newcommand{\dcpadd}[1]{#1}
\newcommand{\niradd}[1]{#1}
\newcommand{\galiadd}[1]{#1}

\newcommand\todo[1]{\red{TODO: {#1}}}
\newcommand\tocite{\red{[CITE]}}

\newcommand{\extended}[1]{} 

\newcommand\expect[2]{\mathbb{E}_{#1}{\left[ {#2} \right]}}
\newcommand\prob[2]{P_{#1}{\left( {#2} \right)}}
\DeclareMathOperator*{\argmax}{argmax}
\DeclareMathOperator*{\softmax}{softmax}
\DeclareMathOperator*{\argmin}{argmin}

\newcommand{\one}[1]{\mathds{1}{\{{#1}\}}}

\newcommand{\naive}{na\"ive}

\newcommand{\R}{\mathbb{R}}
\newcommand{\yhat}{{\hat{y}}}

\newcommand{\ybar}{{\bar{y}}}

\newcommand{\pval}{{\tilde{v}}}

\newcommand{\abar}{{\bar{a}}}

\newcommand{\thetahat}{{\hat{\theta}}}
\newcommand{\fhat}{{\hat{f}}}
\newcommand{\ptilde}{{\tilde{p}}}
\newcommand{\nhat}{{\hat{n}}}
\newcommand{\pihat}{{\hat{\pi}}}

\newcommand{\mask}{{\mu}}

\newcommand{\pref}{{\beta}}

\newcommand{\alloc}{{\mathtt{alloc}}}
\newcommand{\choice}{{\mathtt{choice}}}
\newcommand{\selection}{{\mathtt{selection}}}

\newcommand{\decongestion}{{\mathtt{decongestion}}}
\newcommand{\welf}{{W}}
\newcommand{\welfprxy}{{\widetilde{W}}}

\newcommand{\Loss}{{\mathcal{L}}}
\newcommand{\dist}{{\mathcal{D}}}
\newcommand{\smplst}{{\mathcal{S}}}
\newcommand{\masktemp}{{\tau}}
\newcommand{\vmin}{{v_{\textnormal{min}}}}
\newcommand{\vmax}{{v_{\textnormal{max}}}}

\newcommand{\method}[1]{{\fontfamily{lmtt}\selectfont{{#1}}}}
\newcommand{\DbR}{\method{DbR}}
\newcommand{\DbRpolicy}{\DbR$(\pihat)$}
\newcommand{\DbRmask}{\DbR$({\hat{\mask}})$}
\newcommand{\DbRtopk}{\DbR$(\thetahat)$} 
\newcommand{\pricepred}{\method{price-pred}}
\newcommand{\choicepred}{\method{choice-pred}}
\newcommand{\random}{\method{random}}
\newcommand{\oracle}{\method{oracle}}

\newtheorem{proposition}{Proposition}
\newtheorem{definition}{Definition}

\newtheorem{lemma}{Lemma}

\title{Decongestion by Representation: Learning\\
to Improve Economic Welfare in  Marketplaces}

\author{%
  Omer Nahum\\
  Technion DDS \\
  \And
  Gali Noti \\
  Cornell CS\\
  \And
  David C. Parkes \\
  Harvard SEAS \\
  \And
  Nir Rosenfeld \\
  Technion CS \\
}

%

\iclrfinalcopy 
\begin{document}

\maketitle

\begin{abstract}
 
Congestion is a common failure mode of markets,  where consumers compete inefficiently on the same subset of goods
(e.g.,  chasing the same small set of properties on a vacation rental platform).
The typical economic story is that prices  decongest by 
balancing supply and demand. 
But in modern online marketplaces, prices are typically set in a decentralized way by sellers, \galiadd{and the information about items is inevitably partial.} 
The power of a platform  is limited to controlling \emph{representations}---the 
\niradd{subset of information about items presented by default to users.}
%
This motivates the present study of \emph{decongestion by representation},
where a platform seeks to learn representations that reduce  congestion and thus improve social welfare.
The technical challenge is twofold: relying only on revealed preferences from the choices of consumers, rather than true preferences; and the combinatorial problem associated with
 representations that  determine \dcpadd{the  features to reveal in the default view}.
 We tackle both challenges by proposing a 
{\em differentiable proxy of welfare} that can be trained end-to-end on consumer choice data.
We develop sufficient conditions for when decongestion promotes welfare,
and present \dcpadd{the results of extensive} experiments on both synthetic and real data
that demonstrate the utility of our approach.
\looseness=-1


\end{abstract}

\section{Introduction}


Online marketplaces have become ubiquitous as our primary means
for consuming tangible goods as well as services across many different domains.
Examples span a variety of commercial segments, including
dining (e.g., Yelp),
real estate (e.g., Zillow),
vacation homestays (e.g., Airbnb),
used or vintage items (e.g., eBay),
handmade crafts (e.g., Etsy),
and specialized freelance labor (e.g., Upwork).
A key reason underlying the success of these platforms is their ability to manage an exceptionally large and diverse collections of items,
to which users are given immediate and seamless access.
Once a desired item has been found on a platform then
obtaining it should---in principle---be only `one click away.'\looseness=-1

But just like conventional markets, online markets are also prone to certain forms of market failure,
which 
may hinder the ability of users to  easily obtain valued items.
One prominent type of failure, which our paper targets, is \emph{congestion}.
Congestion occurs when demand for certain items exceeds supply;
i.e., when
multiple users are interested in a single item
of which there are not sufficiently-many copies available.
E.g., 
in vacation rentals, the same 
 vacation home may draw the interest of many users, \galiadd{but only one of them can rent it.} 
This can prevent potential transactions from materializing, resulting
 in reduced social welfare---to the detriment of 
 \galiadd{users, suppliers, and the platform itself}.


In conventional markets,
the usual economic response to congestion 
is to set prices
in an appropriate manner (e.g., \cite{shapley1971assignment}). 
In our example, if the attractive vacation home is priced correctly,
then only one user (who values it most, relative to other properties) 
will choose it;
similarly, if other items are also priced correctly in relation to user valuations,
then prices can fully decongest the market and 
the market can obtain optimal welfare, \galiadd{defined as the sum of users' valuations to their assigned items.} 
\looseness=-1

But for modern online markets, this approach is \niradd{unattainable} for two reasons.
The first  reason is that many 
online platforms  do not have control over prices, which are
instead set in a decentralized way by
different sellers.
%
The second reason is more subtle, but central to the solution we advance in this paper: 
we argue that an inherent aspect of online markets
is that users make choices under limited information,
and that this limits the effectiveness of price.
Online environments impose natural constraints on the amount of information consumed,
 due to technical limitations
(e.g., restricted screen space),
behavioral mechanisms
(e.g., cognitive capacity, attention span, impatience),
or design choices
(e.g., what information is highlighted, appears first \dcpadd{as a default}, or requires less effort to access).
As such, the decisions of users are more affected by whatever information is made more readily available to them.
\dcpadd{From an economic perspective, this means they are making decisions under `incorrect' preference models,
for which (i) prices that decongest (or clear) at these erroneous preferences are incorrect, and
(ii) prices that decongest at correct preferences still leave congestion at erroneous preferences}.\footnote{Economic theory has many examples of other ways in which partial information hurts markets (e.g.,~\citep{akerlof1978market}).}%
\looseness=-1

Partial information is therefore a reality that platforms must cope with---a new reality which requires new approaches.
To ease congestion and improve welfare,
our main thesis is that platforms can---and \emph{should}---utilize their control over \emph{information},
and in particular, on how items are represented to users.
The decision of \emph{representation}---the default way in which items are shown to users---is typically in the hands of the platform;
and while providing equal access to all information may be the ideal,
reality dictates that choosing \emph{some} representation is inevitable.\footnote{In the influential book `Nudge', Thaler and Sunstein (\citeyear{nudge}) argue similarly for `choice architecture' at large.\looseness=-1}
Given this,
\niradd{we propose to use machine learning to solve the necessary design problem of choosing beneficial item representations}.

To this end, we present a new framework for learning item representations that  reduce congestion and promote welfare.
\dcpadd{Since congestion results from users making choices independently
according to their own individual preferences,
to decongest, the platform must act to (indirectly) coordinate these idiosyncratic choices;}
and since representations affect choices by shaping how users `perceive' 
value,
we will seek to coordinate perceived preferences.
The basic premise of our approach is that,
with enough variation in true user preferences, 
it should be possible to find representations for which choices made
under perceived values remain both valuable \emph{and} diverse.
For example, consider a rental unit represented as having `sea view' and `sunny balcony' and draws the attention of many users
but does not convey other information such as `noisy location';
if users vary enough in how  they value quietness,
then showing `quiet' instead of `balcony' may help reduce congestion and improve outcomes.\looseness=-1


%

%
\extended{
A \naive\ approach would be to choose representations on the basis of feature importance, e.g., by training a predictive model of user choices and applying feature selection.
But the caveat in this approach is that it accounts only for demand---not supply, and to properly decongest a market,
a designated approach is needed.
}

From a learning perspective, the fundamental challenge 
is that welfare itself (and its underlying choices)
depends on private user preferences. 
For this, we
develop a proxy objective that relies on \galiadd{observable} choice data alone,
and optimizes for representations that encourage 
favorable decongested solutions through users' choices.
A technical challenge is that representations are combinatorial objects, corresponding to a  subset of features to show. 
Building on recent advances in differentiable discrete optimization, 
we modify our objective to be differentiable,
thus permitting end-to-end training using gradient methods. 
To provide formal grounding for our approach of decongestion by representation,
we theoretically study the connection between decongestion and welfare.
Using competitive equilibrium analysis,
we give several simple and interpretable sufficient conditions under which reducing congestion provably improves welfare.
Intuitively, this happens when it is possible to present item features across which user preferences are more diverse, while at the same time hiding features that are not too meaningful for the users. 
The conditions 
provide basic insight as to when our approach works well.\looseness=-1
%
%

We 
end with an extensive set of experiments that shed light on our proposed setting and learning approach. We first make use of  synthetic data to
explore the usefulness of decongestion as a proxy for welfare,
considering the importance of  preference variation,
the role of prices, and the degree of information partiality.
We then use real data of user ratings to elicit user preferences across a set of diverse items. Coupling this with simulated user behavior, 
we  
demonstrate the susceptibility of \naive\ prediction-based methods to harmful congestion,
and the ability of our congestion-aware representation learning framework to improve economic outcomes.
Code for all experiments can be found at:
\url{https://github.com/omer6nahum/Decongestion-by-Representation}.

\extended{\todo{plug in public github url}}


\if
\green{\rule{\linewidth}{0.5em}}


\dcp{I suggest to make the intro more punchy and direct, and to follow the sequence of args I drafted in the intro: (1) congestion a common problem, what it is,  and why it's a problem. (2) why prices may be insufficient. bring actual marketplace examples here. (3) irrational attention to some subset of features. representaiton inevitable. (4) learn to decongest. (5) technical challenges, including revealed preferences, fixed goods and IID users, and counterfactuals, and results.}

\todo{clarify that it's inevitable to choose some mask!}

Online marketplaces have become ubiquitous as our primary means
for consuming tangible goods and services
across a multitude of domains spanning many commercial segments.
To provide useful service to their users,
modern platforms have come to rely on machine learning to make predictions about user preferences.
Indeed, machine learning has proven to be quite effective in providing precise and timely predictions.
But this success has also made it tempting to forget that predictions are only a means---not an end, 
and that predictions alone, even if accurate, may not suffice for driving 
important objectives that are central to conventional markets, such as user welfare. \gn{Social welfare? (actually, we do social welfare, and user welfare is social welfare minus total revenue. But we could also write more freely here...}

To see why, consider for example a platform that suggests restaurants.
Given sufficient data, the platform may learn to predict with high precision
which restaurants users will like, and provide these as input to users.
As individuals, this should aid users in making better decisions.
But in terms of collective outcomes,
there is an inherent drawback to this approach:
Since users often have similar tastes,
popular restaurants are likely to draw more interested diners than they are able to accommodate;
this can then lead to unhappy customers inside the restaurant
(since it is crowded and noisy),
and unhappier customers outside it (since lines are long and frustrating, and not everyone will get a seat).
Thus, conventional learning may help users with what they \emph{want}---but not what they \emph{get}. \gn{I like the bottom line, but the part about reduction in value to those who did get what they wanted (although interesting) is not like this in our model/story. Also, we could maybe argue the opposite: that those who got a rare seat are even happier with it (VIP).}

The above exemplifies a common form of market failure known as \emph{congestion},
wherein demand---in the form of users individually making choices, exceeds supply---which is limited. \gn{Not sure, but it feels to me like ``demand exceeds supply'' takes to different places, and we would like to put the emphasis more on the part that people choose similar things and will lose from it (also individually).}
Other online marketplaces where congestion is likely to arise include
vacation homestays (e.g., Airbnb),
real estate (e.g., Zillow),
handmade crafts (e.g., Etsy),
used or vintage items (e.g., Ebay),
and specialized freelance labor (e.g., Upwork).
In all of these, supply is fundamentally limited,
and congestion is a material risk that can substantially impede user welfare.
Our focus herein is on such markets.

\todo{talk about supply-side congestion? seller welfare?} \gn{Yes, I think that in the above paragraph, if we say ``user welfare'' (rather than social welfare) it would make sense to at least mention something about the seller welfare and maybe also the platform's. E.g., add ``... impede user and seller welfare as well as lead to direct and indirect losses to the platform itself.'' Also, maybe change: ``congestion is a material risk'' to ``congested user choices are a material risk''.}

A natural way to increase welfare in congested markets is via \emph{decongestion};
in our example, this can be achieved for instance by encouraging 
some users to instead go to their second-favorite restaurant
(assuming this entails more variation in choices).
Classic market theory proposes 
two means by which a system can act to decongest a market:
it can either determine allocations (i.e., decide who gets what),
or it can set item prices (typically increasing with demand).
If the system has access to users' (private) valuation functions,
and can control either allocations or prices,
then market theory shows how to 
efficiently compute welfare-optimizing allocations and/or prices,
which guarantee decongestion, as well as market clearing \tocite.
\gn{This depends on the parameters. So maybe rephrase to: ``...which guarantees minimum congestion, as well as market clearing (assuming sufficient demand)''}
But in current online marketplaces, prices are typically set by suppliers,
users are free to make their own choices,\footnote{In our example, clearly the system cannot simply assign users to restaurants, nor tamper with their menus.}
and true user preferences are generally unknown.
We should therefore ask: what \emph{can} the system do to reduce congestion? \gn{Emphasis on the ``system'' too? Also, perhaps ``marketplace platform'' instead of a system? And if we put it this way -- it seems even more important to mention/discuss the platform's interest in reducing congestion (i.e., that congestion leads to losses on the side of the platform too.}

\todo{how to dodge recommendations? careful!} \gn{Right... David?...}

Our main thesis in this paper is that systems can utilize their control over \emph{information} to reduce congestion and improve welfare---and our goal is to show how machine learning can be leveraged to achieve this.
Classic theory assumes that users make rational choices on the basis of full information regarding items.
But in modern contexts, users almost never observe items in their entirety:
restricted screen space, bounded cognitive capacity, limited attention spans and impatience---all these place natural constraints on the amount of information regarding items that can be conveyed to users.
As such, users make decisions based on how they `perceive' value,
as it is shaped by partial information about items.
Clearly, \emph{what} information is revealed can be highly influential of user choices---and therefore, should directly affect congestion.
Our key observation is that it is typically in the system's power to determine \emph{how} items are represented to users,
and through this---to reduce congestion.
\gn{It is important to put more emphasis on the fact that presenting part of the information is inevitable. Systems always need to present only part of the information, and our observation/thesis is that it can do it in a way that increases welfare. It's important because otherwise it may sound like we suggest the system to ``lie'' for better welfare.}

\todo{congestion is product of irrational behavior - which is why we can't expect prices to decongest "on their own"}

In this paper we initiate the study of decongestion via information selection,
and propose a learning framework for optimizing user welfare by learning decongesting item representations.
We focus on representations that are \emph{truthful} but \emph{lossy},
in that they must reveal an item's true attributes, but cannot reveal all of them \tocite.
In this setting, and for a certain model of user choice,
we design a learning objective that enables to approximately optimize welfare through decongestion.
Thus, rather than learning which items to show (which we take as given),
we focus on learning how to show them.

The challenge in optimizing welfare is that welfare itself, as well as the underlying choices of users, depends on user valuations---which are private.
Working in the challenging but more realistic setting of \emph{revealed preferences}
in which data includes only past user choices (rather than full preference profiles),
we propose a proxy objective that sidesteps the reliance on valuations
by instead optimizing for representations that encourage 
favorable decongested solutions.
The technical challenge in optimizing this proxy is that representations in our setting are combinatorial objects;
towards this, and building on recent advances in differentiable discrete optimization \tocite,
we modify our objective to be differentiable,
thus permitting end-to-end training using gradient methods.\looseness=-1

In general, decongestion should work well as a proxy for welfare when there is sufficient variation in user preferences.
When this holds, representations can help disperse user choices;
even if users end up choosing (slightly) less beneficial items,
the fact that overall more items are allocated contributes to improved welfare. \gn{Consider rephrasing: currently it sounds like we ``sacrifice'' users for the overall welfare, but this is not exactly the case here.}
To formalize this idea, we propose and study notions of \emph{monotonicity},
which characterize when and how reducing congestion increases welfare, and 
give sufficient conditions for a market to be monotone.

We end with a series of experiments that shed light on our proposed setting and learning approach.
Using synthetic data, we explore the usefulness of decongestion as a proxy for welfare across multiple settings.
Here we consider the importance of natural preference variation,
the role of prices, and \todo{...}.
Then, using real data coupled with simulated user behavior,
we empirically evaluate the capacity of our approach to improve welfare by learning useful representations.
Our results demonstrate the susceptibility of \naive\ prediction-based methods to harmful congestion,
and indicate the potential power of congestion-aware representation learning to remedy this.

\todo{
\textbf{Broader perspective/aims}. \\
- state that \textbf{representation is *inevitable*}; must choose \emph{some} $k$ features. ground argument. \\
- connect to idea of choice architecture in nudge theory/book
(=can't escape determining the context for people making choices; exmple - checkbox is either opt-in or opt-out, must decide which) \\ 
- our idea works due to capacity to harness natural variation in user prefs \\
- congestion due to irrational - limited info \\
- rational prices, irrational behavior - users don't pay attention to "right" features
}

\fi
\subsection{Related work}

There is a growing recognition that many online platforms 
provide economic marketplaces,
and considerable efforts have been dedicated to studying the role of recommender systems in this regard~\citep{chaney2018algorithmic,schmit2018human,tabibian2019consequential}.
Some work, for example, has  studied the effects of  learning and recommendation on
the equilibrium dynamics of two-sided markets of users and suppliers
\citep{ben2018game,mladenov2020optimizing,jagadeesan2022supply},
exchange markets \citep{guo2022learning},
or markets of competing platforms \citep{ben2019regression,jagadeesan2023competition}.
The main distinction is that our paper studies not what to show to users, but how.
One study examined
the effect of  the complete absence of knowledge about some items on welfare~\citep{dean2020recommendations};
in contrast, we study how welfare is affected by partial information about items. 
There are also studies on the role of information in the form of
 recommendations
in enhancing system-wide performance with
learning users; 
e.g., the use of selective information 
reporting to promote exploration by a group of users~\cite{kremer2014implementing,mansour2015bayesian,bahar2020fiduciary}.
Again, this is quite distinct from our setting.\looseness=-1 

Conceptually related to our work is research in the field 
of  human-centered AI \cite{riedl2019human} that studies
 AI-aided human decision making, and in particular
 prior work that has  considered
 methods to learn 
 representations of inputs to 
 decision problems to aid in single-user decision making \cite{hilgard2021learning}.
 \extended{\dcp{say something about technical connection, if any?}}
Related, there is  work on
selectively providing information in the form of advice to a user in order to optimize their decision performance~\cite{noti2022learning}. 
It has also been argued 
that   providing less accurate predictive information to  users can sometimes improve performance \cite{bansal2021most}. These works, however, do not consider interactions between multiple users which are at the center of the types of markets we consider here.  

Though underexplored in  online markets,
several works in related fields have considered how representations
affect decisions. For example, \citep{kleinberg2019simplicity}
aim to establish the role of `simplicity' in decision-making aids,
and in relation to fairness and equity.
Works in strategic learning have emphasized the role of users
in representations; i.e., in learning to choose in face of strategic representations \citep{nair2022strategic}, 
and as controlling representations themselves \citep{krishnaswamy2021classification}.
Here we extend the discussion on representations to markets.\looseness=-1

\if 0

\dcpadd{There is a growing recognition that many online platforms 
provide economic marketplaces
and considerable work to study the role of recommender systems 
in this regard~\citep{chaney2018algorithmic,schmit2018human,tabibian2019consequential}.
Some work, for example, has  studied the effect of  recommendations on
the equilibrium dynamics of two-sided markets of users and suppliers
\citep{ben2018game,mladenov2020optimizing,jagadeesan2022supply},
exchange markets \citep{guo2022learning},
or markets of competing platforms \citep{ben2019regression,jagadeesan2023competition}.
Other work has modeled  the kinds of equilibria that
can  arise
when learning is incorporated into markets,
including two-sided markets of users and suppliers
\citep{ben2018game,mladenov2020optimizing,jagadeesan2022supply},
exchange markets \citep{guo2022learning},
or markets of competing platforms \citep{ben2019regression,jagadeesan2023competition}.
One study examined
the effect of  the complete absence of knowledge about some items on welfare~\citep{dean2020recommendations} (in contrast, we study how welfare is affected by partial information 
about items). 
\if 0
are not necessarily aligned:
e.g.,
that predictions can lead to homogenization which reduces welfare \citep{chaney2018algorithmic};
that preference heterogeneity is crucial for maintaining recommendation quality over time \citep{schmit2018human};
and that there is an inherent incongruence between short- and long-term goals \citep{tabibian2019consequential}.
Works in this space explore the implications of choosing which items to show;
in terms of information,
\citep{dean2020recommendations}
study how partial information \emph{across} items, which deems some items unreachable, impedes welfare.
In contrast, we study how welfare is affected by partial information \emph{within} items.\looseness=-1
\fi
\if 0
Several works emphasize how equilibria and dynamics relate to social notions such as
fairness \citep{patro2020incremental,wang2021user},
accessibility \citep{guo2021stereotyping},
segmentation \citep{banerjee2017segmenting},
or distributive justice \citep{do2021two}.

To enable tractable equilibrium analysis,
most works make strong assumptions on what information is available to each party,
typically selecting one source of uncertainty to be coped with via learning,
and often formulated as an online learning problem.
Our work seeks to relax informational assumptions, 
and pursues offline learning algorithms.\looseness=-1

\fi
There are also studies on the role of information in the form of
 recommendations
in enhancing system-wide performance with
learning users; 
e.g., the use of selective information 
reporting to promote exploration by a group of users~\cite{kremer2014implementing,mansour2015bayesian,bahar2020fiduciary}.
Again, this is quite distinct from the setting of the current paper.
\if 0
\cite{kremer2014implementing} and \cite{mansour2015bayesian} propose using the information disclosed by the platform to users in order to incentivize exploration. They study strategies that improve overall performance and the total welfare of users. However, \cite{bahar2020fiduciary} highlights a concern regarding fairness in such approaches. They argue that while optimizing information can benefit overall performance and total welfare, it may come at the expense of sacrificing utility for certain users;  the authors propose individual guarantees to mitigate the potential individual cost associated with optimizing overall performance. In their setting, the platform is aware of an optimal recommendation that it sometimes chooses not to reveal in order to optimize overall performance, which is different from our setting. Notably, their work does not involve congestion-related problems.

\fi
%
%

\if 0

A recent line of work approaches online platforms as economic markets,
which explicitly tie together user utilities 
(c.f. prediction-based methods, which assume independence).
Here the focus is mostly on notions of equilibrium that arise
when learning is incorporated into markets,
including two-sided markets of users and suppliers
\citep{ben2018game,mladenov2020optimizing,jagadeesan2022supply},
exchange markets \citep{guo2022learning},
or markets of competing platforms \citep{ben2019regression,jagadeesan2023competition}.
Several works emphasize how equilibria and dynamics relate to social notions such as
fairness \citep{patro2020incremental,wang2021user},
accessibility \citep{guo2021stereotyping},
segmentation \citep{banerjee2017segmenting},
or distributive justice \citep{do2021two}.
To enable tractable equilibrium analysis,
most works make strong assumptions on what information is available to each party,
typically selecting one source of uncertainty to be coped with via learning,
and often formulated as an online learning problem.
Our work seeks to relax informational assumptions, 
and pursues offline learning algorithms.\looseness=-1

\fi
}


\dcpadd{Conceptually related to our work is research in the field 
of  human-centered AI \cite{riedl2019human} that studies
 AI-aided human decision making, and in particular
 prior work that has  considered
 methods to learn 
 representations of inputs to 
 decision problems to aid in single-user decision making \cite{hilgard2021learning}. \dcp{say something about technical connection, if any?} 
Related, there is  work on
selectively providing information in the form of advice to a user in order to optimize their decision performance~\cite{noti2022learning}. 
It has also been argued 
that   providing less accurate predictive information to  users can sometimes improve performance \cite{bansal2021most}. These works, however, do not consider interactions between multiple users which are at the center of the types of markets we consider here.  

Though underexplored in  online markets,
several works in related fields have considered how representations
affect decisions. For example, \citep{kleinberg2019simplicity}
aim to establish the role of `simplicity' in decision-making aids,
and in relation to fairness and equity.
Works in strategic learning have emphasized the role of users
in representations; i.e., in learning to choose in face of strategic representations \citep{nair2022strategic}, 
and as controlling representations themselves \citep{krishnaswamy2021classification}.}

\fi
\section{Problem Setup} \label{sec:setup}

The main element of our setting is a \emph{market},  where each market is composed of $m$ indivisible items and $n$ users. Within a market, items $j$ are described by non-negative feature vectors $x_j \in \R^d_+$ and prices $p_j \ge 0$.
Let  $X \in \R^{m \times d}$ denote all item features,
and $p \in \R^m$  denote all prices,
\niradd{which we assume to be fixed}.\footnote{%
\niradd{For example, this is reasonable when sellers adapt slowly
(or not at all; e.g., as in ad auctions \citep{doraszelski2018just,alcobendas2021adjustment}),
or when prices are set for a broader aggregate market.
For discussion on adaptive prices, see Appendix~\ref{apx:adaptive_prices}.}}
We mostly consider unit supply, in which there is only one unit of each item (e.g., as in vacation homes), but note our method directly extends to general finite supply, which we discuss later.\looseness=-1

Each user $i$ in a market has a {\em valuation function},
$v_i(x)$, which determines their true value for an item with
feature vector $x$.
We use $v_{ij}$ to denote user $i$'s value for the $j$th item.
We model each user with a non-negative, linear preference, 
with $v_i(x)=\pref_i^\top x$ for some
{\em user type}, $\pref_i \ge 0$. The effect is that 
 $v_i(x) \ge 0$ for all items,
and all item attributes contribute positively to value.
We assume w.l.o.g.~that values are scaled such that $v_i(x) \le 1$.
Users have unit demand, i.e.,
are interested in purchasing a single item.
Given full information on items,
a rational agent would choose $y_i^* = \argmax\nolimits_j v_{ij} - p_j$. 



\paragraph{Partial information.}
The unique aspect of our setup is that users make choices on the basis of partial information, over 
 which the system has control. For this, we model users as making decisions on an item
with feature vector $x$ based on its \emph{representation} $z$, which is
truthful but lossy:  $z$ must contain only information from $x$, but 
does not contain all of the information. 
We consider this to be a necessary aspect of a practical market, where users
are unable to appreciate all of the complexity of goods in the market.
Concretely, $z$ reveals a subset of $k \le d$ features from $x$,
where the set of features is
determined by a binary \emph{feature mask}, $\mask \in \{0,1\}^d$, with $|\mask|_1=k$,
that is fixed for all items.
Each  mask induces
\emph{perceived values}, $\pval$,
which are the values a user infers from observable features:
\begin{equation}
\label{eq:percieved_values_and_choices}
\pval_i(x) = \pref_i^\top (x \odot \mask) = (\pref_i)_\mask^\top z, 
\end{equation}
where $\odot$ denotes element-wise product,
and $(\pref)_\mask$ is $\beta$ restricted to features in $\mask$.
For market items $x_j$ we use $\pval_{ij}=\pref_i^\top (x_j \odot \mask)$.
Given this, under \emph{partial} information,
user $i$ makes choices $y_i$ via:\footnote{In Appx.~\ref{apx:synth-additional}
we experiment with an alternative decision model,
which shows qualitatively similar results.\looseness=-1}
\begin{align}
\label{eq:perceived_choice}
y_i(\mask) =
\choice(X,p;v_i\mask) 
\coloneqq \argmax\nolimits_j \pval_{ij} - p_j,
\end{align}
where $\pval_{ij} - p_j$ is agent $i$'s perceived utility from item $j$, with $y_i(\mask)=0$ encoding the `no choice' option, which occurs when no item has positive perceived utility.
\niradd{Note Eq.~\eqref{eq:perceived_choice} is a (simple) special case of \citep{wu2019feature}.}
When clear from context, we will drop the notational dependence on $\mask$.
We use $y \in \{0,1\}^{n \times m}$ to describe all choices in the market, where $y_{ij}=1$ if user $i$ chose item $j$, and 0 otherwise.\looseness=-1

Under Eq.~\eqref{eq:perceived_choice},
each user is modeled as a 
conservative boundedly-rational decision-maker,
whose perception of value derives from how items are represented, and in particular, by which features are shown.
Note that together with our positivity assumptions,
this ensures that representations cannot be used to portray items as more valuable than they truly are---which could lead to choices with negative utility.
\looseness=-1
%

\paragraph{Allocation.}
To model the effect of congestion we 
require an  {\em allocation mechanism}, denoted $a=\alloc(y_1,\dots,y_n)$, 
where
$a \in \{0,1\}^{n \times m}$ has 
$a_{ij}=1$ if item $j$ is allocated to user $i$, and 0 otherwise.
We will use $a(\mask)$ to denote allocations that result from choices $y(\mask)$.
We require {\em feasible allocations}, such that each item is allocated at most once and
each user receives at most one item.
For the allocation mechanism,  we use the 
\emph{random single round} rule,
where each 
item $j$  in demand 
 is allocated uniformly at random to one
of the users for which $y_i=j$.
This models 
congestion: if several users choose the same
 item $j$,
then only one of them receives it
 while all others receive nothing.
%
%
Intuitively, for welfare to be high, we would like that:
(i) allocated items give high value to their users, and
(ii) many items are allocated. As we will see, this observation forms the basis of our approach.
\looseness=-1

 \paragraph{Learning representations.}
To facilitate learning, we will assume there is some (unknown) distribution over markets, $\dist$,
from which we observe samples.
In particular, we model markets with a fixed set of items,
and users sampled iid from some pool of users.
For motivation, consider vacation rentals,
where the same set of properties are available
each week, but the prospective vacationers differ from week to week.
Because preferences $\pref_i$ are private to a user,
we instead assume access to {\em user features}, $u_i \in \R^{d'}$,
which are informative of $\pref_i$ in some way.
%
Letting $U \in \R^{n \times d'}$ denote the set of all user features,
each market is thus defined by a tuple $M=(U,X,p)$
of users, items, and prices.



We assume access to a sample set
$\smplst=\{(M^{(\ell)},y^{(\ell)})\}_{\ell=1}^L$
of markets $M_\ell=(U^{(\ell)},X,p^{(\ell)}) \sim \dist$ and corresponding user choices $y^{(\ell)}$.
Note this permits item prices to vary across samples, 
i.e., $p^{(\ell)}$ can be specific to the set of users $U^{(\ell)}$.
Our overall goal will be to use $\smplst$ 
to learn representations that entail useful decongested allocations, as illustrated in Figure~\ref{fig:illustration}.
Concretely, we aim for optimizing the \emph{expected welfare} induced by allocations,  
i.e., the expected sum of values of allocated items:
%
\begin{equation}
\label{eq:learning_objective}
\welf_\dist(\mask) = 
\expect{\dist}{\sum\nolimits_{ij} a(\mask)_{ij} v_{ij} }, \quad
a(\mask) = \alloc(y_1,\dots,y_n), \quad
y_i =
\choice(X,p;v_i,\mask)
\end{equation}
where
expectation is taken also w.r.t. to possible randomization in $\alloc$
and $\choice$.
Thus, we wish to solve $\argmax_\mask W_\dist(\mask)$.
Importantly, note that while choices are made based on perceived values $\pval$,
as shaped by $\mask$,
welfare itself is computed on the basis of true values $v$---which are unobserved.

\extended{
\todo{relation to stratreps: simlar, but not exactly, because we preserve feature identity}
}

\if 0
\paragraph{Choices and allocations.}
We focus on the setting of \emph{unit-demand unit-supply}, in which
items are indivisible,
each market includes only one copy of each item,
and each user can choose at most one item.\footnote{This easily extends to general finite supply by considering distinct `duplicates' of the same item.}
Formally, choices are $y_i \in [n]$,
with $y_i=0$ encoding `no choice' that entails zero value,
and we use $Y \in \{0,1\}^{n' \times n}$
where $Y_{ij}=\one{y_i=j}$ to describe all choices in the market.
For the allocation mechanism $\alloc$,
we consider the \emph{random single round} rule,
wherein each item $j$ is given to a randomly-selected user of those that chose it,
which implies $a_{ij} \le Y_{ij} \, \forall i,j$.
This forms a strong basis for congestion: if several users choose some item $j$,
then only one of them receives it, while all others receive nothing.
We denote by $\abar_{ij}=\frac{1}{n_j} Y_{ij}$ 
the probability that $j$ is allocated to $i$,
where $n_j=\sum_i Y_{ij}$, and defining $\abar_{ij}=0$ when $n_j=0$.
Intuitively, for welfare to be high, we would like that:
(i) allocated items give high value to their users, and
(ii) many items are allocated. As we will see, this observation forms the basis of our approach.\looseness=-1

\fi


\if 0
\paragraph{Representation, perception, and bounded rationality.}
The unique aspect of our setup is that users make choices on the basis of partial information, \emph{over which the system has control.}
We model users as making decisions based on 
item \emph{representations} $z$, which are
\emph{truthful} but \emph{lossy}: 
$z$ must contain only information from $x$, but not necessarily all of it.
Concretely, $z$ reveals a subset of $k \le d$ features from $x$,
determined by a binary \emph{feature mask} $\mask \in \{0,1\}^d$ with $|\mask|=k$.
Users then make choices $y$ on account of \emph{perceived value}, $\pval$,
defined as value that can be inferred from observable features:
\begin{equation}
\label{eq:percieved_values_and_choices}
\pval_{ij} = \pval_i(z_j) = \pref_i^\top (x_j \odot \mask), \qquad
y_i = \argmax\nolimits_j \pval_i(z_j) - p_j
\end{equation}
where $\odot$ denotes element-wise multiplication,
and $y_i=0$ (`no choice') when no item has strictly positive (perceived) utility.\footnote{We make the technical assumption that the argmax is a unique item, rather than a set. This can be achieved e.g. by minimally perturbing item values.}
Users are therefore modeled as conservative boundedly-rational decision-makers,
whose perceptions of value derive from how items are represented, and in particular, by which features are shown.
For comparison, we denote rational choice as $y_i^* = \argmax_j v_i(x_j) - p_j$.\looseness=-1

\fi

\section{A Differentiable Proxy for Welfare} \label{sec:method}
We now turn to describing our approach for learning useful decongesting representations.

\extended{
The fundamental challenge in optimizing welfare in our setting is that
partial information can restrict the ability of prices to decongest.
We begin with the classic problem of \emph{optimal assignment},
in which setting prices is an effective means for decongesting a market.
We then make the connection to Eq.~\eqref{eq:learning_objective},
and discuss how representation can increase congestion by distorting the perception of value.
Building on this, we then proceed to present our approach for learning decongesting representations.\looseness=-1
}


\begin{figure}
    \centering
    \includegraphics[width=\linewidth]{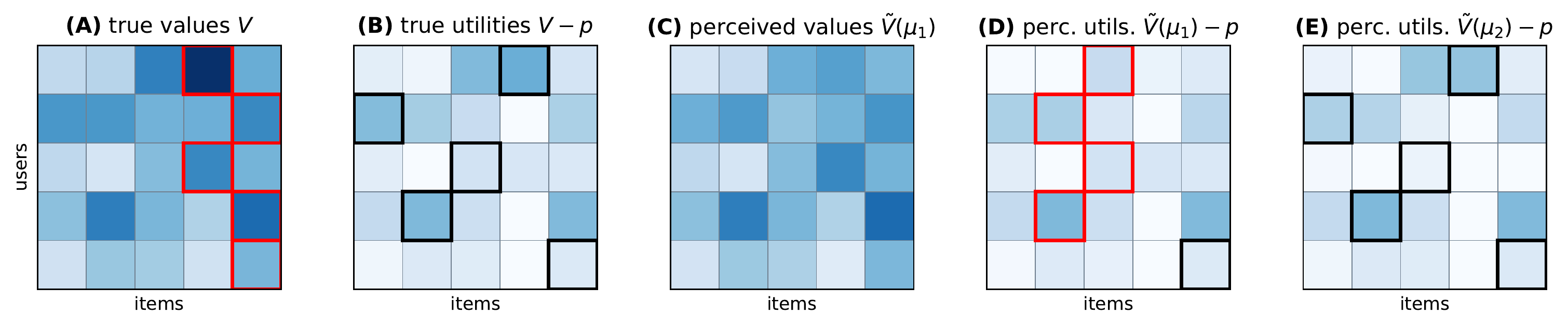}
    \caption{
    Values, prices, and choices.
    \textbf{(A)} A matrix $V$ of user-item values $v_{ij}$ in a market. 
    User choices naturally congest (red squares),
    but at full information can be decongested with prices (\textbf{(B)}; black squares).
    Partial information may distort values only mildly (\textbf{(C)}, vs. (A)),
    but still deem prices as ineffective \textbf{(D)}.
    Nonetheless, some representations are better than others---and those we seek \textbf{(E)}.\looseness=-1
    }
    \label{fig:illustration}
\end{figure}

\extended{
\subsection{The assignment problem} \label{sec:assignment}
\todo{rethink if we need this whole subsection, or at least if there are parts of it we can cut}
In the assignment problem, inputs consist of a single market of indivisible items. Users are assumed to be rational and have access to full item information,
and report their true preferences $\{v_{ij}\}_j$.
The goal is then to allocate items to users in a manner which optimizes welfare. A classic result by \citep{shapley1971assignment} shows that welfare-optimal allocations can be obtained as a solution to the following linear program:
\begin{equation}
\label{eq:assignment_LP}
a^* = \argmax_{a \in \R^{m \times n}} \sum_{ij} a_{ij} v_{ij}
\quad \text{s.t.} \quad
\sum_j a_{ij} \le 1 \,\, \forall i, \,\,\,\,
\sum_i a_{ij} \le 1 \,\, \forall j, \,\,\,\,
0 \le a \le 1
\end{equation}
where constrains enforce unit supply and unit demand, respectively.
Eq.~\eqref{eq:assignment_LP} can be solved efficiently,
and is guaranteed to have an integral optimal solution $a^* \in \{0,1\}^{m \times n}$ 
that fully decongests the market.
The dual of Eq.~\eqref{eq:assignment_LP} can be used to elicit \emph{competitive equilibrium} (CE) prices $p^*$:
\begin{equation}
\label{eq:assignment_LP_dual}
p^* = \argmin_{p \in \R^{m}} \min_{\pi \in \R^n} \sum_j p_j +  \sum_i \pi_i
\quad \text{s.t.} \quad
p_j+\pi_i \ge v_{ij} \,\, \forall i,j, \,\,\,\,\,\,
p \ge 0, \,\, \pi \ge 0
\end{equation}
where $p_j$ encode item prices and $\pi_i$ encode profit to users.
Equilibrium prices $p^*$ entail that rational users choose `as if' items were optimally allocated to them,
i.e., choices $y^*_i = \argmax_j v_{ij}-p^*_j$
correspond exactly to primal assignments $a^*$.\footnote{When $n \le m$, and assuming random tie-breaking.}
Thus, setting prices to $p^*$ ensures optimal welfare and full decongestion.
Figure~\ref{fig:illustration} (A,B) shows how choices naturally congest, and how prices can resolve this congestion.
Note equilibrium prices $p^*$ are typically not unique---a point we will return to later.\looseness=-1

\paragraph{The trouble with partial information.}
Once users do not observe full information,
guarantees for Eqs.~(\ref{eq:assignment_LP},\ref{eq:assignment_LP_dual}) may no longer hold.
Intuitively, this is because partial information `distorts' the perception of value; this can alter preferences, making prices ineffective.
Importantly, whereas prices are `anonymous' (i.e., are the same for all users),
the effects of representations are personalized,
and users may vary in their susceptibility to misinformation:
e.g., for users with small differences in item values $v_{ij}-\pval_{ij}$,
even mild distortion can cause preferences to dramatically shift (i.e., flip the argmax).\looseness=-1

But not all representations are created equal,
and whereas some may shift choices towards congestion,
others may better preserve the original structure of $v$ and
corresponding preferences.
This enables to utilize the decongesting power of equilibrium prices, and improve welfare through decongestion.
Our goal will be to find such representations by solving Eq.~\eqref{eq:learning_objective}.
These ideas are illustrated in Fig.~\ref{fig:illustration} (C-E).\looseness=-1

\paragraph{From assignment to masking.}
Observe that our objective in Eq.~\eqref{eq:learning_objective} conforms to the assignment problem in Eq.~\eqref{eq:assignment_LP},
but with several important differences:
(i) rather than a single market, we aim to solve simultaneously for a collection of markets;
(ii) solutions for the different markets are tied through our choice of mask $\mask$ (or policy $\pi$);
(iii) masking is the system's only means for indirectly affecting user choices
(i.e., we cannot set prices or directly control allocations);
(iv) users operate on perceived preferences $\pval$ (rather than true preference $v$);
and (v) we do not observe user's true preferences---only their revealed preferences, namely past choices $y^{(\ell)}$ made in markets $M^{(\ell)} \in \smplst$.
We next describe how our approach addresses these challenges,
using Eq.~\eqref{eq:assignment_LP} as our starting point.
}

\paragraph{Welfare decomposition.}
The main difficulty in optimizing Eq.~\eqref{eq:learning_objective}
is that we do not have access to true valuations.
To remove the reliance on $v$, our first step is to decompose welfare into two terms.
Let $\welf_M=\sum_{ij} \abar_{ij} v_{ij}$ be the expected welfare for a single market $M$, where $\abar_{ij}=\frac{1}{n_j} y_{ij}$ 
denote expected allocations
with $n_j=\sum_i y_{ij}$,
and defining $\abar_{ij}=0$ when $n_j=0$.
We can rewrite $\welf_M$ as:
\begin{equation}
\label{eq:welfare_decomposition}
\welf_M =
\sum_{ij}\tfrac{1}{n_j}y_{ij}v_{ij} =
\sum_j(1-1+\tfrac{1}{n_j})\sum_i y_{ij} v_{ij} =
\underbrace{\sum\nolimits_{ij} y_{ij} v_{ij}}_{\text{(I)}}
 + \underbrace{\sum\nolimits_j(\tfrac{1}{n_j}-1)\sum\nolimits_i y_{ij}v_{ij}}_{\text{(II)}}    
\end{equation}
In Eq.~\eqref{eq:welfare_decomposition},
term (I) encodes the value users would have gotten from their choices---had there been no supply constraints.
Term (II) then corrects for this, and appropriately penalizes excessive  allocations.\footnote{Note that no penalty is incurred if an item is chosen by at most one user, since either $\frac{1}{n_j}-1=0$ or $y_{ij}=0$.}

\paragraph{Proxy welfare.}
Absent the $v_{ij}$,
a natural next step is to replace Eq.~\eqref{eq:welfare_decomposition} with a tractable lower bound proxy.
For term (I), note that if $y_{ij}=1$ then $\pval_{ij}>p_j$ (Eq.~\eqref{eq:percieved_values_and_choices}),
and since $\pref,x \ge 0$, it also holds that $v_{ij}\ge \pval_{ij}$
(since masking can only decrease perceived value).
Hence, we can replace $v_{ij}$ with $p_j$.
For term (II), since $\frac{1}{n_j}-1\le 0$,
and since we assume $v\le 1$,
using $n_j = \sum_i y_{ij}$
we can write:\looseness=-1
\begin{equation}
\label{eq:welfare_lower_bound}
\welfprxy_M =
\underbrace{\sum\nolimits_{ij} y_{ij} p_j}_{=\,\selection(y,p)}
-\underbrace{\sum\nolimits_j \max\{0,n_j-1\}}_{=\,\decongestion(y)}
\le \welf_M
\end{equation}
which removes the explicit dependence on values,
and relies only on choices.
The two terms in $\welfprxy_M$ can now be interpreted as:
(I) \emph{selection}, which expresses the total market value of users' choices,
as encoded by prices;
and (II) \emph{decongestion}, which penalizes excess demand per item.
Notice that $n-\decongestion(y)$ is simply the number of allocated items, $|\alloc(y)|$.
To extend beyond unit-supply, we can replace $n_j-1$ with a more general $n_j-c_j$ when there are $c_j$ copies of item $j$.

\begin{figure}[t!]
    \centering
    \includegraphics[width=0.98\linewidth]{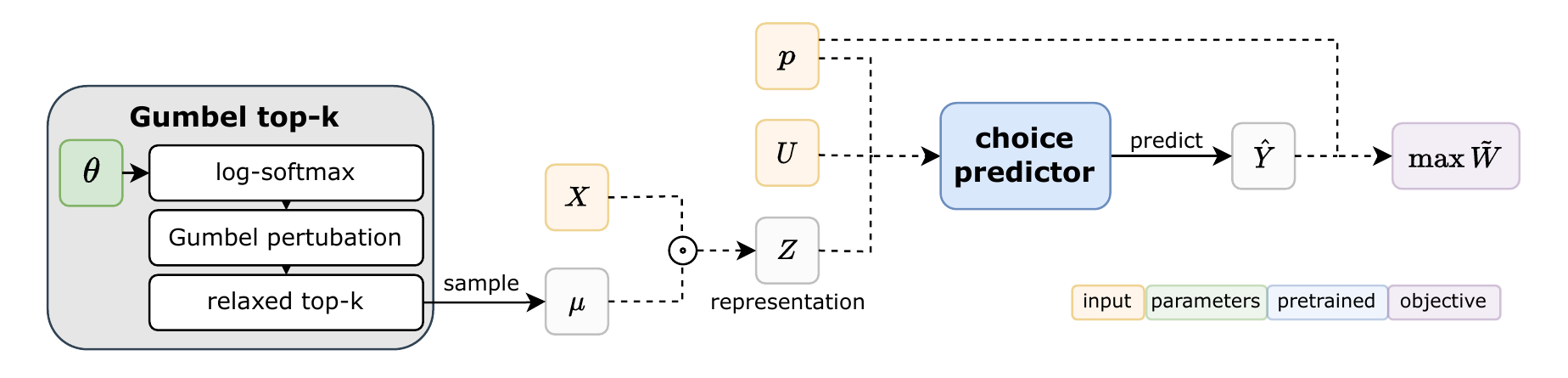} 
    \caption{
    A schematic illustration of our proposed differentiable learning framework.
    \extended{Illustration of our proposed method. $X$ is items' features matrix, $U$ is users features matrix. $\mu$ is a soft mask sampled from the categorical distribution $softmax(\Theta)$, where $\Theta$ are the trainable parameters of the model. $X$ is then masked (element-wise multiplied; $\odot $) by $\mu$ to obtain $Z$. The choice prediction model is trained separately on choices $y$ made based on $\mu \sim \pi_0$ given for the train set. The objective function is to maximize our proxy welfare $\tilde{W}$. 
    }
    }
    \label{fig:architecture}
\end{figure}



Eq.~\eqref{eq:welfare_lower_bound} still depends on values implicitly through choices $y$.
Our next step is to replace these with 
\emph{predicted choices}, $\yhat_i(\mask) = f(X,p;u_i,\mask)$,
where $f$ is a predictive model pretrained on choice data in $\smplst$:\looseness=-1
\begin{equation}
\label{eq:prediction_objective}
\fhat = \argmin_{f \in F} \sum\nolimits_{(M,y) \in \smplst} \sum\nolimits_{i \in [n]}
\Loss(y_i, f(X,p;u_i,\mask))
\end{equation}
for some model class $F$ and loss function $\Loss$ (e.g., cross-entropy),
which decomposes over users. 
%
%
%
%
Plugging the learned $f$ into
Eq.~\eqref{eq:welfare_lower_bound}
and averaging over markets in $\smplst$
obtains our empirical proxy objective:\looseness=-1
\begin{equation}
\label{eq:proxy_mask}
\welfprxy_\smplst(\mask) = 
\frac{1}{N}
\sum\nolimits_{M \in S}
\Big[
\sum\nolimits_{ij} \yhat_{ij}(\mask) p_j 
- \sum\nolimits_j \max \! \big\{0,\nhat_j(\mask) - 1 \big\} 
\Big]
\end{equation}
where $\nhat_j(\mask)=\sum_i \yhat_{ij}(\mask)$.
We interpret this as follows:
In principle, 
Eq.~\eqref{eq:proxy_mask}
seeks representations $\mask$ that entail low congestion by optimizing the $\decongestion$ term;
however, since there can be many decongesting solutions,
the additional $\selection$ term reguralizes learning towards good solutions.\looseness=-1

\paragraph{Differentiable proxy welfare.}
One challenge
in optimizing Eq.~\eqref{eq:proxy_mask}
is that both predicted choices $\yhat$ and masks $\mask$ are discrete objects.
To enable end-to-end learning, we replace these with differentiable surrogates.
For $\yhat$, we substitute `hard' argmax predictions
with `soft' predictions $\ybar_i(\mask)$ using softmax.
For masks, instead of optimizing over individual (discrete) masks,
we propose to learn masking \emph{distributions}, 
$\pi_\theta$,
that are differentiable in their parameters $\theta$.
A natural choice in this case is the multinomial distribution,
where $\theta \in \R^d$ assigns weight $\theta_r$ to each feature $r \in [d]$,
and masks are constructed by drawing $k$ features sequentially without replacement
in proportion to (re)normalized weights, $r \sim \softmax_\masktemp(\theta)$,
where $\masktemp$ is a temperature hyper-parameter.
Our final differentiable proxy objective is:\looseness=-1
\begin{equation}
\label{eq:proxy_differentiable}
\thetahat = \argmax\nolimits_{\theta \in \R^d} \welfprxy_\smplst(\pi_\theta),
\quad\,\,\, \text{where} \,\,\,\,\,\,
\welfprxy_\smplst(\pi_\theta) = \expect{\mask \sim \pi_\theta}{\welfprxy_\smplst(\mask)}
\end{equation}
%
To solve Eq.~\eqref{eq:proxy_differentiable}, we make use of
the \emph{Gumbel top-$k$ trick} \citep{vieira2014gumbel,jang2017categorical}:
by reparametrizing $\pi_\theta$, variation in masks due to $\theta$
is replaced with random perturbations $\varepsilon$;
this separates $\theta$ from the sampling process, which then permits to pass gradients effectively.
We then use the method from \citep{xie2019subsets} to smooth the selection of the top-$k$ elements.
For the forward step, the expectation in Eq.~\eqref{eq:proxy_differentiable} is approximated by computing an average over samples $\mask \sim \pi_\theta$. 
Once $\thetahat$ has been learned, at test time we can either sample from $\pi_\thetahat$ as a masking policy, or commit to $\mask_\thetahat$, defined to include the $k$ largest entries in $\thetahat$.
See Figure~\ref{fig:architecture} for an illustration of the different components of our proposed framework.


\paragraph{Practical considerations.}
One artifact of 
transitioning from Eq.~\eqref{eq:welfare_decomposition} to Eq.~\eqref{eq:welfare_lower_bound}
is that the different terms may now 
become unbalanced in terms of scale.
As a remedy, we propose to reweigh them as
$(1-\lambda) \cdotp \selection + \lambda \cdotp \decongestion$,
where $\lambda$ is a hyper-parameter that can be tuned via experimentation;
practically, our empirical analysis suggests that learning is fairly robust to the choice of $\lambda$.
In addition, we have also found it useful to add a penalty on non-choices,
i.e., $-\sum_i \one{\yhat_{i}=0}$, also weighted by $\lambda$.
This can be interpreted as also reducing congestion on the `no-choice' item, 
and as accentuating the reward of choosing real items (since no choice gives zero utility; see Appx.~\ref{appendix: no-choice penalty}).




\extended{
\todo{if space, plug in architecture graphics}
}

\if 0
\subsection{When is proxy welfare a good proxy?}
\blue{When choices are made under full information (Sec.~\ref{sec:assignment}),
optimal welfare and decongestion naturally coincide.
But under partial information, we can ask...

...

To optimize welfare from revealed preferences,
our approach substitutes unknown components in the objective with observable alternatives.
Restricting the discussion to decongested solutions 
(i.e., for sufficiently large $\lambda$), we can ask---when 
users to chose items with high value?
Under full information, users choose $y^*_i$,
which maximizes $v_{ij}-p_j$.

In replacing $\yhat_{ij} p_j$ (Eq.~\eqref{eq:proxy_mask})
with the ideal $y^*_{ij} v_{ij}$ (Eq.~\eqref{eq:welfare_decomposition})?

will $\yhat_{ij} p_j$ in Eq.~\eqref{eq:proxy_mask}
be a useful proxy for $y^*_{ij} v_{ij}$ in Eq.~\eqref{eq:welfare_decomposition}?
In general, we would like users to chose items with high value.
This occurs when perceived utilites

In general, decongestion helps since more allocated items means
more items with $v_{ij} > 0$ contribute positively to welfare.
But not all decongested solutions are good,
and ideally we would like users to chose items with high value.
This occurs when perceived utilites

When choices are made under full information (Sec.~\ref{sec:assignment}),
optimal welfare and decongestion naturally coincide.
But under partial information, this relation no longer holds.
Clearly, not all decongested solutions are useful,

decongesting is good because more items are allocated, and each new $v_{ij}>0$ contributes positively to welfare
but not all decongested solutions are good - some are better than others
for simplicity, assume $\lambda \approx 1$, so solution is always decongested
how good is selection?

and noting that clearly not all decongested solutions are useful,

if p is informative of v (for y=1)
and vtilde shares argmax with v
and predictions are accurate

in ideal $k=d$ (full info), if prices are optimal, then decongest, and welfare is max
but $k<d$, so first we need $\pval \approx v$ so that users choose high-value items
but choose=argmax, so we need only $\pval \approx v$ to (partially) agree on argmax - but for all users
weaker - want argmax $\pval$ to have high $v$ (not necessarily argmax of $v$)
the goal of masking is therefor to make users choose by $\pval$ `as if' it were $v$
easier when there is variation in prefs/values, because missing the true argmax isn't too bad

but we dont know argmax $v$ - instead we have p
if p is optimal (for full info), then $p_j$ is informative of $y^*_i=j$
eg, for most-wanted item $j^*$, price set high to separate $i$ with largest $v_{ij^*}$ from others - making it worthwhile only for $i$
prices are key - channel information on demand external to data
ie, beyond choice, which conveys only argmax, not values

reps as personalized prices
}

\todo{\\
explain when and why we expect this bound to be tight, and when a good proxy: \\
- under full info, $p_j$ informative of $v_{ij}$ for user $i$ that chose item $j$ \\
- when decongested, can expect $p_j \sim y_{ij}$ for $j$  s.t. $y_{ij}=1$ \\
- if additionally $\pval_{ij} \approx v_{ij}$ in a way that maintains the decongesting power of prices, then choices will work well \\
- but will decongest only if "unscaled" congestion is more significant than values in decongestion term (?!) \\
- thus, both terms work in unison \\
- more generally, will work if there is sufficient variation in prefs (hetero [=diagonal], vs homo [=column])
}
\fi
\section{Theoretical Analysis} \label{sec:theory}

The core of our approach relies on minimizing congestion as a proxy to maximizing welfare. It is therefore natural to ask: when does decongestion improve welfare?  Focusing on an individual market, in this section we give simple conditions under which allocating more items guarantees an improvement in welfare.
\niradd{Here we consider $p$ to be competitive-equilibrium (CE) prices of the market under full information,
meaning that under full information, every
item with a strictly positive price is sold
and every user can be allocated an item in their demand set.
Proofs are deferred to Appendix~\ref{apx:analysis}.}

We start from the strongest type of relation between congestion and welfare, in which allocating more items is always better, irrespective of which items and to which users.
\begin{definition} \label{def:expost} A market with valuations $v_{ij}$ is \emph{\textbf{congestion monotone}} 
if for all $s \in [m]$,
any allocation of $s$ items gives (weakly) better welfare than any allocation of $s' < s$  items.
\end{definition}
Our first result shows that monotonicity holds in economies in which users' valuations for the items are close, as expressed in the following sufficient condition.
\begin{proposition}
\label{prop:ex_post_monotone}
In a market with $n$ users, $m$ items, and valuations $v_{ij}$, denote $\vmin = \min_{ij} v_{ij}$ and $\vmax = \max_{ij} v_{ij}$.
If $\frac{\vmax - \vmin}{\vmin} \le \frac{1}{m-1}$, then the market is congestion monotone. 
\end{proposition}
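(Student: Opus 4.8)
The plan is to reduce congestion monotonicity to a single scalar inequality by bounding the welfare of any allocation using only the number of items it allocates, together with the extreme values $\vmin$ and $\vmax$. Fix any target size $s \in [m]$ and any $s' < s$. Since every allocated pair contributes a value in $[\vmin, \vmax]$, any allocation of $s$ items has welfare at least $s \cdot \vmin$, while any allocation of $s'$ items has welfare at most $s' \cdot \vmax$. Consequently, for an arbitrary $s$-allocation $a$ and an arbitrary $s'$-allocation $a'$, we have the chain of inequalities $\welf(a) \ge s\,\vmin$ and $\welf(a') \le s'\,\vmax$, so it suffices to establish the uniform inequality $s \cdot \vmin \ge s' \cdot \vmax$ for all $0 \le s' < s \le m$. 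This immediately yields $\welf(a) \ge \welf(a')$, which is exactly what Definition~\ref{def:expost} requires.

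Second, I would observe that the right-hand side $s' \cdot \vmax$ is increasing in $s'$, so among all $s' < s$ the binding case is $s' = s-1$; and the resulting requirement $s \cdot \vmin \ge (s-1)\,\vmax$ is hardest when $s$ is largest, i.e.\ when $s = m$. Thus the whole family of inequalities collapses to the single condition $m \cdot \vmin \ge (m-1)\,\vmax$, with the intermediate sizes $s < m$ following a fortiori because $\tfrac{1}{s-1} \ge \tfrac{1}{m-1}$.

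Third, a short rearrangement turns this into the hypothesis. Writing $\vmax = \vmin + (\vmax - \vmin)$ and expanding, the inequality $m\,\vmin \ge (m-1)\,\vmax$ is equivalent to $\vmin \ge (m-1)(\vmax - \vmin)$, and hence (using $\vmin > 0$, which is implicit since the stated ratio is well defined) to $\frac{\vmax-\vmin}{\vmin} \le \frac{1}{m-1}$. This is precisely the assumed bound, which closes the argument.

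Since this is essentially a worst-case counting bound, I do not anticipate a serious obstacle. The only point worth double-checking is that I am combining the bounds in the correct direction, namely a \emph{lower} bound $s\,\vmin$ on the larger allocation against an \emph{upper} bound $s'\,\vmax$ on the smaller one; because neither bound needs to be tight, no attainability claim is required, and the monotone reduction to the case $s'=s-1,\ s=m$ is exactly what makes the single hypothesis sufficient. The edge cases are immediate: an empty ($s'=0$) allocation has welfare $0 \le s\,\vmin$ since $\vmin \ge 0$, and $s=1$ has no nontrivial $s'$ to compare against.
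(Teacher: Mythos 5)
Your proof is correct and is essentially the paper's argument: the paper writes $v_{ij}=\vmin+\delta_{ij}$ and drops the nonnegative $\delta$-sum of the larger allocation while bounding the smaller one's $\delta$-sum by $(s-1)(\vmax-\vmin)$, which is exactly your lower bound $s\,\vmin$ versus upper bound $(s-1)\,\vmax$, reduced to the worst case $s=m$. Your explicit justification that the adjacent case $s'=s-1$ and the size $s=m$ are binding is a slightly more careful write-up of the same reduction.
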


Such monotonicity 
provides us with very strong guarantees: it will sustain under any user behavior, allocation rule, and randomized outcome. However, this property
is demanding in that it considers \emph{all} allocations---whereas some allocations may not be admissible,
i.e., result from users choosing on the basis of some representation.
We now proceed to pursue this case.

\if 0
{\color{red}
Let $p=(p_1,\ldots,p_m)$ denote  item prices,  $z$ denote
 an allocation, and $v_i$ agent $i$'s true valuation. Let $\tilde{v}_i$ denote
agent $i$'s perceived valuation 
given mask $\mu$, 
and  $v^H_{ij}=v_{ij}-\tilde{v}_{ij}$
 denote agent $i$'s {\em hidden valuation}.

\begin{definition}
    $(z,p)$ is a competitive equilibrium if (1) $z_i\in\argmax[v_{ij}-p_{j},0]$ for all $i$, and (2) any item with $p_j>0$
 is allocated. 
 \end{definition}

That is, the allocation $z$ should (1) be a best response for each agent,  and (2) maximize revenue. The following is well known, the proof is included in the Appendix for completeness.
\begin{theorem}
A CE is welfare optimal.
\end{theorem}

Let {\em welfare} $W(z)=\sum_i \sum_j z_{ij} v_{ij}$. Let   $G_z$ and  $N_z$ 
denote the items and agents, respectively, that are allocated in allocation $z$.
Say that $z$ is  {\em restricted optimal} if 
and only if  $z$  is  welfare optimal at true valuations $v$ 
in the  economy $E=(G_z,N_z)$;
 i.e., the economy restricted to the items and agents that are allocated.  
Say that an 
allocation $q$ {\em extends} $z$ if $N_q\supset N_z$ and $G_q\supset G_z$ (i.e., $q$ allocates a strict superset of  items and agents).
\begin{lemma}
Given two   allocations, $z$ and $q$, where  $q$ extends $z$
and $q$ is restricted optimal, then 
 $W(q)\geq W(z)$, 
with $W(q)>W(z)$ if $v_{ij}>0$ for all $i$, all $j$.  \label{lem:1} 
\end{lemma}
}
\fi


\if 0
{\color{red}
Let $p=(p_1,\ldots,p_m)$ denote  item prices,  $z$ denote
 an allocation, and $v_i$ agent $i$'s true valuation. Let $\tilde{v}_i$ denote
agent $i$'s perceived valuation 
given mask $\mu$, 
and  $v^H_{ij}=v_{ij}-\tilde{v}_{ij}$
 denote agent $i$'s {\em hidden valuation}.

\begin{definition}
    $(z,p)$ is a competitive equilibrium if (1) $z_i\in\argmax[v_{ij}-p_{j},0]$ for all $i$, and (2) any item with $p_j>0$
 is allocated. 
 \end{definition}

That is, the allocation $z$ should (1) be a best response for each agent,  and (2) maximize revenue. The following is well known, the proof is included in the Appendix for completeness.
\begin{theorem}
A CE is welfare optimal.
\end{theorem}

Let {\em welfare} $W(z)=\sum_i \sum_j z_{ij} v_{ij}$. Let   $G_z$ and  $N_z$ 
denote the items and agents, respectively, that are allocated in allocation $z$.
Say that $z$ is  {\em restricted optimal} if 
and only if  $z$  is  welfare optimal at true valuations $v$ 
in the  economy $E=(G_z,N_z)$;
 i.e., the economy restricted to the items and agents that are allocated.  
Say that an 
allocation $q$ {\em extends} $z$ if $N_q\supset N_z$ and $G_q\supset G_z$ (i.e., $q$ allocates a strict superset of  items and agents).
\begin{lemma}
Given two   allocations, $z$ and $q$, where  $q$ extends $z$
and $q$ is restricted optimal, then 
 $W(q)\geq W(z)$, 
with $W(q)>W(z)$ if $v_{ij}>0$ for all $i$, all $j$.  \label{lem:1} 
\end{lemma}
}
\fi

\begin{definition}[Admissible allocation]
An allocation $a$ is  {\em \textbf{admissible}}, denoted
$\tilde{a}$, if agents are only assigned 
their best-response items 
defined with   respect to perceived 
values $\tilde{v}$ at prices $p$. 
\end{definition}


\begin{definition}[Restricted optimality]
An allocation $a$ is  {\em \textbf{restricted optimal}} if $a$  is  welfare-optimal at true valuations $v$ in the  economy $E=(G_a,N_a)$, where $G_a$ and  $N_a$ denote the items and agents, respectively, that are allocated; 
i.e., the economy restricted to the items and agents that are allocated.  
\end{definition}

This  property,
which in effect defines optimality on a restricted economy,
can be established through a set of sufficient conditions by reasoning with suitable notions of competitive equilibrium that arise when working with admissible allocations.
To  model the way we handle congestion, let $A$ denote a {\em randomized allocation}, 
with a product structure defined as follows.
Let $G(A)$ denote the set of items allocated.\footnote{As explained in Section \ref{sec:setup}, throughout the paper we consider unique best responses for the users. } 
The product structure requires that 
for each item $j\in G(A)$, some set $N_j$ of agents compete
 for $j$ 
 with $N_j\cap N_{j'}=\emptyset$, for all $j\neq j'$.
Each agent  $i\in N_j$ is allocated item $j$ uniformly at random,
 so that $\Pr_A[i]=1/|N_j|$ is the probability that $i$ is allocated.
 %
%
%
We say that a  randomized allocation $A$ is admissible  if it is a distribution over admissible allocations, and restricted optimal if it is a distribution over   restricted optimal allocations.
%
 Define $W(A)$ as the expected total welfare at true values, considering the distribution over allocations. 
We say that a randomized allocation $B$ 
{\em  extends} $A$ if $G(B)\supset G(A)$  
and $\Pr_B[i]\geq \Pr_A[i]$
for  all agents $i\in [n]$ 
(i.e., no agent faces more congestion).
%
%
\begin{theorem} 
Given two randomized allocations, $A$ and $B$,  where $B$ extends $A$
 and $B$ is restricted optimal,
then $W(B)\geq W(A)$, with $W(B)>W(A)$ if $v_{ij}>0$ for all $i$, $j$.
 \label{thm:welfare}
\end{theorem}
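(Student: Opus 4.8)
The plan is to first establish the deterministic backbone of the claim and then lift it to randomized allocations through the product structure. For the deterministic core, suppose a deterministic allocation $q$ allocates a superset of both the items and the agents that $z$ allocates, and that $q$ is restricted optimal. The key observation is that $z$, when viewed inside the restricted economy $E=(G_q,N_q)$ of $q$, is itself a feasible (partial) allocation: it uses only items of $G_z\subseteq G_q$ and agents of $N_z\subseteq N_q$, leaving the remaining items and agents unmatched. Since $q$ is welfare-optimal on $E$ at the true valuations and all valuations are nonnegative, $W(q)\ge W(z)$ follows immediately. For strictness under $v_{ij}>0$, I would use that $q$ allocates at least one more item and one more agent than $z$; extending $z$ by matching one such free item--agent pair $(i^*,j^*)$ yields a feasible allocation on $E$ of welfare $W(z)+v_{i^*j^*}>W(z)$, and optimality of $q$ gives $W(q)>W(z)$.

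To obtain the randomized statement, I would write both welfares through the product structure, $W(A)=\sum_{j\in G(A)}\frac{1}{|N_j|}\sum_{i\in N_j}v_{ij}$ and likewise for $B$, and exploit that restricted optimality of $B$ forces every realization to be a maximum-weight matching of $G(B)$ onto its realized agents. Optimality of each realized matching yields pairwise exchange inequalities $v_{ij}+v_{i'j'}\ge v_{ij'}+v_{i'j}$ for every $i\in N_j$, $i'\in N_{j'}$ and items $j,j'\in G(B)$, which encode that agents competing for an item relatively prefer it. I would then combine these inequalities with the two hypotheses of extension, namely $G(B)\supseteq G(A)$ and the marginal domination $\Pr_B[i]\ge\Pr_A[i]$, to show that the expected welfare of $B$ dominates that of $A$. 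A clean way to organize this is induction on $|G(B)|-|G(A)|$, peeling off one extra allocated item at a time so that each step compares two randomized allocations differing by a single item, where the exchange inequalities and the probability bound apply directly; nonnegativity of $v$ then ensures the freshly allocated item contributes nonnegative (and, when all $v_{ij}>0$, strictly positive) expected value.

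The main obstacle is precisely the gap between the deterministic lemma and the randomized setting: extension is assumed only through the marginal allocation probabilities $\Pr_B[i]\ge\Pr_A[i]$, not through pointwise inclusion of the realized agent sets, so one cannot simply couple $A$ and $B$ so that the realized agents of $A$ are contained in those of $B$ realization-by-realization and then invoke the deterministic argument. The crux is therefore to convert marginal domination into a valid comparison of expected welfare. I expect to do this by exhibiting a transportation/coupling certificate that matches the probability mass $A$ places on each agent--item pair to mass $B$ places on weakly better pairs, where ``weakly better'' is made precise by the restricted-optimality exchange inequalities together with the fact that the extra items of $G(B)\setminus G(A)$ can only add nonnegative value. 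Making this certificate respect simultaneously the disjointness of the competing sets $N_j$, the uniform within-set allocation, and the global marginal constraints is the delicate part, and is exactly where the product structure of the randomized allocation is essential.
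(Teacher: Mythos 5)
Your deterministic backbone is exactly the paper's Lemma~\ref{lem:1}: the smaller allocation is feasible in the restricted economy of the larger one, so restricted optimality of the latter gives the weak inequality, and strictness under $v_{ij}>0$ follows by augmenting with an unmatched item--agent pair. That part is correct and coincides with the paper's argument.

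The randomized lifting is where your proposal has a genuine hole, and you flag it yourself: the ``transportation/coupling certificate'' that would turn the marginal condition $\Pr\nolimits_B[i]\geq\Pr\nolimits_A[i]$ into a comparison of expected welfares is exactly the step you do not construct, and it is the entire content of the theorem beyond the lemma. The paper's resolution needs neither exchange inequalities nor induction on $|G(B)|-|G(A)|$; restricted optimality is consumed only through the deterministic lemma (each $b$ in the support of $B$ is restricted optimal, hence $W(b)\geq W(a)$ for every $b$ extending every $a$). The missing idea is that the product structure lets you compute, in closed form, the total $B$-mass on extensions of any fixed realization $a$ of $A$:
\[
\sum_{b\,\mathrm{extends}\,a}\Pr\nolimits_B[b]
\;=\;\prod_{i\in a}\Pr\nolimits_B[i]\cdot\!\!\sum_{b\,\mathrm{extends}\,a}\ \prod_{i\in b,\,i\notin a}\Pr\nolimits_B[i]
\;=\;\prod_{i\in a}\Pr\nolimits_B[i]
\;\geq\;\prod_{i\in a}\Pr\nolimits_A[i]\;=\;\Pr\nolimits_A[a],
\]
since the marginalization over how $B$ allocates the items outside $a$ sums to one. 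Combined with $W(b)\geq W(a)$ for every such $b$, and summing over $a$ in the support of $A$, this gives $W(B)\geq W(A)$, strict when all $v_{ij}>0$. In other words, your correct observation that one cannot couple $A$ and $B$ realization-by-realization is circumvented not by a transportation plan over agent--item pairs but by lower-bounding the probability of the event that $B$'s realization extends $a$ directly; the pairwise exchange inequalities you propose are true but are not what the argument consumes, and without the product-structure computation your central step remains unproved.
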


The main idea behind this result is that,
together with the  extension property,
and in a way that carefully handles randomization,
restricted optimality provides an ordering on welfare.
\if 0



\fi

We  now seek conditions under which an admissible allocation is restricted optimal:
If these conditions hold for any admissible allocation 
in the support of a randomized allocation $B$,  then
by Thm.~\ref{thm:welfare}, 
$B$ improves welfare relative to all randomized allocations which it extends.
%
We parametrize these conditions by the margin of an admissible allocation,
which is defined as follows. 
\begin{definition}[Margin]
Let $\tilde{a}$ be an admissible allocation with allocated items and agents $\tilde{G}$ and $\tilde{N}$, resp.
Then the \emph{\textbf{margin}} of $\tilde{a}$
is the maximal $\Delta{\geq}0$ s.t.  
$\tilde{v}_{i\tilde{a}_i}-p_{\tilde{a}_i} \geq \max_{j\neq \tilde{a}_i, j\in \tilde{G}}[\tilde{v}_{ij}-p_j]+\Delta, \,\, \forall i\in \tilde{N}$.\looseness=-1
 \end{definition}

\extended{
\todo{talk about reps `distorting' preception of value and how this can flip choices}
}


\if 0
A first set of conditions follow from reasoning about the following consistency
property, that needs to hold between perceived and true valuations.
\begin{definition}[Pointing consistency.]
An admissible allocation $\tilde{z}$ satisfies  {\em pointing consistency} if,
for every agent $i\in \tilde{N}$, the allocated item $\tilde{z}_i$  is in the
 demand set of $i$ at true valuations $v_{i}$.
\end{definition}

In other words, agent $i$ continues to prefer item $\tilde{z}_i$ 
at prices $p$ when moving from perceived valuation $\tilde{v}_i$ to true valuation $v_i$.
The following is immediate.
\begin{lemma} \label{lemma:pointingconsistency}
Admissible allocation $\tilde{z}$ is restricted optimal if the  pointing consistency
 condition holds. 
\end{lemma}

\begin{lemma}
  Admissible allocation $\tilde{z}$ with margin $\Delta$
 satisfies pointing consistency (and therefore by Lemma \ref{lemma:pointingconsistency} is restricted optimal), 
when $v^H_{i\tilde{z}_i}\geq v^H_{ij}-\Delta$, for all $j\in \tilde{G}$, all $i\in \tilde{N}$.
\label{lem:2} 
\end{lemma}

Considering a matrix with agents as rows and items as columns,
the
 property in Lemma~\ref{lem:2} is one of
 ``row-dominance"
for $\Delta=0$, 
 such that the value of
 an agent for its allocated item is weakly larger than that of
 every other item.  For this property, it suffices
that there
is  little variation in the hidden value for 
any items, which is in turn provided by 
the following properties. 
In stating the  properties,  
 we assume that  agent preferences $\beta\in[0,1]^m$
 and item features, $x\in [0,1]^m$. 
 \fi

Denote agent $i$'s {\em hidden valuation} given mask $\mu$ as $v^H_{ij}=v_{ij}-\tilde{v}_{ij}$.\footnote{Here we assume w.l.o.g. (given that $0 \le v \le 1$)
 that 
 $\beta\in[0,1]^m$ and 
 $x\in [0,1]^m$.}
Each of the following conditions is sufficient for restricted optimality and thus the improving welfare claim of Theorem~\ref{thm:welfare}:
 
\begin{itemize}[leftmargin=1em, topsep=0pt]
\item 
\textbf{Condition 1: Item heterogeneity is captured in revealed features.}
A first property, sufficient for restricted optimality,
is that items $\tilde{G}$ allocated in admissible allocation $\tilde{a}$ have similar hidden features, with
$|(1-\mu)\odot (x_j-x_{j'})|_1\leq \Delta,\quad \forall j,j'\in \tilde{G}$, where $\Delta$ is the margin of the admissible allocation, $\mu$ is the mask, and $x_j$ and $x_{j'}$ the features of allocated items $j$ and $j'$, respectively.

\item
\textbf{Condition 2: Agent indifference to  hidden features.}
A second property is that  the agents $\tilde{N}$ allocated in admissible allocation $\tilde{a}$ have relatively low preference intensity for hidden features, with
$|(1-\mu)\odot \beta_i|_1\leq \Delta, \quad \forall i\in \tilde{N}$.

\item
\textbf{Condition 3: Top-item value consistency and low price variation.}
A third property relies on the item that is most preferred to
an agent  considering revealed features 
also being, approximately, the most preferred  considering
hidden features. 
In particular,
we require  
(1) {\em top-item  value consistency}, so that 
if item  $j$ satisfies $\tilde{v}_{ij}\geq \max_{j'\in \tilde{G}}\tilde{v}_{ij'}$, $\forall i \in \tilde{N}$
(i.e., it is top for $i$ considering revealed features), 
then $v^H_{ij}+\Delta \geq \max_{j'\in \tilde{G}}v^H_{ij'}$ 
(i.e., it is approximately top for $i$ considering hidden features);
and (2) {\em small price variation}, so that     $|p_j-p_{j'}|\leq \Delta$, for all items $j, j' \in \tilde{G}$.

\item
\textbf{Condition 4: Items have small hidden features.}
A fourth property that suffices for restricted optimality is that items  have small hidden features, with $|(1-\mu)\odot x_j|_1\leq \Delta,\quad \forall j\in \tilde{G}$.

\item
\textbf{Condition 5: Agent preference heterogeneity is captured in revealed features.}
A fifth property is that the agents $\tilde{N}$ allocated in addmisible allocation $\tilde{a}$ have similar preferences for hidden features, with 
$|(1-\mu)\odot(\beta_i-\beta_{i'})|_1\leq \Delta, \quad \forall i, i'\in \tilde{N}$.
\end{itemize}

\extended{
\todo{
plug in perceived prices
}
}


\if 0
\paragraph{Condition 1: Item heterogeneity is captured in revealed features.}
A first property, sufficient for restricted optimality,
 is that   items $\tilde{G}$ allocated in admissible allocation $\tilde{a}$
 have similar hidden features, with
$|(1-\mu)\odot (x_j-x_{j'})|_1\leq \Delta,\quad \forall j,j'\in \tilde{G},$
%
where $\Delta$ is the margin of the admissible allocation, $\mu$ is the mask, 
and $x_j$ and $x_{j'}$ the features of allocated items $j$ and $j'$, respectively.

\paragraph{Condition 2: Agent indifference to  hidden features.}
A second property is that  the agents $\tilde{N}$ allocated in admissible allocation $\tilde{a}$
have relatively low preference intensity for hidden features,
with 
$|(1-\mu)\odot \beta_i|_1\leq \Delta, \quad \forall i\in \tilde{N}.$
%
%

\paragraph{Condition 3: Top-item value consistency and low price variation.}
A third property relies on the item that is most preferred to
an agent  considering revealed features 
also being, approximately, the most preferred  considering
hidden features. 
In particular, and considering an admissible allocation with margin $\Delta$,
we require  
 (1) {\em top-item  value consistency}, so that 
if item  $j$ satisfies $\tilde{v}_{ij}\geq \max_{j'\in \tilde{G}}\tilde{v}_{ij'}$, $\forall i \in \tilde{N}$
(i.e., it is top for $i$ considering revealed features), 
then $v^H_{ij}+\Delta \geq \max_{j'\in \tilde{G}}v^H_{ij'}$ 
 (i.e., it is approximately top for $i$ considering hidden features);
and (2) {\em small price variation}, so that     $|p_j-p_{j'}|\leq \Delta$, for all items $j, j' \in \tilde{G}$.

\if 0
In particular, suppose we had an 
 {\em approximate row dominance} property for hidden values $v^H$
in regard to admissible allocation $\tilde{z}$, so that for each agent $i\in \tilde{N}$
we have 
\begin{align}
    v^H_{i\tilde{z}_i}+\Delta &\geq v^H_{ij}, \quad \mbox{all $j\in \tilde{G}$}.
\end{align}

Row dominance is sufficient for pointing consistency, since we can add $\tilde{v}_{i\tilde{z}_i}-p_{\tilde{z}_i}\geq \tilde{v}_{ij}-p_j  + \Delta$ (margin) to    $v^H_{i\tilde{z}_i}+\Delta \geq v^H_{ij}$, 
giving $v_{i\tilde{z}_i}-p_{\tilde{z}_i} = \tilde{v}_{i\tilde{z}_i} + v^H_{i\tilde{z}_i} -p_{\tilde{z}_i}
\geq \tilde{v}_{ij} + v^H_{ij} -p_j = v_{ij}-p_j$, for all $j\in \tilde{G}$.

\begin{align}
    (\tilde{v}_{i\tilde{z}_i}\geq \tilde{v}_{ij}) \ \Rightarrow \  (v_{i\tilde{z}_i}^H+\Delta \geq v_{ij}^H),
    \end{align}
     for $i\in \tilde{N}$ and $j\in \tilde{G}, j\neq \tilde{z}_i$. That is, if $i$ prefers its allocated item over another item, then  this holds up to $\Delta$ at the hidden values. 
     Given this, we show that pointing consistency holds for agent $i$.   Writing this for agent 1  that is 
allocated item $A$, we have $|p_A-p_B|\leq \Delta$, and thus $\tilde{v}_{1A}-p_A\geq \tilde{v}_{1B}-p_B+\Delta$ (margin) implies $\tilde{v}_{1A}\geq \tilde{v}_{1B}$. From this, we have $v^H_{1A}+\Delta\geq v^H_{1B}$.
Putting together, we have  $v_{1A}-p_A = \tilde{v}_{1A}-p_A+v_{1A}^H\geq \tilde{v}_{1B}-p_B+\Delta + v^H_{1B}-\Delta=v_{1B}-p_B$, 
where we substitute $\tilde{v}_{1A}-p_A\geq \tilde{v}_{1B}-p_B+\Delta$ 
(margin condition) and $v^H_{1A}\geq v^H_{1B}-\Delta$ (value consistency).

\fi

\if 0
A second set of conditions come from considering an
{\em approximate column dominance property} on hidden valuations. 
Considering a matrix with agents as rows and items as columns, 
column dominance means that 
the agent to which an item is allocated has 
weakly larger value for the item than that of any other 
agent.
%
%
\begin{definition}[Approximate column dominance]
An admissible allocation $\tilde{z}$ with margin $\Delta$
satisfies {\em approximate column dominance} if,
for each item $j\in \tilde{G}$
 and agent $i$ allocated item $j$, 
we have  $v^H_{ij}\geq v^H_{i'j}-\Delta$, for all 
$i'\in \tilde{N}$.
\end{definition}

\begin{lemma}
Admissible allocation $\tilde{z}$ with margin $\Delta$ is restricted optimal if the 
approximate column dominance condition holds. 
\end{lemma}

It suffices for approximate column dominance 
that there is  little variation  across agents in their 
hidden value 
for an item,   which is in turn provided by 
the following properties (approximate column dominance is also achieved
by Condition 2).
In stating the  properties,   we assume that  agent preferences $\beta\in[0,1]^m$
 and item features, $x\in [0,1]^m$. 
 %
 \fi

\paragraph{Condition 4: Items have small hidden features.}
 A fourth property that suffices for restricted optimality  
 is that items  have small hidden
features, with $|(1-\mu)\odot x_j|_1\leq \Delta,\quad \forall j\in \tilde{G}$.
%
%

\paragraph{Condition 5: Agent preference heterogeneity is captured in revealed features.}
A fifth property is that the agents $\tilde{N}$ allocated in addmisible allocation $\tilde{a}$ have similar preferences for hidden features, with 
$|(1-\mu)\odot(\beta_i-\beta_{i'})|_1\leq \Delta, \quad \forall i, i'\in \tilde{N}$.
%

\extended{
\todo{
plug in perceived prices
}
}

\fi

\section{Experiments} \label{sec:experiments}

\subsection{Synthetic data} \label{sec:exp_synth}

We first make use of synthetic data to empirically explore our setting and approach.
Our main aim is to understand the importance of each step in our construction in Sec.~\ref{sec:method}. 
Towards this, here we abstract away optimizational and statistical issues by
focusing on small individual markets for which we can enumerate all possible masks,
and assuming access to fully accurate predictions $\yhat(\mask)=y(\mask)$.
The following experiments use $n=m=8$, $d=14$, $k=6$,
and CE prices, 
with results averaged over 10 random instances.
Additional results for an alternative decision model can be found in Appendix~\ref{apx:synth-additional}.\looseness=-1 



\paragraph{Variation in preferences.} 
In general, congestion occurs when users have similar preferences,
and our first experiment studies how the degree of preference similarity affects decongestion and welfare.
Let $V_{\text{het}}, V_{\text{hom}} \in \R^{n \times m}$ be value matrices encoding fully-heterogeneous and fully-homogeneous preferences, respectively.
We create `mixture markets' as follows:
First, we sample random item features $X$.
Then, for each of the above $V_{(i)}$, we extract user preferences $B_{(i)}$
by solving $\min_{B \ge 0} \|B X^\top - V_{(i)}\|_2$.
Finally,
for $\alpha \in [0,1]$,
we set $B_\alpha = (1-\alpha) B_{\text{het}} + \alpha B_{\text{hom}}$ to get $V_\alpha = B_\alpha X^\top$.
Thus, by varying $\alpha$, we can control the degree of preference similarity.\looseness=-1

Fig.~\ref{fig:synth} (left) presents welfare obtained by the optimal masks for the following objectives:
(i) a welfare oracle (having access to $v$),
(ii) a predictive oracle (maximizing $\yhat_{ij}(\mask) v_{ij}$ per user),
(iii) selection,
(iv) decongestion,
(v) the welfare lower bound in Eq.~\eqref{eq:welfare_lower_bound}
(namely selection minus decongestion),
and (vi) our proxy objective in Eq.~\eqref{eq:proxy_mask}.
As expected, the general trend is that less heterogeneity entails lower attainable welfare.
Prediction and selection, which consider only demand (and not supply) do not fair well, especially for larger $\alpha$.
As a general strategy, decongestion appears to be effective;
the crux is that there can be many optimally-decongesting solutions---of which some may entail very low welfare
(see subplot showing results for all $k$-sized masks in a single market).
Of these, our proxy objective encourages a decongesting solution that has also high value; results show its performances closely matches the oracle upper bound,
despite using $p$ instead of $v$ as in the welfare lower-bound.\looseness=-1

\paragraph{Perceptive distortion.}
Partial information can decrease welfare if it causes preferences to shift. 
This becomes more pronounced if preference shift increases homogeneity, which leads to increased congestion.
Since what may cause preferences to shift is the perceptive distortion of values, it would seem plausible 
to seek representations that minimize distortion.
This is demonstrated empirically in Fig. \ref{fig:synth} (right).
The figure shows evident anti-correlation between perceptive distortion
(measured as $\frac{1}{m}\|\ptilde-p\|_1$)
and welfare across al $k$-sized masks
(here we set $\alpha=0.2$).
A similar anti-correlative pattern appears in relation to preference homogeneity from perceived values
(measured using Kendall's coefficient of concordance),
suggesting that masks are useful if they entail heterogeneous choices.

\paragraph{Value dispersion.}
Although heterogeneity is important, it may not be sufficient.
As noted, markets 
with smaller margins
should make our method more susceptible to perceptive distortion.
To explore this, we study the effects of `contracting' the higher-value regime of $v$, 
achieved by taking powers $\rho < 1$ of $v$
(since $v \in [0,1]$, we have $v \le v^\rho \le 0$).
Fig. \ref{fig:synth} (center) shows results for
decreasingly smaller powers $\rho$.
As expected, since smaller $\rho$ generally increase values,
overall potential welfare increases as well.
However, as values become `tighter',
this negatively impacts the effectiveness of our approach.\looseness=-1

\begin{figure}[t!]
    \centering
    \includegraphics[width=0.388\linewidth]{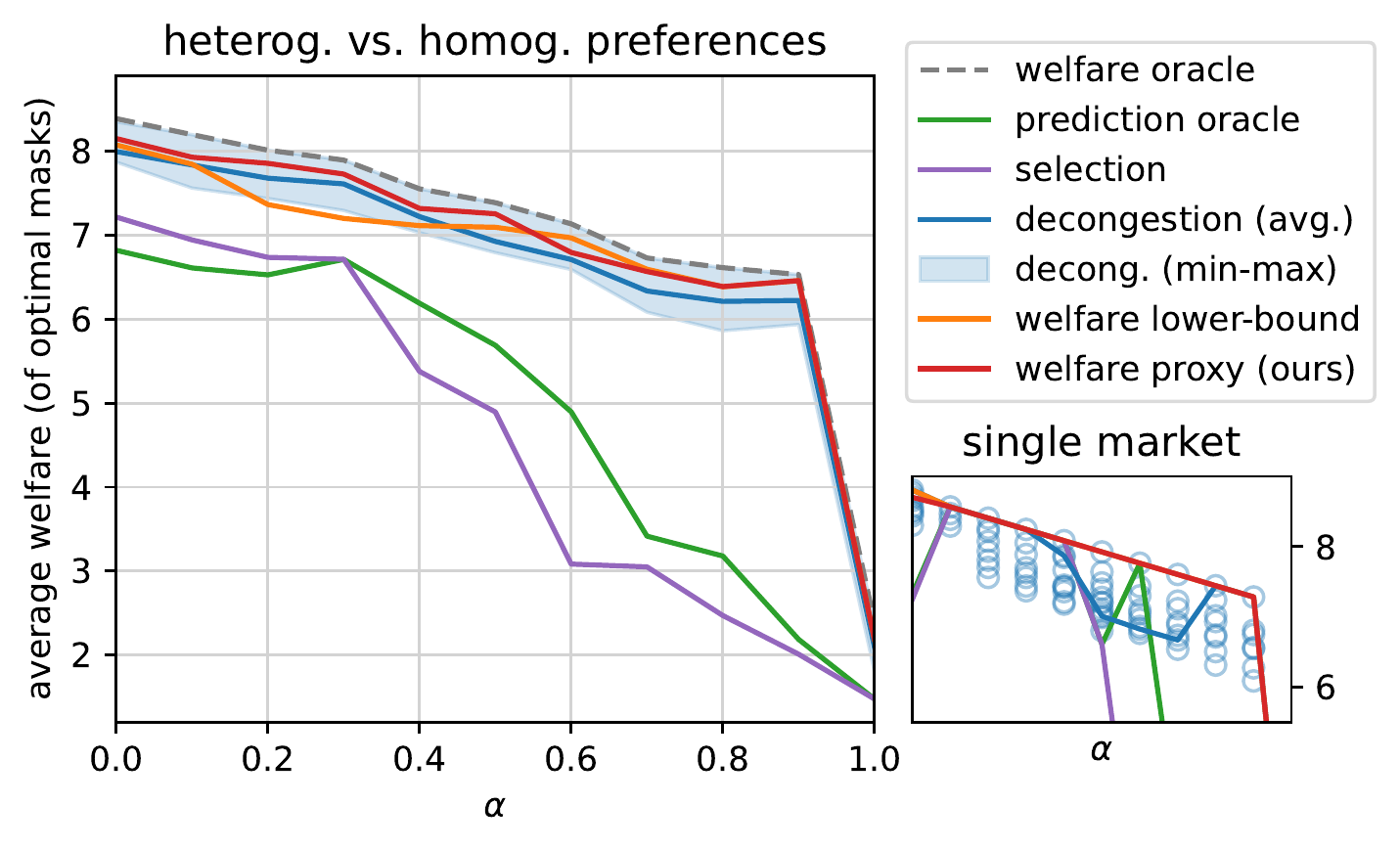} 
    \includegraphics[width=0.268\linewidth]{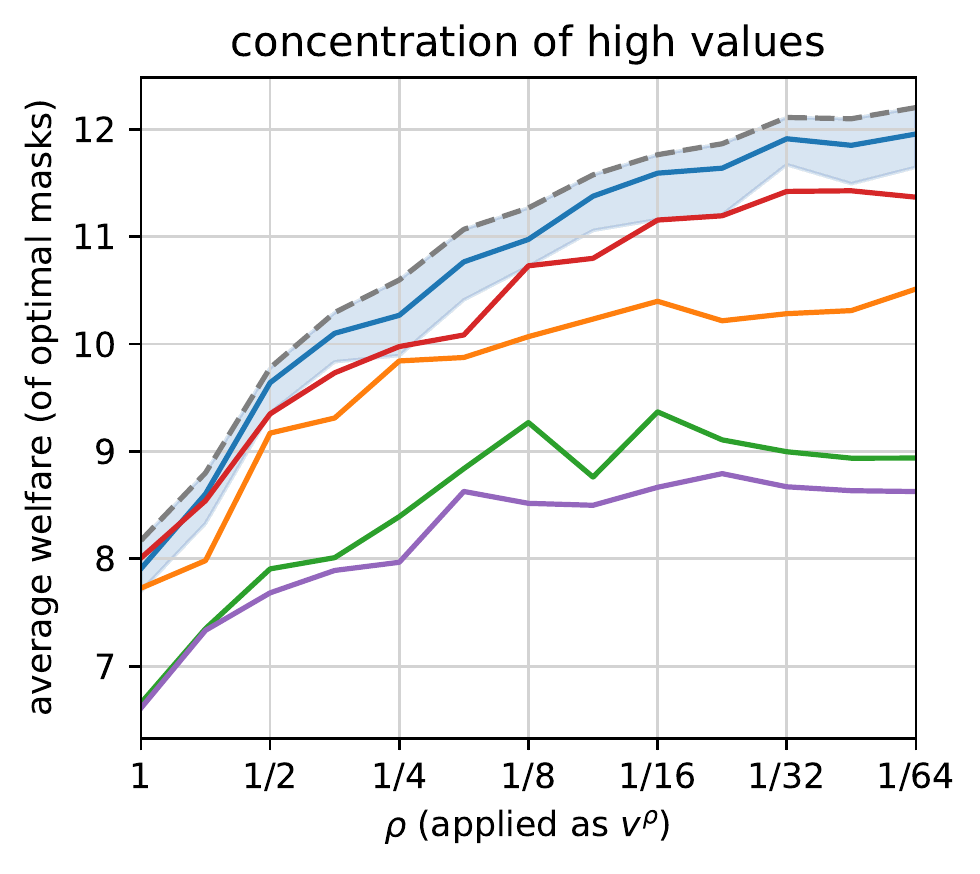}
    \includegraphics[width=0.33\linewidth]{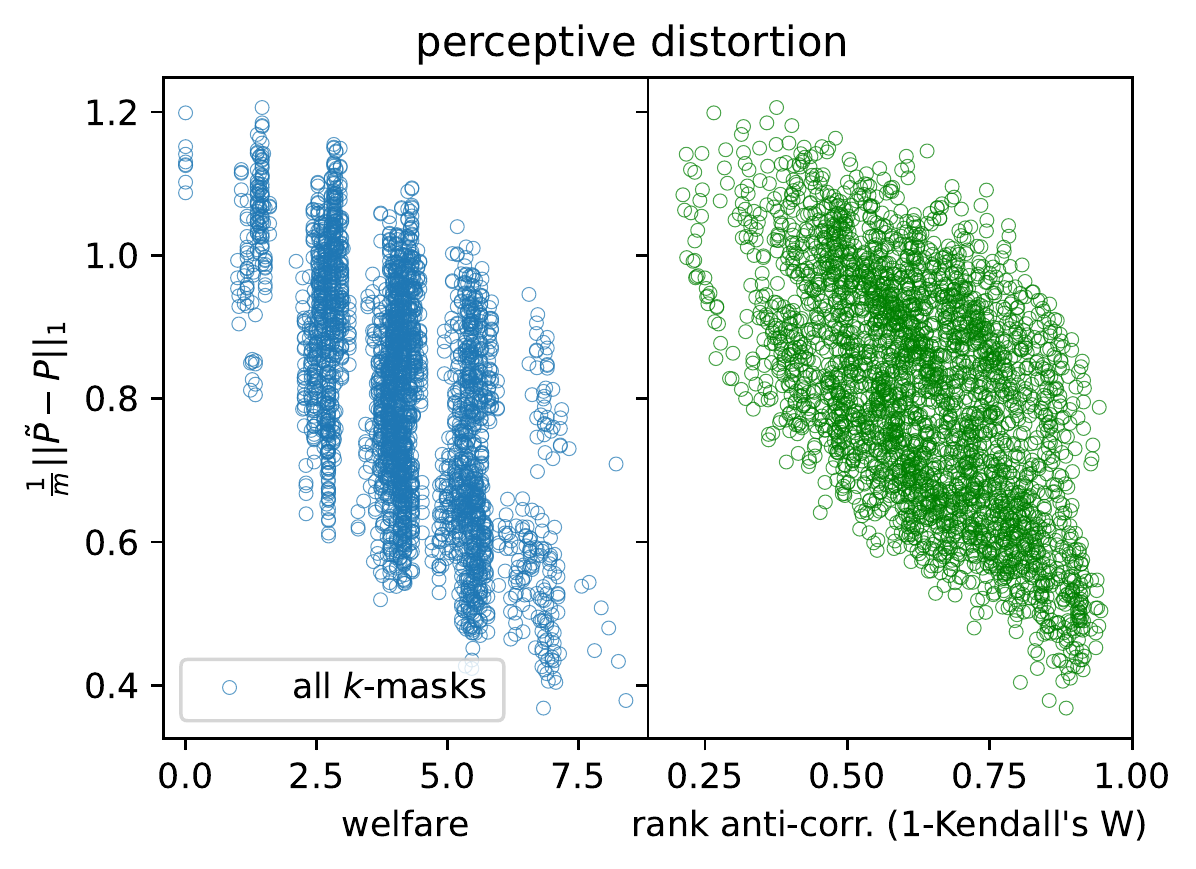}
    \caption{Results on synthetic data.
    \textbf{(Left:)} Welfare obtained by the optimal mask for different objectives, on average and for a single market (inlay).
    \textbf{(Center:)} Performance for increasingly smaller valuation gaps.
    \textbf{(Right:)} Relations between distorted values, welfare, and preference heterogeneity.
    }
    \label{fig:synth}
\end{figure}

\subsection{Real data} \label{sec:exp_real}

We now turn to experiments on real data and simulated user behavior.
\niradd{We use two datasets: MovieLens, which we present here;
and Yelp, which exhibits similar trends, and hence deferred to Appx.~\ref{apx:real-yelp}.}

\paragraph{Data.}
We use the \emph{Movielens-100k} dataset 
\cite{harper2015movielens}, which contains 100,000 movie ratings from 1,000 users and for 1,700 movies,
and is publicly-available. 
Item features $X$ and users preferences $B$ (dimension $d$) were obtained by applying non-negative matrix factorization to the partial rating matrix.
User features $U$ (dimension $d'$) were then extracted by additionally factorizing preferences $B$ as $U T^\top \approx B$,
where the inferred $T$ can be thought of as an approximate mapping from features to preferences.
We experiment in two latent dimension settings:
\emph{small} ($d=12$), which permits computing oracle baselines by enumeration; and \emph{large} ($d=100$).
In both we set $d'=d/2$.

\paragraph{Setup.}
To generate a dataset of markets $\smplst$, we first sample $m=20$ items uniformly from $X$, and then sample $L=240$ sets of $n=20$ users uniformly from $U$.
Masks $\mask$ are sampled according to a `default' masking policy $\pi_0$ 
that elicits feature importance from prices,
but ensures full support (see `price predictive' baseline below).
For prices $p$ we mainly use CE prices 
computed per market,
but also consider other pricing schemes.
Choices $y$ are then simulated as in Eq.~\eqref{eq:perceived_choice}.
Given $\smplst$, we use a 6-fold split to form different partitions into train test sets. Results are then averaged over 6 random sample sets and 6 splits per sample set (total 36, 95\% standard error bars included).

\paragraph{Method.}
For our method of decongestion by representation (\DbR),
we optimize Eq.~\eqref{eq:proxy_differentiable} using Adam \citep{adam2015} with 0.01 learning rate and for a fixed number of 300 epochs.
When $k>d/2$, we have found it useful to set $k \gets d-k$ and learn `inverted' masks $1-\mask$.
For $\lambda$, our main results use $\lambda=1-\frac{k}{2d}$, with the idea that smaller $k$ require more effort placed on decongestion, but note that this very closely matches performance for $\lambda=0.5$, 
and that results are fairly robust across $\lambda$ (see Appendix \ref{sec:appendix_lambda}).
For $f$ in Eq.~\eqref{eq:prediction_objective} we train a bi-linear model (in $u$ and $x$) for 150 epochs using cross-entropy. 
We consider three variants of our approach that differ in their test-time usage:
(i) \DbRpolicy, which samples masks from the learned policy $\mask \sim \pihat$;
(ii) \DbRmask, which commits to a single sampled mask $\hat{\mask} \sim \pihat$
(having the lowest objective value);
and (iii) \DbRtopk, which constructs and uses a mask $\mask_\thetahat$ composed of the top-$k$ entries in the learned $\thetahat$. 
For further details on implementation and optimization see Appendix \ref{sec:appendix_real_data_detailes}.
\looseness=-1

\paragraph{Baselines.}
We 
compare 
the above 
to:
(iv) \pricepred, a prediction-based method that uses the top-$k$ most informative features for predicting prices from item features, with the idea that these should also be most informative of values;
(v) \choicepred, which aims to recover the top-$k$ most important features for users by eliciting an estimate of $T$
(and hence of preferences $\beta$) from the learned choice-prediction model $f$;
(vi) an \oracle\ benchmark that optimizes welfare directly (when applicable);
and (vii) a \random\ benchmark reporting average performance over randomly-sampled $k$-sized masks.
\looseness=-1

\paragraph{Results.}
Figure~\ref{fig:real} (left, center) shows results for increasing values of $k$.
Because overall welfare quickly increases with $k$ for all methods,
for an effective comparison across $k$ we plot
the relative gain in welfare compared to \random,
with absolute values depicted within.
For the $d=12$ setting (left), results show that our approach is able to learn effective representations attaining welfare that is close to \oracle. Relative gains increase with $k$ and peak at around $k=8$.
Prediction-based methods generally improve with $k$, but at a low rate.
The inlaid plot shows a tight connection to the number of allocated items, suggesting the importance of (de)congestion in promoting welfare (or failing to do so).
For $d=100$ (center), performance of our approach steadily increase with $k$.
Here \choicepred\ preforms reasonably well for $k \approx 50$, but not so for large $k$, nor for small $k$, where \pricepred\ also fails.\looseness=-1

\paragraph{The role of prices.} \label{sec:prices_experiments}
Because our proxy welfare objective relies on prices for guiding decongestion (for which CE prices are especially useful),
we examine the robustness of our approach to differing pricing schemes.
Focusing on $d=12$ and $k=6$,
Figure~\ref{fig:real} (right) shows performance for 
(i) CE prices ranging from buyer-optimal (minimal) to seller-optimal (maximal),
and (ii) increasing levels of noise applied to mid-range CE prices.
Results show that overall performance degrades as prices become either higher or noisier, demonstrating the general importance of having value-reflective prices.
Nonetheless,
and despite its reliance on prices,
our approach steadily maintains performance relative to others.
Appendix~\ref{apx:real-pricing_schemes} shows similar results for additional variations on pricing schemes.

\begin{figure}[t!]
    \centering
    \includegraphics[width=\linewidth]{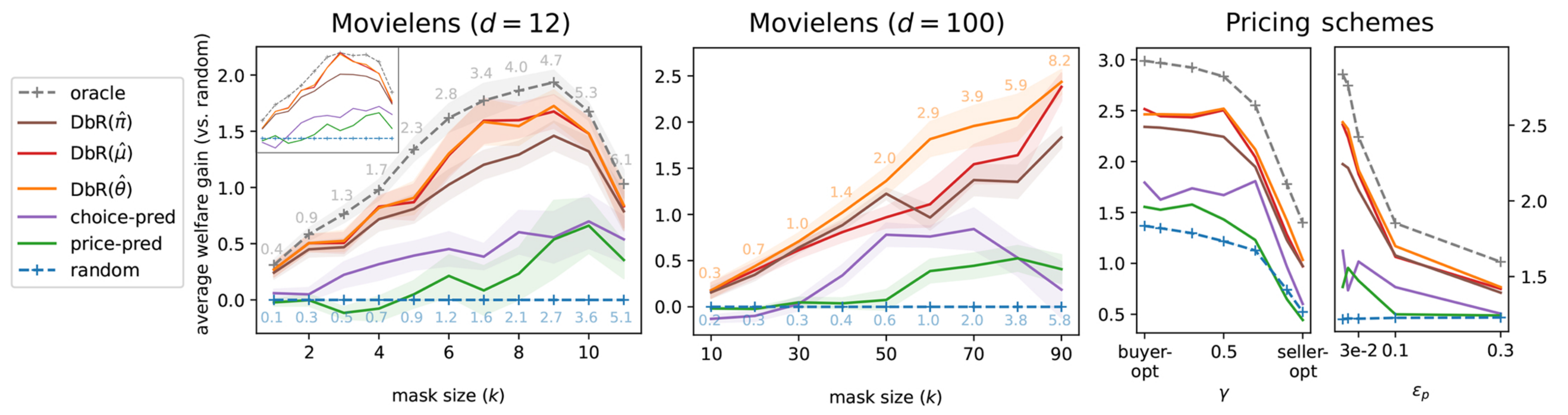}
    \caption{
    Experiments using real data.
    \textbf{(Left + center:)} Gain in welfare for increasing mask size $k$, for the Movielens dataset using $d=12$ (left) and $d=100$ (center) hidden features. Plot lines gain relative to random, numbers show absolute welfare values.
    \textbf{(Right:)} Welfare (absolute) obtained for different prices schemes:
    (i) buyer- vs. seller-optimal prices, and
    (ii) increasing additive noise.
    }
    \label{fig:real}
\end{figure}

\section{Discussion}

In this paper, we have initiated the study of decongestion by representation, 
developing a differentiable learning framework that learns item representations in order to reduce congestion and improve social welfare. 
Our main claim is that partial information is a necessary aspect of
 modern online markets, and that systems have both 
the  opportunity and responsibility in choosing representations that serve their users well.
We view our approach, which pertains to `hard' congestion  found in tangible-goods markets, and on feature-subset representations, as taking one step towards this. At the same time,   `soft' congestion,
which is prevalent in 
 digital-goods markets, also caries many adverse effects.
Moreover,   there exist various  other relevant forms of information representation (e.g., feature ranking, or even other modalities such as images or text). 
We leave these,
as well as the study of more elaborate user choice models,
as 
\galiadd{interesting directions for} future work.

\extended{
\red{
- per-item masks; parametric per-item/market masks $\mask(x)$ \\
}
}


\section*{Ethics Statement and Broader Perspectives}

Our paper considers the effect of partial information on user choices in the context of online market platforms,
and proposes that platforms utilize their control over representations to promote decongestion as a means for improving social welfare.
Our point of departure is that partial information is an inherent component of modern choice settings.
As consumers, we have come to take this reality for granted.
Still, this does not mean that we should take the system-governed decision of what information to convey about items, and how, as a  given.
Indeed, we believe it is not only in the power of platforms, but also their responsibility, to choose representations with care.
Our work suggests that `default' representations,
such as those relying on predictions of user choices,
may account for demand---but are inappropriate when
supply constraints have concrete implications on user utility.

\paragraph{Soft congestion.}
Although our focus is primarily on tangible-goods, 
we believe similar arguments hold more broadly in markets for non-tangibles, such as media, software, or other digital goods.
While technically such markets are not susceptible to `hard' congestion since there is no physical limitation on the number of item copies that can be allocated,
still there is ample evidence of `softer' forms of congestion
which similarly lend to negative outcomes.
For example, digital marketplaces are known to exhibit hyper-popularization,
arguably as the product of rich-get-richer dynamics,
and which results in strong inequity across suppliers and sellers.
Some recent works have considered the negative impact of such soft congestion, but mostly in the context of recommender systems;
we believe our conclusions on the role of representations apply also 
to `soft' congestion, perhaps in a more subtle form,
but nonetheless carrying the same important implications for welfare.
\extended{\todo{plug citations}}


\paragraph{Limitations.}
We consider the task of decongestion by representation in a simplified 
 market setting,  including several assumptions on the environment and on user behavior.
One key assumption relates to how we model user choice (Sec.~\ref{sec:setup}).
While this can perhaps be seen as less restrictive than the standard economic assumption of rationality, 
our work considers only one form of bounded-rational behavior,
whereas in reality there could be many others
(our extended experiments in Appendix~\ref{apx:synth-additional} take one small step towards considering other behavioral assumptions).
In terms of pricing, our theoretical analysis in Sec.~\ref{sec:theory}
relies on equilibrium prices with respect to true buyer preferences,
 which may not hold in practice. Nonetheless, our experiments in Sec.~\ref{sec:experiments} and  Appendix~\ref{apx:real-pricing_schemes} on varying pricing schemes show that while CE prices are useful for our approach---they are not necessary.
Our counterexample in Sec.~\ref{apx:adaptive_prices} suggests that, in the worst case, partially-informed equilibrating prices do not `solve the problem'.
For our experiments in Sec.~\ref{sec:exp_real},
as we state and due to natural limitations,
our empirical evaluation is restricted to rely on real data but simulated user behavior.
Establishing our conclusions in realistic markets requires human-subject experiments as well as extensive field work.
We are hopeful that our current work will serve to encourage these kinds of future endeavours.

\paragraph{Ethics considerations.}
Determining representations has an immediate and direct effect on human behavior, and hence must be done with care and consideration.
Similarly to recommendation,
decongestion by representation is in essence a policy problem,
since committing to some representation at one point in time can affect, through user behavior, future outcomes.
Our empirical results in Sec.~\ref{sec:experiments} suggest that learning can work well even when the counterfactual nature of the problem is technically unaccounted for (e.g., training $f$ once at the onset on $\pi_0$, and using it throughout).
But this should not be taken to imply that learning of representations in practice can succeed while ignoring counterfactuals.
For this, we take inspiration from the field of recommender systems,
which despite its historical tendency to focus on predictive aspects of recommendations, has in recent years been placing increasing emphasis on recommendation as a policy problem, and on the 
 implications of this. 


While our focus is on `anonymous' representations,
i.e., that are fixed across items and for all users---it is important to note that the \emph{effect} of representations on users is not uniform.
This comes naturally from the fact that representations affect the perception of value, which is of course personal.
As such, representations are inherently individualized.
And while this provides power for improving welfare,
it also suggests that care must be taken to avoid discrimination on the basis of induced perceptions; e.g.,
decongesting by systematically diverting certain groups or individuals from their preferred choices. \looseness=-1

Finally, we note that while promoting welfare is our stated goal and underlies the formulation of our learning objective,
the general approach we consider can in principal be used to promote other platform objectives.
Since these may not necessarily align with user interests,
deploying our framework in any real context should be done with integrity and under transparency,
to the extent possible,
by the platform. 



\subsection*{Acknowledgements}
This research was supported by the Israel Science Foundation (grant No. 278/22),
and has received funding from the European Research Council (ERC) under the European Union’s Horizon 2020 research and innovation programme (grant agreement No 740282).
Gali Noti has also been affiliated with Harvard University and the Hebrew University of Jerusalem during this project.
We would like to thank Sophie Hilgard for her conceptual and methodological contributions to the paper in its initial stages.

\bibliography{bibliography}
\bibliographystyle{iclr2024_conference}

\clearpage

\appendix


\begin{figure}[b!]
    \centering
    \includegraphics[height=10em]{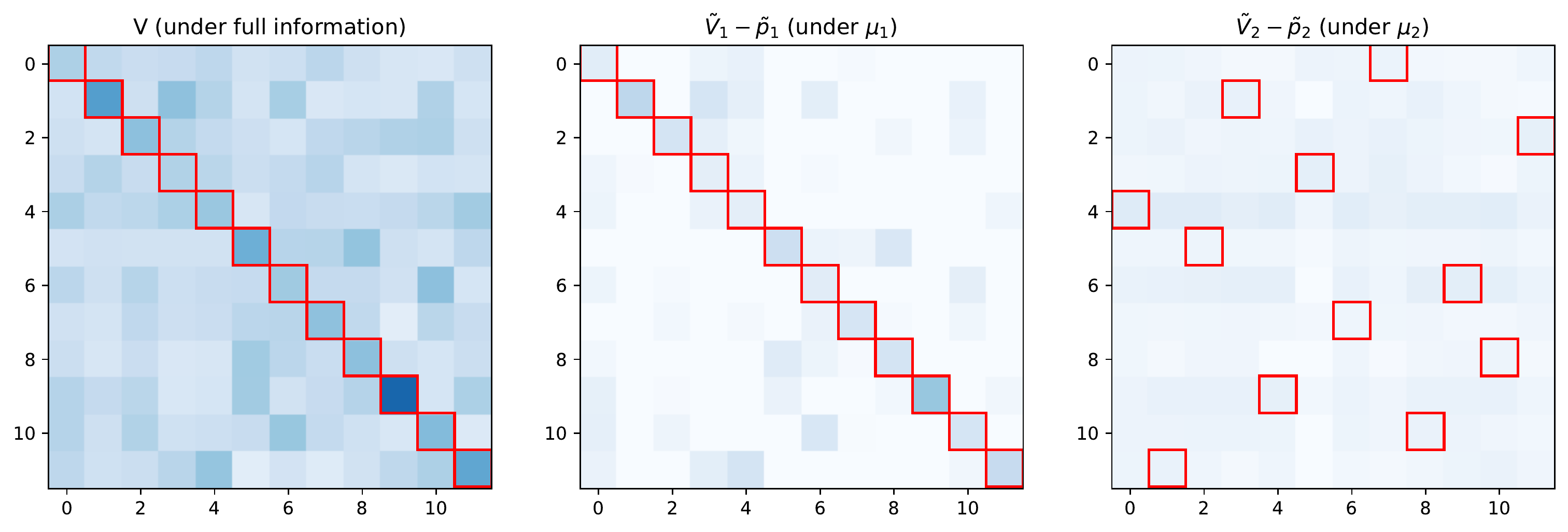} 
    \includegraphics[height=9.5em]{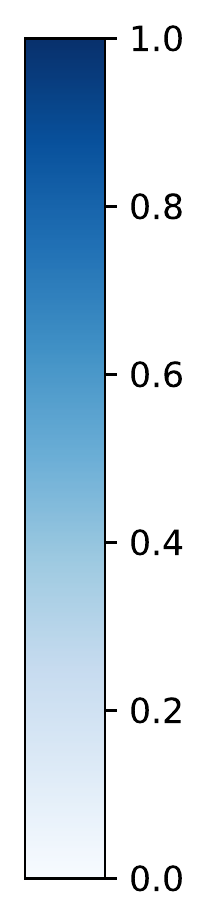} 
    \caption{
    An example market in which different representations entail very different allocations at partial-information market-clearing prices.
    \textbf{(Left:)}
    The true valuation matrix $V$, with corresponding allocations (red squares) for choices made under full information and CE prices $p^*$.
    \textbf{(Center:)}
    Perceived values $\tilde{V}_1$ under representation $\mu_1$, minus corresponding  partial-information CE prices $\tilde{p}_1$. Resulting allocations are optimal.
    \textbf{(Right:)}
    Perceived values $\tilde{V}_2$ minus partial-information prices $\tilde{p}_2$ under $\mu_2$.
    Resulting allocations are highly sub-optimal.\looseness=-1
    }
    \label{fig:counterexample}
\end{figure}

\niradd{
\section{A note on adaptive prices} \label{apx:adaptive_prices}
Our settings makes the assumption that prices are fixed.
This is motivated by settings in which sellers are slow to adapt (or do not adapt at all),
and in which representations can be adjusted to take effect more quickly.
In this sense, we see representations as adapting to prices---rather than vice versa.

\paragraph{Adaptive prices.}
An alternative would be to consider prices that adapt to revealed demand,
and in particular,
prices $\tilde{p}$ that attain  competitive equilibrium under perceived values $\tilde{v}$; i.e., ``partially-informed competitive equilibrium prices," or ``partially-informed prices."
These prices would  clear the market, but nonetheless have several significant drawbacks:

\begin{itemize}[leftmargin=1em, topsep=0pt]
\item
First, such prices would completely ignore true valuations $v$, and the actual values that users obtain from items would have no effect on the market. We find this to be unrealistic; a more plausible alternative would be to have (past) true values propagate to influence (future) prices in some manner (e.g., via users posting reviews). Fixed prices can be seen as one (indirect) way to achieve this.\looseness=-1

\item
Second, and relatedly, while partially-informed prices do solve congestion,
they do so without any guarantees on welfare;
in fact, in our setting, welfare under such prices can be arbitrarily low (see below).
This is in contrast to fully-informed prices, which simultaneously minimize congestion \emph{and} maximize welfare.\looseness=-1

\item
Third, in our setting, such partially-informed prices would likely be much lower than prices at full information. This may push sellers to leave the platform if they have an external option, or if prices fall below production costs, this reducing welfare.

\item
Fourth, and most importantly, partially-informed prices \emph{still depend on what information is revealed}, i.e., they will be different under different masking schemes. Thus, the problem of choosing what information to convey would remain and in fact become more difficult, as learning must now anticipate not only choices, but also induced prices, under possible representations.
\end{itemize}

Therefore, while learning representations for adapting prices is an intriguing direction,
we feel it is deserving of designated future work.

\paragraph{A constructive example.}
We now show how in our setting, partially-informed prices can give arbitrarily-bad welfare (in the worst case) as a result of their dependence on representations.
We prove this by constructing an example in which one representation yields approximately optimal welfare,
whereas another yields (approximately) only a small constant fraction,
under corresponding partially-informed prices.
The construction works by setting half of the features to encode most of the true values of items,
and the other half to encode noise.
The former subset corresponds to a `good' representation,
for which prices need not adapt much,
and hence preserves optimal choices.
The latter subset corresponds to a `bad' representation,
which is highly uninformative of values;
this causes prices to adapt in a way that entails a `random' decongested allocation providing very low welfare.

Figure~\ref{fig:counterexample} illustrates the values and choices under the different representations and pricing schemes for our example.
The precise numerical values used in the example can be found in our code base.

}


\section{Theoretical Analysis} \label{apx:analysis}
\subsection*{Competitive equilibrium}
Let $p=(p_1,\ldots,p_m)$ denote  item prices,  $a$ denote
 a feasible allocation (i.e., each item is allocated at most once and each user to at most one item), and $v_i$ agent $i$'s true valuation. Let $\tilde{v}_i$ denote
agent $i$'s perceived valuation 
given mask $\mu$, 
and  $v^H_{ij}=v_{ij}-\tilde{v}_{ij}$
 denote agent $i$'s {\em hidden valuation}.
We make the technical assumption that each user has a unique best response,
but note the analysis extends to demand sets that are not a singleton by heuristically selecting an item from the demand set.%
\footnote{
Alternatively, one can infinitesimally perturb the preference vectors and obtain a unique best response.
}
 
%
\begin{definition}
    $(a,p)$ is a competitive equilibrium if (1) $a_i\in\argmax_j[v_{ij}-p_{j},0]$ for all $i$, and (2) any item with $p_j>0$
 is allocated. 
 \end{definition}

Competitive equilibrium requires that allocation $a$ is (1)  a best response for each agent,  and (2) maximizes revenue. 
The following is well known, the proof is included for completeness.
\begin{theorem}
A CE is welfare optimal.
\end{theorem}
\begin{proof}
   The primal assignment problem is 
    \begin{align}
    \label{eq:ce_primal}
   \max_{a}&\sum_i\sum_j a_{ij}v_{ij} \\
   \mbox{s.t.} \quad& \sum_{i}a_{ij}\leq 1 \quad, \forall j \quad [\mbox{dual}\ p_j] \nonumber \\
   &  \sum_{j}a_{ij}\leq 1 \quad, \forall i \quad [\mbox{dual}\ \pi_i] \nonumber \\
   &x_{ij}\geq 0 \nonumber
    \end{align}

    The dual is 
    \begin{align}
    \label{eq:ce_dual}
        \min_{\pi,p} &\sum_j p_j+ \sum_i \pi_i\\
         \mbox{s.t.} \quad& \pi_i+p_{j}\geq v_{ij} \quad \forall i, j \quad\quad [\mbox{dual}\ a_{ij}] \nonumber \\
        & \pi_i,p_{j}\geq 0. \nonumber
    \end{align}

The optimality of CE $(a,p)$,  along with $\pi_i=\max_j[v_{ij}-p_{j},0]$ to complete the dual,   
is established by checking complementary slackness (CS). The primal CS condition is $a_{ij}>0 \Rightarrow \pi_i+p_j=v_{ij}$, and satisfied 
since  agent $i$ receives an item in its best response set when non-empty (CE),
 and by the construction of $\pi_i$.
 The dual CS conditions are $\pi_i>0\Rightarrow \sum_j a_{ij}=1$ and $p_j>0\Rightarrow \sum_i a_{ij}=1$, and  satisfied
 by the CE properties, 
since every agent with a non-zero demand set gets an item 
and  every item with positive price is allocated.
\end{proof}


CE prices form a lattice,  in general are not unique,  and  price the {\em core} of the assignment game \cite{shapley1971assignment}. 
Amongst the set of CE prices, the 
buyer-optimal and seller-optimal prices are especially salient.
%

\subsection*{Congestion monotonicity}
\begin{proof} (Proposition 1.): 
Let $\mathcal{A}_s$ denote the set of all feasible allocations of exactly $s$ items, such that every set $A \in \mathcal{A}_s$ is a set of user-item pairs that represents an allocation of $s$ items. Value matrix $(v_{ij})$ is congestion monotone if and only if for every $s \leq m$ it holds that 
\begin{equation*}
\max_{A \in \mathcal{A}_{s-1}} \sum_{(i,j)\in A} v_{ij} \leq
\min_{A \in \mathcal{A}_s} \sum_{(i,j)\in A} v_{ij}. 
\end{equation*} 

Next, we define $\delta_{ij} = v_{ij} - v_{min}$ and write every value in $(v_{ij})$ as $v_{ij} = v_{min} + \delta_{ij}$. Using these notations, the congestion monotonicity condition is:
\begin{equation*}
v_{min} \geq 
\Big(\max_{A \in \mathcal{A}_{s-1}} \sum_{(i,j)\in A} \delta_{ij}\Big)
- 
\Big(
\min_{A \in \mathcal{A}_s} \sum_{(i,j)\in A} \delta_{ij} 
\Big).
\end{equation*}
Since $s \leq m$ and since the last summation is of positive terms, we have that 
a sufficient condition is: 
$v_{min} \geq 
(m-1) \cdot \max(\delta_{ij})
    = (m-1) \big( v_{max} - v_{min} \big)$, as required.
\end{proof}

\subsection*{Restricted optimality}

We start by discussing deterministic allocations and then proceed to the proof of Theorem 1 and the proofs for the sufficient conditions for restricted optimality.
Let {\em welfare} $W(a)=\sum_i \sum_j a_{ij} v_{ij}$. Let   $G_a$ and  $N_a$ 
denote the items and agents, respectively, that are allocated in allocation $a$.
Say that $a$ is  {\em restricted optimal} if 
and only if  $a$  is  welfare optimal at true valuations $v$ 
in the  economy $E=(G_a,N_a)$;
 i.e., the economy restricted to the items and agents that are allocated.  
Say that an 
allocation $b$ {\em extends} $a$ if $N_b\supset N_a$ and $G_b\supset G_a$ (i.e., $b$ allocates a strict superset of  items and agents).
\begin{lemma}
Given two  allocations, $a$ and $b$, where  $b$ extends $a$
and $b$ is restricted optimal, then 
 $W(b)\geq W(a)$, 
with $W(b)>W(a)$ if $v_{ij}>0$ for all $i$, all $j$.  \label{lem:1} 
\end{lemma}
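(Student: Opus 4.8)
The plan is to exploit restricted optimality of $b$ directly, by recognizing $a$ as a competing feasible allocation within the very economy on which $b$ is known to be optimal. The fact to set up first is that, since $b$ extends $a$, we have $G_a\subseteq G_b$ and $N_a\subseteq N_b$, so every (agent, item) pair used by $a$ lies inside the restricted economy $E=(G_b,N_b)$. Viewed as an allocation over $E$, $a$ therefore remains feasible (each item is still allocated at most once, and each agent receives at most one item), and its welfare is unchanged, since $W(a)=\sum_{ij}a_{ij}v_{ij}$ sums only over the pairs actually allocated by $a$, all of which lie in $E$.

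Given this framing, the weak inequality is immediate. By the definition of restricted optimality, $b$ maximizes welfare at true valuations $v$ over all feasible allocations of the economy $E=(G_b,N_b)$; since $a$ is one such allocation, we conclude $W(b)\geq W(a)$.

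For the strict inequality under the assumption $v_{ij}>0$ for all $i,j$, my plan is to exhibit a feasible allocation $a'$ of $E$ with $W(a')>W(a)$, whence optimality of $b$ yields $W(b)\geq W(a')>W(a)$. Because $b$ extends $a$, both $G_b\setminus G_a$ and $N_b\setminus N_a$ are nonempty by definition. I would then pick any $i^\ast\in N_b\setminus N_a$ and $j^\ast\in G_b\setminus G_a$ (both unallocated under $a$, since $i^\ast\notin N_a$ and $j^\ast\notin G_a$) and let $a'$ agree with $a$ except that it additionally assigns $j^\ast$ to $i^\ast$. This $a'$ is feasible in $E$ and satisfies $W(a')=W(a)+v_{i^\ast j^\ast}>W(a)$.

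There is no deep obstacle here; the argument reduces to the definitions once the reinterpretation of $a$ as a competitor in $E$ is in place. The only points that require care are the bookkeeping that $a$ is genuinely a feasible allocation of the enlarged economy $E$ with its welfare value preserved under this reinterpretation, and, for the strict part, that strict extension guarantees a free (item, agent) pair to augment with --- which is exactly what the strict containments $G_b\supset G_a$ and $N_b\supset N_a$ supply.
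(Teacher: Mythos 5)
Your proposal is correct and follows essentially the same route as the paper's proof: both establish the weak inequality by observing that $a$ remains feasible in the economy $(G_b,N_b)$ on which $b$ is optimal, and both obtain the strict inequality by augmenting $a$ with an assignment from the nonempty sets $G_b\setminus G_a$ and $N_b\setminus N_a$, which adds strictly positive value. The only cosmetic difference is that the paper's intermediate allocation assigns all leftover items while yours adds a single pair; either suffices.
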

\begin{proof}
    Allocation $a$ is feasible in economy $E_a=(G_a,N_a)$ and thus feasible in economy $E_b=(G_b,N_b)$, 
 and so $W(b)\geq W(a)$ since $b$ is optimal on $E_b$.
 Moreover,  if items have strictly positive value then $W(b')>W(a)$ for  allocation $b'$, feasible in $E_b$, 
that extends $a$ through an arbitrary assignment of items $G_b\setminus G_a$ to $N_b\setminus N_a$. With this, we  have $W(b)\geq W(b')>W(a)$, and $b$ strictly improves welfare over $a$.
\end{proof}

\begin{proof} (of Theorem \ref{thm:welfare}.)
 Consider some deterministic allocation $a$ in the support of $A$,  
 and let $P_1=\Pr_A[a]$ denote the probability of assignment $a$.
Define $P_2=\sum_{b\in \mathrm{sup}(B), b \ \mathrm{extends}\ a} \Pr_B[b]$,
which is the  marginal probability of assignments that extend $a$.
We have

$$ \sum_{b \in \mathrm{sup}(B), b \ \mathrm{extends}\ a}\Pr\nolimits_B[b]=
     \sum_{b \in \mathrm{sup}(B), b \ \mathrm{extends}\ a}\prod_{i\in a}\Pr\nolimits_B[i]\cdot \prod_{i\in b, i\notin a} \Pr\nolimits_B[i]\\
    $$
     $$=\prod_{i\in a}\Pr\nolimits_B[i] \sum_{b \in \mathrm{sup}(B), b \ \mathrm{extends}\ a}
 \prod_{i\in b, i\notin a} \Pr\nolimits_B[i] = \prod_{i\in a}\Pr\nolimits_B[i] \cdot 1 \geq 
 \prod_{i\in a}\Pr\nolimits_A[i],$$

%
where the product structure is used to replace the marginalization over the part of the
assignment that extends $a$ by probability 1, and the inequality follows
since $B$ extends $A$. 
For any such $b$ that extends $a$, we have  $W(b)\geq W(a)$ by Lemma~\ref{lem:1}, 
where we use the property that $B$ is restricted optimal
and thus each $b$ in the support of $B$ is restricted optimal.
Then, since $P_2\geq P_1$, 
and considering all such $a$ in the support of $A$, 
we have  $W(B)\geq W(A)$. 
By considering 
the case of $v_{ij}>0$ for all $i$, all $j$, then $W(b)>W(a)$ 
by Lemma~\ref{lem:1}, and
we have $W(B)> W(A)$.
\end{proof}

By Theorem \ref{thm:welfare}, to argue that randomized allocation $B$ provides more welfare than randomized allocation $A$ it suffices to argue that (1) each assignment in the support of $B$ is 
restricted optimal, and (2) $B$ extends $A$ which means that $B$ allocates 
a superset of the 
items and each agent is allocated something with at least as much probability
in $B$ than $A$ (i.e., no agent faces more congestion).

The first set of conditions, namely Conditions 1, 2, and 3 in the main text, follow from reasoning about the following consistency
property, that needs to hold between perceived and true valuations.
\begin{definition}[Pointing consistency.]
An admissible allocation $\tilde{a}$ satisfies  {\em pointing consistency} if,
for every agent $i\in \tilde{N}$, the allocated item $\tilde{a}_i$  is 
 the best response 
 of $i$ at true valuations $v_{i}$.
\end{definition}

In other words, agent $i$ continues to prefer item $\tilde{a}_i$ 
at prices $p$ when moving from perceived valuation $\tilde{v}_i$ to true valuation $v_i$.
The following is immediate.
\begin{lemma} \label{lemma:pointingconsistency}
Admissible allocation $\tilde{a}$ is restricted optimal if the  pointing consistency
 condition holds. 
\end{lemma}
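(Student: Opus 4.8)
The plan is to recognize restricted optimality as a direct instance of the competitive-equilibrium optimality theorem proved earlier, applied to the restricted economy $E=(\tilde{G},\tilde{N})$. Concretely, I would argue that the pair $(\tilde{a},p)$ is a competitive equilibrium of $E$, and then invoke the result that a CE is welfare-optimal to conclude that $\tilde{a}$ maximizes welfare at true valuations $v$ on $E$, which is precisely the definition of restricted optimality.

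To establish that $(\tilde{a},p)$ is a CE of $E$, I would verify the two CE conditions in turn. For the best-response condition, pointing consistency tells me that for every allocated agent $i\in\tilde{N}$ the item $\tilde{a}_i$ is a best response at true valuations $v_i$ and prices $p$, i.e. $v_{i\tilde{a}_i}-p_{\tilde{a}_i}\geq v_{ij}-p_j$ for all items $j$ (and $\geq 0$). Since this inequality holds against all items, it holds a fortiori when the comparison set is restricted to $j\in\tilde{G}$; and because $\tilde{a}_i\in\tilde{G}$, the item $\tilde{a}_i$ remains a best response within the restricted economy. Hence condition (1) holds on $E$.

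For the revenue-maximization condition, I would simply observe that $\tilde{G}$ is by definition the set of items allocated under $\tilde{a}$. Therefore every item present in the restricted economy is sold, and in particular every item with $p_j>0$ is allocated, so condition (2) holds trivially. Combining the two conditions shows $(\tilde{a},p)$ is a CE of $E$, and the CE-optimality theorem then yields that $\tilde{a}$ is welfare-optimal at $v$ on $E$, i.e. restricted optimal.

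The only point requiring any care—and the reason the claim is essentially immediate—is reconciling \emph{best response among all items} in pointing consistency with the smaller comparison set $\tilde{G}$ of the restricted economy. Once one notes that shrinking the feasible set can only preserve an $\argmax$ that already lies inside it, and that condition (2) is automatic because $\tilde{G}$ is exactly the set of allocated items, no genuine difficulty remains. I expect no substantive obstacle here; the entire content of the lemma is the reduction to the already-established CE-optimality result.
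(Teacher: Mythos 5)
Your proof is correct and follows exactly the paper's route: the paper's entire proof is the one-line observation that $(\tilde{a},p)$ is a competitive equilibrium at true valuations in the restricted economy $(\tilde{G},\tilde{N})$, and you have simply spelled out the verification of the two CE conditions before invoking CE-optimality.
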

\begin{proof}
    $(\tilde{a},p)$ is a CE (defined with respect to true valuations) in  economy $(\tilde{G},\tilde{N})$. 
\end{proof}

\begin{lemma}
  Admissible allocation $\tilde{a}$ with margin $\Delta$
 satisfies pointing consistency (and therefore by Lemma \ref{lemma:pointingconsistency} is restricted optimal), 
when $v^H_{i\tilde{a}_i}\geq v^H_{ij}-\Delta$, for all $j\in \tilde{G}$, all $i\in \tilde{N}$.
\label{lem:2} 
\end{lemma}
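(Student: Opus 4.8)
The plan is to establish the \emph{pointing consistency} property for $\tilde{a}$, after which restricted optimality follows immediately from Lemma~\ref{lemma:pointingconsistency}. Recall that pointing consistency requires that for every agent $i \in \tilde{N}$ the allocated item $\tilde{a}_i$ be $i$'s best response at \emph{true} valuations within the restricted economy $(\tilde{G},\tilde{N})$; that is, $v_{i\tilde{a}_i} - p_{\tilde{a}_i} \geq \max\{0, \max_{j \in \tilde{G}}(v_{ij} - p_j)\}$. So the whole task reduces to proving this inequality under the stated hypothesis.

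First I would fix an arbitrary agent $i \in \tilde{N}$ and an arbitrary competing allocated item $j \in \tilde{G}$ with $j \neq \tilde{a}_i$, and decompose true utility through the hidden valuation, writing $v_{ij} - p_j = (\tilde{v}_{ij} - p_j) + v^H_{ij}$ (and likewise for $\tilde{a}_i$). The two ingredients are then: (a) the margin condition, which since $\tilde{a}$ is admissible with margin $\Delta$ gives $\tilde{v}_{i\tilde{a}_i} - p_{\tilde{a}_i} \geq (\tilde{v}_{ij} - p_j) + \Delta$; and (b) the hypothesis $v^H_{i\tilde{a}_i} \geq v^H_{ij} - \Delta$. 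Adding (a) and (b) makes the two $\Delta$ terms cancel exactly, yielding $v_{i\tilde{a}_i} - p_{\tilde{a}_i} \geq v_{ij} - p_j$. Hence $\tilde{a}_i$ (weakly) dominates every other allocated item at true values.

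The one place to be careful---and the only step beyond the cancellation---is ruling out the outside (no-choice) option, i.e. showing $v_{i\tilde{a}_i} - p_{\tilde{a}_i} \geq 0$, which is needed because best response is an $\argmax$ taken against $0$. Here I would invoke admissibility once more: since agent $i$ actually chose $\tilde{a}_i$ under perceived values, its perceived utility is non-negative, $\tilde{v}_{i\tilde{a}_i} - p_{\tilde{a}_i} \geq 0$; and because $\beta_i, x_{\tilde{a}_i} \geq 0$, masking can only lower perceived value, so $v^H_{i\tilde{a}_i} = v_{i\tilde{a}_i} - \tilde{v}_{i\tilde{a}_i} \geq 0$. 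Summing these gives $v_{i\tilde{a}_i} - p_{\tilde{a}_i} \geq 0$.

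Combining the two bounds shows $\tilde{a}_i \in \argmax_{j \in \tilde{G}}\{v_{ij} - p_j, 0\}$ for every $i \in \tilde{N}$, which is exactly pointing consistency; Lemma~\ref{lemma:pointingconsistency} then delivers restricted optimality, as claimed in Lemma~\ref{lem:2}. I do not anticipate a genuine obstacle: the heart of the argument is a one-line addition of two inequalities whose $\Delta$ terms cancel, and the only subtlety is disposing of the no-choice option via non-negativity of the hidden value (a consequence of the paper's positivity assumptions) rather than treating it as automatic.
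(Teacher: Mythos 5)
Your proof is correct and takes essentially the same route as the paper's: you add the margin inequality $\tilde{v}_{i\tilde{a}_i}-p_{\tilde{a}_i}\geq \tilde{v}_{ij}-p_j+\Delta$ to the hypothesis $v^H_{i\tilde{a}_i}\geq v^H_{ij}-\Delta$ so the $\Delta$ terms cancel, yielding $v_{i\tilde{a}_i}-p_{\tilde{a}_i}\geq v_{ij}-p_j$ and hence pointing consistency. Your additional step ruling out the no-choice option via non-negativity of the perceived utility and of the hidden value is a small refinement that the paper leaves implicit (it follows from the positivity assumptions), but it does not change the argument.
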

\begin{proof}
For agent $i$, and any $j\neq \tilde{a}_i$, 
we have $v_{i\tilde{a}_i}-p_{\tilde{a}_i} = \tilde{v}_{i\tilde{a}_i}-p_{\tilde{a}_i}+v_{i\tilde{a}_i}^H\geq \tilde{v}_{ij}-p_j+\Delta + v^H_{ij}-\Delta=v_{ij}-p_j$,  and pointing consistency, 
where we substitute $\tilde{v}_{i\tilde{a}_i}-p_{\tilde{a}_i}\geq \tilde{v}_{ij}-p_j+\Delta$ 
(margin condition) and $v^H_{i\tilde{a}_i}\geq v^H_{ij}-\Delta$ (indifference assumption). 
\end{proof}

Considering a matrix with agents as rows and items as columns,
the
 property in Lemma~\ref{lem:2} is one of
 ``row-dominance"
for $\Delta=0$, 
 such that the value of
 an agent for its allocated item is weakly larger than that of
 every other item.  For this property, it suffices
that there
is  little variation in the hidden value for 
any items, which is in turn provided by 
the set of five conditions.
%

\begin{proof} (of Condition 1)
This  condition is sufficient for the hidden-value similarity of Lemma~\ref{lem:2},
 since
$v^H_{ij}-v^H_{i\tilde{a}_i}=\beta^\top_i(1-\mu)\odot(x_j-x_{\tilde{a}_i}) = 
\sum_{k: \mu_k=0} \beta_{ik}(x_{jk}-x_{\tilde{a}_ik})
\leq \sum_{k: \mu_k=0} |\beta_{ik}(x_{jk}-x_{\tilde{a}_ik})|
\leq \sum_{k: \mu_k=0} |x_{jk}-x_{\tilde{a}_ik}|
= |(1-\mu)\odot (x_j-x_{\tilde{a}_i})|_1\leq \Delta$,
where the penultimate inequality follows
from $0\leq \beta_{ik}\leq 1$.
\end{proof}


%
\begin{proof} (of Condition 2)
This condition is sufficient for the hidden-value similarity of Lemma~\ref{lem:2} since
$v^H_{ij}-v^H_{i\tilde{a}_i}=\beta^\top_i(1-\mu)\odot(x_j-x_{\tilde{a}_i}) =
\sum_{k: \mu_k=0} \beta_{ik}(x_{jk}-x_{\tilde{a}_ik})
\leq \sum_{k: \mu_k=0} |\beta_{ik}(x_{jk}-x_{\tilde{a}_ik})|
\leq \sum_{k: \mu_k=0} |\beta_{ik}|= |(1-\mu)\odot \beta_i|_1\leq \Delta$,
where the penultimate inequality follows
from $0\leq x_{j'k}\leq 1$,  for all item $j'$ and features $k$.
\end{proof}
%
%




  
\begin{proof} (of Condition 3) 
By the margin property, we have $\tilde{v}_{i\tilde{a}_i}-p_{\tilde{a}_i}\geq \tilde{v}_{ij}-p_j+\Delta$, for any $j\in \tilde{G}$, 
and adding $p_{\tilde{a}_i}\geq p_j-\Delta$ (price variation) we have $\tilde{v}_{i\tilde{z}_i}\geq \tilde{v}_{ij}$, 
and so $\tilde{a}_i$ is the top item for $i$ given revealed features. Given this, we have
$v^H_{i\tilde{a}_i}+\Delta\geq v^H_{ij}$, for all $j\in \tilde{G}$ (top-item value
consistency), which is the 
hidden-value similarity condition of Lemma~\ref{lem:2}.
\end{proof}


\if 0

In particular, suppose we had an 
 {\em approximate row dominance} property for hidden values $v^H$
in regard to admissible allocation $\tilde{a}$, so that for each agent $i\in \tilde{N}$
we have 
\begin{align}
    v^H_{i\tilde{a}_i}+\Delta &\geq v^H_{ij}, \quad \mbox{all $j\in \tilde{G}$}.
\end{align}

Row dominance is sufficient for pointing consistency, since we can add $\tilde{v}_{i\tilde{a}_i}-p_{\tilde{a}_i}\geq \tilde{v}_{ij}-p_j  + \Delta$ (margin) to    $v^H_{i\tilde{a}_i}+\Delta \geq v^H_{ij}$, 
giving $v_{i\tilde{a}_i}-p_{\tilde{a}_i} = \tilde{v}_{i\tilde{a}_i} + v^H_{i\tilde{a}_i} -p_{\tilde{a}_i}
\geq \tilde{v}_{ij} + v^H_{ij} -p_j = v_{ij}-p_j$, for all $j\in \tilde{G}$.

\begin{align}
    (\tilde{v}_{i\tilde{a}_i}\geq \tilde{v}_{ij}) \ \Rightarrow \  (v_{i\tilde{a}_i}^H+\Delta \geq v_{ij}^H),
    \end{align}
     for $i\in \tilde{N}$ and $j\in \tilde{G}, j\neq \tilde{a}_i$. That is, if $i$ prefers its allocated item over another item, then  this holds up to $\Delta$ at the hidden values. 
     Given this, we show that pointing consistency holds for agent $i$.   Writing this for agent 1  that is 
allocated item $A$, we have $|p_A-p_B|\leq \Delta$, and thus $\tilde{v}_{1A}-p_A\geq \tilde{v}_{1B}-p_B+\Delta$ (margin) implies $\tilde{v}_{1A}\geq \tilde{v}_{1B}$. From this, we have $v^H_{1A}+\Delta\geq v^H_{1B}$.
Putting together, we have  $v_{1A}-p_A = \tilde{v}_{1A}-p_A+v_{1A}^H\geq \tilde{v}_{1B}-p_B+\Delta + v^H_{1B}-\Delta=v_{1B}-p_B$, 
where we substitute $\tilde{v}_{1A}-p_A\geq \tilde{v}_{1B}-p_B+\Delta$ 
(margin condition) and $v^H_{1A}\geq v^H_{1B}-\Delta$ (value consistency).

\fi

The second set of conditions, namely Conditions 4 and 5 in the main text, come from considering an 
{\em approximate column dominance property} on hidden valuations. 
Considering a matrix with agents as rows and items as columns, 
column dominance means that 
the agent to which an item is allocated has 
weakly larger value for the item than that of any other 
agent.
%
%
\begin{definition}[Approximate column dominance]
An admissible allocation $\tilde{a}$ with margin $\Delta$
satisfies {\em approximate column dominance} if,
for each item $j\in \tilde{G}$
 and agent $i$ allocated item $j$, 
we have  $v^H_{ij}\geq v^H_{i'j}-\Delta$, for all 
$i'\in \tilde{N}$.
\end{definition}

\begin{lemma}
Admissible allocation $\tilde{a}$ with margin $\Delta$ is restricted optimal if the 
approximate column dominance condition holds. 
\end{lemma}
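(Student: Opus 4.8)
The plan is to prove optimality of $\tilde{a}$ on the restricted economy $(\tilde{G},\tilde{N})$ directly, via an exchange (alternating-cycle) argument, rather than by exhibiting supporting prices as was done for the row-dominance conditions (Conditions 1--3 via Lemma~\ref{lem:2}). First I would record two structural facts: since the admissible allocation $\tilde{a}$ assigns each agent of $\tilde{N}$ a distinct item of $\tilde{G}$, it is a bijection $\tilde{N}\to\tilde{G}$, so $|\tilde{N}|=|\tilde{G}|$; and since all true values are nonnegative, a welfare-optimal allocation on $(\tilde{G},\tilde{N})$ may be taken to be a perfect matching. Hence it suffices to show that $W(\tilde{a})\ge W(b)$ for every perfect matching $b$ of $\tilde{N}$ onto $\tilde{G}$, which by definition gives restricted optimality.

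Given such a $b$, the symmetric difference of $\tilde{a}$ and $b$ decomposes into disjoint alternating cycles, and $W(b)-W(\tilde{a})$ equals the sum of the per-cycle welfare gains; so it is enough to show each cycle has nonpositive gain. I would fix one cycle and index it by agents $i_1,\dots,i_\ell$ and items $j_1,\dots,j_\ell$ with $\tilde{a}_{i_t}=j_t$ (the $\tilde{a}$-edges) and with $b$ assigning $j_{t-1}$ to $i_t$ (indices mod $\ell$, so $j_0=j_\ell$ and $i_0=i_\ell$). The cycle gain is $\sum_t v_{i_t j_{t-1}}-\sum_t v_{i_t j_t}$, which I would split through $v=\tilde{v}+v^H$ and bound the two halves separately.

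For the perceived part, the margin property applied to each agent $i_t$ with the off-diagonal item $j_{t-1}\in\tilde{G}$ gives $\tilde{v}_{i_t j_t}-p_{j_t}\ge \tilde{v}_{i_t j_{t-1}}-p_{j_{t-1}}+\Delta$; summing over the cycle, the price terms cancel (both sums range over the same item set), leaving $\sum_t \tilde{v}_{i_t j_{t-1}}-\sum_t\tilde{v}_{i_t j_t}\le -\ell\Delta$. For the hidden part, approximate column dominance applied to item $j_{t-1}$ (whose allocated agent is $i_{t-1}$) against the competing agent $i_t$ gives $v^H_{i_t j_{t-1}}\le v^H_{i_{t-1} j_{t-1}}+\Delta$; summing and reindexing yields $\sum_t v^H_{i_t j_{t-1}}-\sum_t v^H_{i_t j_t}\le \ell\Delta$. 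Adding the two bounds makes the cycle gain at most $-\ell\Delta+\ell\Delta=0$, so no cycle improves welfare and $\tilde{a}$ is welfare-optimal on $(\tilde{G},\tilde{N})$, i.e.\ restricted optimal.

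The main obstacle will be the bookkeeping in the cyclic indexing: making sure that (i) the margin is invoked only at off-diagonal items $j_{t-1}\ne j_t$ lying in $\tilde{G}$, so the condition is available; (ii) the two price sums genuinely cancel because the cycle visits each of its items exactly once; and (iii) column dominance is applied at the correct allocated-pair $(i_{t-1},j_{t-1})$ with the correct competitor $i_t$. Everything else is a routine summation. An alternative I would keep in reserve is the dual route---constructing item prices and agent utilities certifying that $\tilde{a}$ is a competitive equilibrium at true valuations on the restricted economy and invoking the ``CE is welfare optimal'' theorem---but the cycle argument is more transparent, since approximate column dominance is exactly the statement needed to control the hidden-value swaps along a cycle.
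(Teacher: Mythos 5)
Your proof is correct, but it takes a genuinely different route from the paper's. Both arguments rest on the same decomposition $v=\tilde{v}+v^H$, with the margin controlling the perceived part and approximate column dominance controlling the hidden part so that the two $\Delta$-slacks cancel; the difference is where the cancellation happens. The paper argues globally on the restricted economy: it first shows that the margin makes $\tilde{a}$ beat competing allocations by $|\tilde{G}|\Delta$ in total \emph{perceived} value (by lowering each $\tilde{v}_{i\tilde{a}_i}$ by $\Delta$ and observing that $(\tilde{a},p)$ remains a CE, hence optimal, at the adjusted values), then shows via a transpose-economy argument that approximate column dominance leaves $\tilde{a}$ within $|\tilde{G}|\Delta$ of optimal in total \emph{hidden} value, and adds the two inequalities. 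You instead reduce to perfect matchings, decompose the symmetric difference with an arbitrary competitor into alternating cycles, and cancel the $\pm\ell\Delta$ terms cycle by cycle. Your version is more elementary (no appeal to CE optimality or to the transpose economy), and its bookkeeping is in fact sharper: the paper's first displayed inequality asserts a uniform $|\tilde{G}|\Delta$ advantage over all $a'$, which taken literally fails for $a'$ agreeing with $\tilde{a}$ on some agents (the true slack is $\Delta$ times the number of agents whose assignment changes); this imprecision is harmless there only because the deficit on the hidden side has the same form, and your per-cycle accounting avoids it entirely. What the paper's route buys is modularity, reusing the CE-optimality machinery already set up for the row-dominance conditions. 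The three bookkeeping concerns you flag are all handled correctly in your argument: the two edges at each agent of a symmetric-difference cycle are distinct items of $\tilde{G}$ so the margin applies, each cycle visits each of its items exactly once so the price sums cancel, and column dominance is invoked at the allocated pair $(i_{t-1},j_{t-1})$ against competitor $i_t$ exactly as the definition requires. (Note only that your ``reserve'' alternative---certifying a CE at true valuations---is the mechanism behind Conditions 1--3 via pointing consistency, not the paper's proof of this lemma.)
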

\begin{proof}
  First, given margin $\Delta$ then 
  \begin{align}
  \sum_{i\in \tilde{N}}\sum_{j\in \tilde{G}}\tilde{a}_{ij}\tilde{v}_{ij}
  \geq \sum_{i\in \tilde{N}}\sum_{j\in \tilde{G}}a'_{ij}\tilde{v}_{ij}+|\tilde{G}|\Delta, \quad \mbox{all $a'$,}\label{eq:23}
  \end{align}
since we can reduce $\tilde{v}_{i\tilde{a}_i}$ by $\Delta$ to each agent $i$, leaving
the rest of the perceived values unchanged, and
this item will still be in the demand set of the agent, and thus $(\tilde{a},p)$
would be a CE for these adjusted, perceived values (perceived, not true values).
Thus, the total perceived value
for $\tilde{a}$ is  at least $|\tilde{G}| \cdot \Delta$ better than the
total perceived value 
of the next best
  allocation, considering economy $(\tilde{G},\tilde{N})$.

Second, we argue that approximate column dominance implies that $\tilde{a}$ approximately optimizes
the total hidden value. 
First, suppose we have exact column dominance, 
with   $v^H_{ij}\geq v^H_{i'j}$, for all $i'\in \tilde{N}$, 
 item $j\in \tilde{G}$, 
 and agent $i$ allocated item $j$.
Then, allocation $\tilde{a}$ 
would maximize hidden values.
To see this, consider  the transpose of this assignment problem, so that
agents become items and items become agents. 
This maintains the optimal assignment. 
$\tilde{a}$ is optimal in the transpose economy
 by considering  zero price on each 
agent and items bidding on agents: 
 by column dominance, each agent is 
allocated its most preferred agent.
By  approximate column dominance, we have
 \begin{align}
  \sum_{i\in \tilde{N}}\sum_{j\in \tilde{G}}\tilde{a}_{ij} v^H_{ij}+|\tilde{G}|\Delta &\geq 
  \sum_{i\in \tilde{N}}\sum_{j\in \tilde{G}}a'_{ij}v^H_{ij}, \quad \mbox{all $a'$,}\label{eq:24}
  \end{align}
and $\tilde{a}$ approximately optimizes total hidden value. 
This  follows by  considering the transpose economy, and noting that 
if we increase $v^H_{i\tilde{a}_i}$ by $\Delta$, to each agent $i$, 
leaving the other  hidden values
unchanged, we have exact column dominance and optimality of $\tilde{a}$.
 This means that $\tilde{a}$ is at most $|\tilde{G}| \cdot \Delta$ worse
than any other allocation. 
Combining~\eqref{eq:23}  for perceived values and~\eqref{eq:24} for hidden values, we have
\begin{align}
 \sum_{i\in \tilde{N}}\sum_{j\in \tilde{G}}\tilde{a}_{ij} (\tilde{v}_{ij}+v^H_{ij})+|\tilde{G}|\Delta 
 \geq  \sum_{i\in \tilde{N}}\sum_{j\in \tilde{G}}a'_{ij} (\tilde{v}_{ij}+v^H_{ij})+|\tilde{G}|\Delta, \quad \mbox{all $a'$,}
\end{align}
and thus $\tilde{a}$ is restricted optimal, since $\tilde{v}_ij+v^H_{ij}=v_{ij}$.
\end{proof}

It suffices for approximate column dominance 
that there is  little variation  across agents in their 
hidden value 
for an item,   which is in turn provided by 
the following properties (approximate column dominance is also achieved
by Condition 2).
 %


\begin{proof} (of Condition 4)
When this condition holds, we have
$|v^H_{ij}-v^H_{i'j}|=|\beta^\top_i(1-\mu)\odot x_j - \beta^\top_{i'}(1-\mu)\odot x_j|=
|(\beta_i-\beta_{i'})^\top (1-\mu)\odot x_j| =
\sum_{k:\mu_k=0} |(\beta_{ik}-\beta_{i'k})x_{jk}|\leq \sum_{k:\mu_k=0}
|x_{jk}|= |(1-\mu)\odot x_j|_1\leq \Delta$,
where the penultimate inequality follows from $0\leq \beta_{i''k}\leq 1$, for any $i''$,
any $k$. This establishes that all pairs of agents have similar
hidden value
for any given item, and in particular
approximate column dominance and $v^H_{i'j}-v^H_{ij}\leq \Delta$
for agent $i$ allocated item $j$ in $\tilde{a}$ and any
other agent $i'\in \tilde{N}$.
\end{proof}



\begin{proof} (of Condition 5)
With this,  we have 
$|v^H_{ij}-v^H_{i'j}|=|\beta^\top_i(1-\mu)\odot x_j - \beta^\top_{i'}(1-\mu)\odot x_j|=
|(\beta_i-\beta_{i'})^\top (1-\mu)\odot x_j|=
|\sum_{k:\mu_k=0}(\beta_{ik}-\beta_{i'k}) x_{jk}|\leq \sum_{k:\mu_k=0}|(\beta_{ik}-\beta_{i'k}) x_{jk}|
\leq \sum_{k:\mu_k=0}|\beta_{ik}-\beta_{i'k}|= |(1-\mu)\odot(\beta_i-\beta_{i'})|_1
\leq \Delta$, where the penultimate inequality 
follows from $0\leq x_{jk}\leq 1$ for any item $j$.
This establishes that all pairs of agents have similar
hidden value
for any given item, and in particular
approximate column dominance and $v^H_{i'j}-v^H_{ij}\leq \Delta$
for agent $i$ allocated item $j$ in $\tilde{a}$ and any
other agent $i'\in \tilde{N}$.
\end{proof}


 \if 0
\paragraph{Column dominance.}
The {\em column dominance} property holds for hidden values $v^H$ given admissible allocation $\tilde{a}$ 
when, for  all allocated agents $i\in \tilde{N}$, we have
\begin{align}
v^H_{i\tilde{a}_i}\geq v^H_{i'\tilde{a}_i}, \quad \mbox{all $i\in \tilde{N}$.}
\end{align}

To illustrate, suppose  allocation $(1,A),(2,B),(3,C)$, with $\tilde{N}=\{1,2,3\}$ and $\tilde{G}=\{A,B,C\}$, then 
the following hidden valuations satisfy
 column dominance:
\begin{center}
    \begin{tabular}{cccc}
    &\multicolumn{3}{c}{items}\\
    & $A$ & $B$ & $C$\\
   agent 1 & 12 & 14 & 4\\
   agent 2 &  10 & 16 & 8 \\
   agent 3 & 4 & 8 & 12  
    \end{tabular}
\end{center}

\begin{lemma} Admissible allocation $\tilde{a}$ is restricted optimal if the column dominance condition holds.
\end{lemma}
\begin{proof}
Column dominance implies that  allocation $\tilde{a}$ is  optimal on sub economy $(\tilde{G},\tilde{N})$ 
at valuations $v^H$ (since each item is assigned to its most preferred agent),
 and therefore by also 
considering  that $\tilde{a}$
 is
 optimal  at valuations $\tilde{v}$, the allocation is restricted optimal.

 \end{proof}

In fact, we can consider a relaxed, approximate column dominance property. For some $\Delta\geq 0$, this requires 
\begin{align}
v^H_{i\tilde{a}_i}+\Delta \geq v^H_{i'\tilde{a}_i}, \quad \mbox{all $i\in \tilde{N}$.}
\end{align}

From this, we have that the admissible allocation is within $|\tilde{G}|\Delta$ of optimal at $v^H$ (to see this, notice that if we add $\Delta$ to each diagonal entry then we recover column dominance, where $\tilde{a}$ is optimal, and this 
increases $\tilde{a}$ by at most $|\tilde{G}|\Delta$ relative to the optimal allocation at $v^H$.

 The following property is sufficient for an admissible allocation to satisfy column dominance. \dcp{need to write following algebraically}
 \begin{enumerate}
 \item[3.] {\em Agents have similar, hidden  preferences.} Suppose that $|v^H_{ij}-v^H_{i'j}|\leq \Delta$, for the same $\Delta\geq 0$ that defines the margin of the admissible allocation. i.e., agents have similar values for every item. Then we have approximate column dominance, and thus allocation $\tilde{a}$ is within total value $|\tilde{G}|\Delta$ of the optimal allocation at $v^H$. Moreover, by the margin condition, the admissible allocation has total value  at least $|\tilde{G}|\Delta$  more than that of the second best allocation  at $\tilde{v}$.  Putting this together, 
allocation $\tilde{a}$ is optimal.  

\dcp{need to map this into $\beta$ and $x$ algebra}
 \end{enumerate}
 
 \paragraph{Differencing condition.}

Writing this for the canonical allocation, on agents $\{1,2,3,4\}$ and items $\{A,B,C,D\}$,
 this is the requirement that

\begin{align}
v^H_{1A}-v^H_{1B}\geq \max_{i\neq 1} v^H_{iA}-v^H_{iB}, \quad v^H_{1A}-v^H_{1C}\geq \max_{i\neq 1} v^H_{iA}-v^H_{iC}, \quad v^H_{1A}-v^H_{1D}\geq \max_{i\neq 1} v^H_{iA}-v^H_{iD},\\
v^H_{2B}-v^H_{2A}\geq \max_{i\neq 2} v^H_{iB}-v^H_{iA}, \quad v^H_{2B}-v^H_{2C}\geq \max_{i\neq 2} v^H_{iB}-v^H_{iC}, \quad v^H_{2B}-v^H_{2D}\geq \max_{i\neq 2} v^H_{iB}-v^H_{iD},\\
v^H_{3C}-v^H_{3A}\geq \max_{i\neq 3} v^H_{iC}-v^H_{iA}, \quad v^H_{3C}-v^H_{3B}\geq \max_{i\neq 3} v^H_{iC}-v^H_{iB}, \quad v^H_{3C}-v^H_{3D}\geq \max_{i\neq 3} v^H_{iC}-v^H_{iD},\\
v^H_{4D}-v^H_{4A}\geq \max_{i\neq 4} v^H_{iD}-v^H_{iA}, \quad v^H_{4D}-v^H_{4B}\geq \max_{i\neq 4} v^H_{iD}-v^H_{iB}, \quad v^H_{4D}-v^H_{4C}\geq \max_{i\neq 4} v^H_{iD}-v^H_{iC}.
\end{align}

Some of these inequalities follow for $\tilde{v}$  from the best response conditions, but not all of them. For example, we
have

\begin{align}
v_{1A}-v_{1B}\geq  v_{2A}-v_{2B}, \quad v_{1A}-v_{1C}\geq v_{3A}-v_{3C}, \quad v_{1A}-v_{1D}\geq v_{4A}-v_{4D},
\end{align}

etc \dcp{timing out here}

\fi



\section{Method: Additional Details} \label{apx:method}

Although our approach makes use of prediction, in essence, the problem of finding optimal representations is counterfactual in nature.
This is because choosing a good mask requires anticipating what users \emph{would have chosen} had they made choices under this new mask; these may differ from the choices made in the observed data.
As such, decongestion by representation is a policy problem.
This has two implications: on how data is collected, and on how to predict well.


\subsection{Default policy} \label{apx:default_policy}
To facilitate learning,
we assume that training data is collected under representations determined according to a `default' stochastic masking policy, $\pi_0$.
The degree to which we can expect data to be useful for learning counterfactual masks depends on how informative $\pi_0$ of other representations.
In particular,
if there is sufficient variation in masks generated by $\pi_0$,
then in principle it should be possible to generalize well from $\pi_0$ to a learned masking policy, $\pihat$ (which can be deterministic).
We imagine $\pi_0$ as concentrated around some reasonable default choice of mask, e.g., as elicited from a predictive model, or which includes features estimated to be most informative of user values.
However, $\pi_0$ must include some degree of randomization;
in particular, to enable learning, we require $\pi_0$ to have full support over all masks, i.e., have $P_{\pi_0}(\mask) \ge \epsilon$ for all $\mask$ and for some $\epsilon > 0$.
In our experiments we set $\pi_0$ to have most probability mass concentrated around features coming from a predictive baseline (e.g., \pricepred),
but with some probability mass assigned to other features.

\subsection{Counterfactual prediction} \label{sec:counterfactual}

Representation learning is counterfactual since choices at test time depend on the learned mask. At train time, counterfactuality manifests in predictions: for any given $\mask$ examined during training, our objective must emulate choices $y(\mask)$, which rely on $v$,
via predictions $\yhat(\mask)$, which rely only on observed features $u,X$ and prices $p$.
As such, we must make use of choice data sampled from $\pi_0$ to predict choices to be made under differing $\mask$.
There is extensive literature on learning under distribution shift,
and in principle any method for off-policy learning should be applicable to our case.
One prominent approach relies on \emph{inverse propensity weights}, which weight examples in the predictive learning objective  according to the ratio of train- to test-probabilities, 
\[
w_\pi(\mask) = \frac{\prob{\pi}{\mask}}{\prob{\pi_0}{\mask}}
\]
for all masks $\mask$ in the training data,
which are then used to modify Eq.~\eqref{eq:prediction_objective} into:
\begin{equation}
\label{eq:prediction_objective_weighted}
\fhat = \argmin_{f \in F} \sum\nolimits_{(M,y) \in \smplst} \sum\nolimits_{i \in [n]}
w_\pi(\mask)
\Loss(y_i, f(X,p;u_i,\mask))
\end{equation}
For the default policy,
propensities $\rho=\prob{\pi_0}{\mask}$ are assumed to be collected and accessible as part of the training set.
For the current policy $\pi$, $\prob{\pi}{\mask}$ can be approximated 
from the Multivariate Wallenius' Noncentral Hypergeometric Distribution,
which describes the distribution of sampling without replacement from a non-uniform multinomial distribution.
\extended{(e.g., \citep{fog2008sampling})}.
This makes the predictive objective unbiased with respect to the shifted target distribution,
and as a result, makes Eq.~\eqref{eq:proxy_differentiable} appropriate for the current $\pi$.

In our case, because the shifted distributions are not set a-priori, but rather, are determined by the learned representations themselves, our problem is in fact one of \emph{decision-dependent distribution shift}. Our proposed solution to this is to alternate between: (i) optimizing $f_t$ in Eq.~\eqref{eq:prediction_objective} to predict well for data corresponding to the current mask $\mask_{t-1}$, holding parameters $\theta_{t-1}$ fixed;
and (ii) optimizing $\mask_t$ by updating parameters $\theta_t$ in Eq.~\eqref{eq:proxy_differentiable} for a fixed $f_t$. That is, we alternate between training the predictor on a fixed choice distribution, and optimizing representations for a fixed choice predictor.

Nonetheless, in our experiments we have found that simply training $f$ to predict well on $\pi_0$---without any reweighing or adjustments---obtained good overall performance, despite an observed reduction in the predictive performance of $f$ on counterfactual choices made under the learned $\mask$ (relative to predictive performance on $\pi_0$).



\section{Experimental Details: Synthetic}

Experiments were implemented in Python.
See supplementary material for code. \extended{reffer to github url}

\paragraph{Prices.} \label{sec:app_prices}
For computing CE prices we used the cvxpy convex optimization package to implement Eq.~\eqref{eq:ce_dual}. This give \emph{some} price vector in the core. To interpolate between buyer-optimal and seller-optimal core prices, we adjust Eq.~\eqref{eq:ce_dual} by:
(i) solving the original Eq.~\eqref{eq:ce_dual} to obtain the optimal dual objective value;
(ii) adding a constraint for the objective value to obtain the optimal value; 
and (iii) modifying the current objective to either minimize prices (for buyer-optimal) or maximize prices (for seller-optimal).

\paragraph{Preferences.}
To generate mixture value matrices, we first sample two random item features matrices $X_{(1)},X_{(2)} \in [0,1]^{n \times d}$ with entries sampled independently from the uniform distribution over $[0,1]$.
Next, we generate a fully-heterogeneous value matrix $V_{(1)}=V_{\text{het}}$, and a fully-homogeneous matrix $V_{(2)} = V_{\text{hom}}$.
The heterogeneous matrix is constructed by taking the preference vector
$(m,m-1,\dots,1)$, normalized to $[0,1]$,
and creating a circulant matrix, so that user $i$ most prefers item $i$, and then preferences decreasing in items with increasing indices (modulo $m$).
The homogeneous matrix is constructed by assigning the same preference vector to all users.\footnote{We also experimented with adding noise to each $V_{(i)}$ (small enough to retain preferences), but did not observe this to have any significant impact on results.}
Finally, to obtain the corresponding $B_{(i)}$, we solve for the convex objective
$\min_{B \ge 0} \|B X^\top - V_{(i)}\|_2$,
and 
for $\alpha \in [0,1]$,
set $B_\alpha = (1-\alpha) B_{\text{het}} + \alpha B_{\text{hom}}$ and
$X=X_{(1)}+X_{(2)}$,
which gives the desired $V_\alpha = B_\alpha X^\top$.

\paragraph{Optimization.}
Because we consider small $n,m,d$, and because as designers of the experiment we have access to $v$, in this experiment we are able to compute measures that rely on $v$.
In particular, by enumerating over all $d$-choose-$k$ possible masks,
we are able to exactly optimize the considered objectives,
compute the welfare oracle upper bound,
and obtain all optimal solutions in case of ties (as in the case of the decongestion objective).

\section{Experimental Details: Real Data} \label{sec:appendix_real_data_detailes}


\subsection{Data generation}

\paragraph{Data and preprocessing.}
The Movielens 100k dataset is available at 
\url{https://grouplens.org/datasets/movielens/100k/}.
NMF on partial rating matrix was done by \emph{surprise}\footnote{\url{https://surpriselib.com/}} python package 
\extended{cite surpriselib paper (see website)}
For Movielens, as rating vlues range from 1 to 5, we normalize then into $[0,1]$ by dividing the user preferences matrix $B$ by a factor of 5.

\paragraph{Prices.}
CE prices $p^*$ were computed by solving the dual LP in Eq.~\eqref{eq:ce_dual},
similarly to the synthetic experiments.
For varying prices between buyer-optimal ($p_{\mathrm{buyer}}$) 
and seller-optimal ($p_{\mathrm{seller}}$) CE prices,
we interpolate between
$p_{\mathrm{buyer}}$ and $p^*$ for $\gamma \in [0,0.5]$,
and between $p^*$ and 
$p_{\mathrm{seller}}$ for $\gamma \in (0.5,1]$,
this since interpolating directly between $p_{\mathrm{buyer}}$ and $p_{\mathrm{seller}}$ is prone to exhibiting many within-user ties as an artifact,
and since $p^*$ is often very close to the average price point
$(p_{\mathrm{buyer}}+p_{\mathrm{seller}})/2$.


\paragraph{Default masking policy.}
As discussed in Appendix~\ref{apx:default_policy}, our method requires training data to be based on masks generated from a default masking policy, $\pi_0$.
We defined $\pi_0$ to be concentrated around the features selected by the \pricepred\ predictive baseline, but ensure all features have strictly positive probability.
In particular, let $\mask_0$ be the mask including the set of $k$ features as chosen by \pricepred.
Then $\theta$ for $\pi_0$ is constructed as follows:
first, we assign $\theta_i = 1$ for all $i \notin \mask_0$; 
then, we assign $\theta_i=3$ for all $i \in \mask_0$;
finally, we normalize $\theta$ using a softmax with temperature 0.05, 
this resulting in a distribution over features that strictly positive everywhere but at the same time tightly concentrated around $\mask_0$, 
and in a way which depends on $k$ (since different $k$ lead to different normalizations).
An example $\pi_0$ is shown in Fig~\ref{fig:choce_pred+pi0} (left).


\begin{figure}[t!]
\centering
\includegraphics[width=0.8\linewidth]{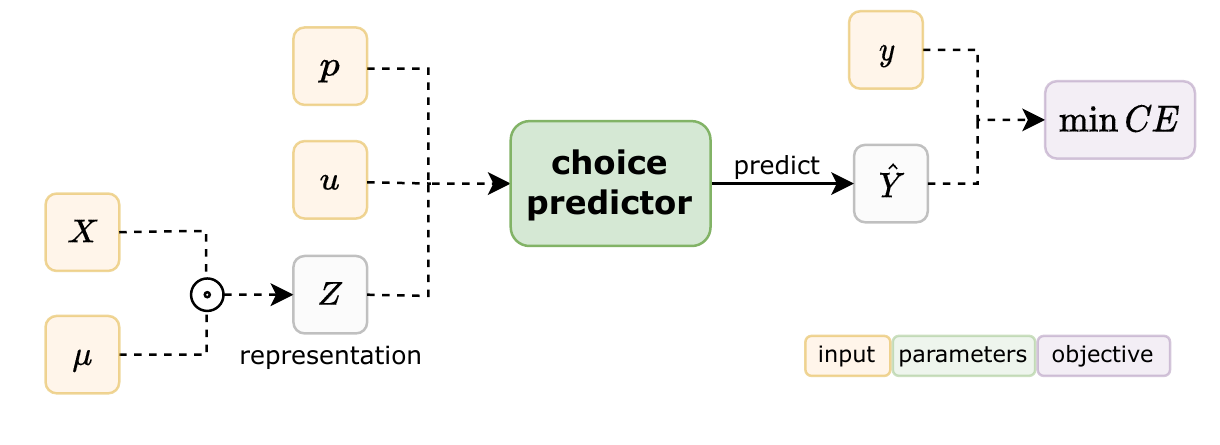}
\caption{A schematic illustration of our choice prediction model.}
\label{fig: choice prediction}
\end{figure}

\subsection{Our framework}

\paragraph{Choice prediction.}
The choice prediction model $f$ is trained to predict choices (including null choices) from training data.
For the class of predictors $F=\{f\}$,
we use item-wise score-based bilinear classifiers parameterized by $W \in \R^{d \times d'}$, 
namely:
\[
f_W(X,p;u,\mask) = \argmax_{x \in X} u^\top W (x\odot\mask) - p
\]
There are implemented as a single dense linear layer,
and for training, the argmax is replaced with a differentiable softmax.
We found learning to be well-behaved even under low softmax temperatures, and hence use $\tau_f = 5\mathrm{e}{-4}$ throughout.
For training we used cross entropy loss as the objective.
For optimization we Adam for 150 epochs, with learning rate of $1\mathrm{e}{-3}$ and batch size of $20$.
See Figure~\ref{fig: choice prediction} for a schematic illustration.
Training data used to train $f$ includes user choices $y$ made on the basis masks $\mask$ sampled from the default policy, $\mask \sim \pi_0$. Nonetheless, as described in \ref{sec:counterfactual},
recall that we would like $f$ to predict well on the final learned mask $\mask$, but also on other masks encountered during training, and more broadly---on any possible mask.
Figure \ref{fig:choce_pred+pi0} (center+right) shows, for $d=12$ and $d=100$ and as a function of $k$, the accuracy of $f$ on (i) data representative of the training distribution (i.e., masks sampled from $\pi_0$), and (ii) data which includes masks sampled uniformly at random from the set of all possible $k$-sized masks.
As can be seen, across all $k$, performance on arbitrary masks 
closely matches in-distribution performance for $d=12$,
and remains relatively high for $d=100$
(vs. random performance at 5\% for $m=20$).

\paragraph{Representation learning.}
The full-framework model consists of a Gumbel-top-$k$ layer,
applied on top of a `frozen' choice prediction model $f$,
pre-trained as described above.
The Gumbel-top-$k$ layer has $d$ trainable parameters $\theta \in \Theta = \R^d$;
once passed through an additional softmax layer, this constitutes a distribution over features.
As described in the main paper, given this distribution, we generate random masks by independently sampling $k$ features $i \sim \theta_i$ without replacement (and re-normalizing $\theta$).
However, to ensure our framework is differentiable,
we use a \emph{relaxed-top-$k$} procedure for generating `soft' $k$-sized masks,
and for each batch, we sample in this way $N$ soft masks, for which we adopt the procedure of \citep{xie2019subsets}.

Given a sampled batch of masks $\{\mask\}$, these are then plugged in to the prediction model $f$ to obtain $\yhat(\mask)$, and finally our proxy-loss $-\tilde{W}$ is computed.
Optimization was carried out using the Adam optimizer for 300 epochs
(at which learning converged for most cases) and
with a learning rate of $1\mathrm{e}{-2}$.
We set $N=20$, and use temperatures
$\tau_{\text{Gumbel}}=2$ for the Gumbel softmax, 
$\tau_{\text{top-$k$}}=0.2$ for the relaxed top-$k$,
and $\tau_f=0.01$ for the softmax in the pre-trained predictive model $f$.
Since the selection of the top-$k$ features admits several relaxations,
for larger $k>d/2$, we have found it useful to instead consider $k \gets d-k$ in learning, and then correspondingly use `inverted' masks $\mask \gets 1-\mask$. 

\begin{figure}
\centering
\includegraphics[width=0.32\linewidth]{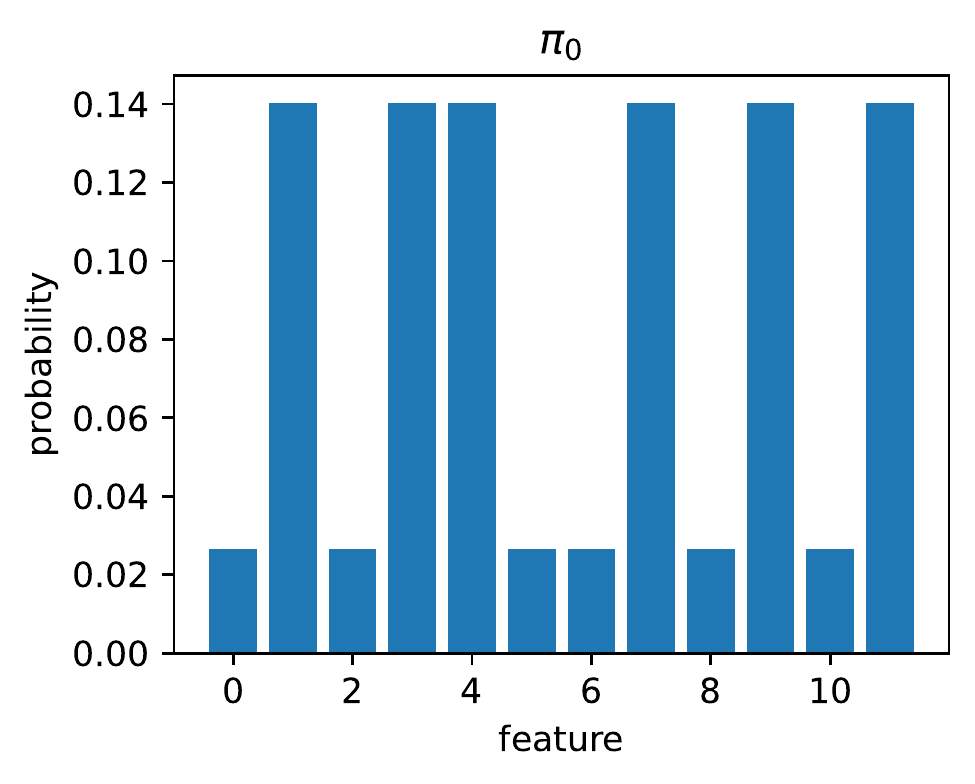}
\includegraphics[width=0.32\linewidth]{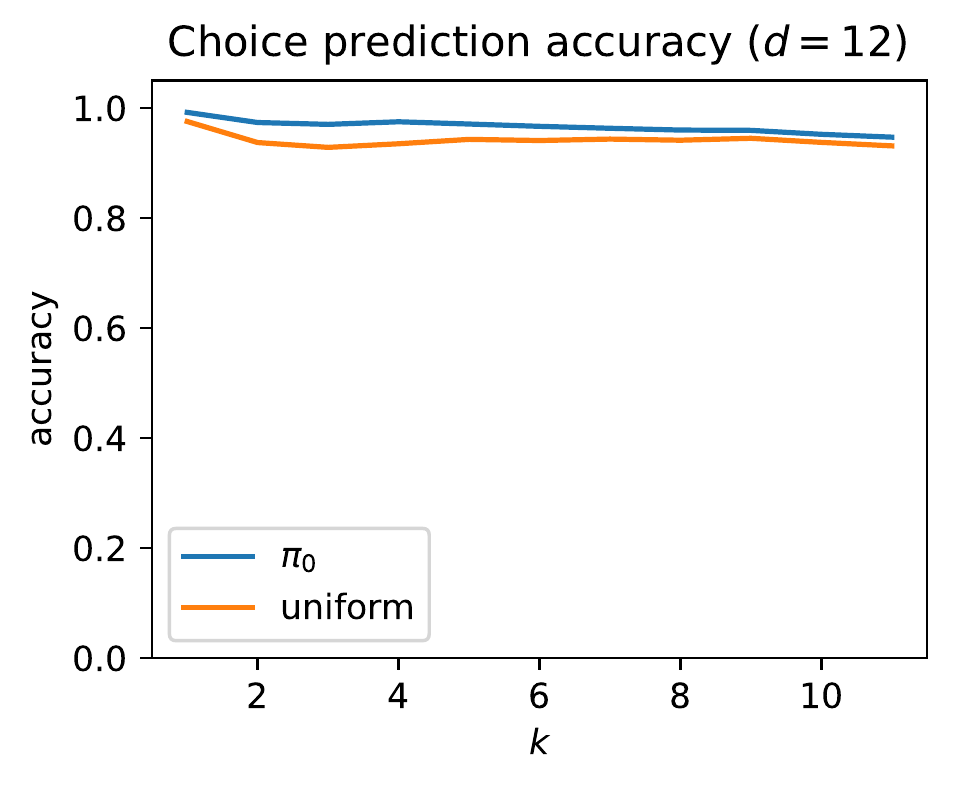}
\includegraphics[width=0.32\linewidth]{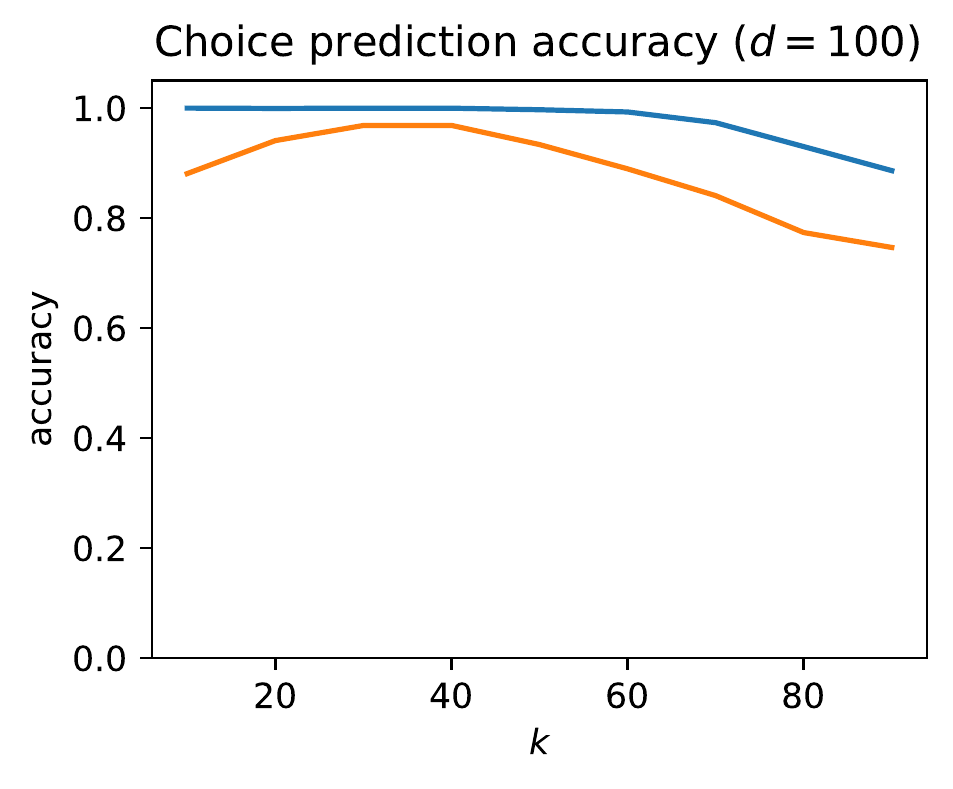}
\caption{
\textbf{(Left):}
An example for default policy $\pi_0$ ($d=12, k=6$). Each feature (in x-axis) is assigned with a probability (y-axis) to be drawn from the categorical distribution. High probability is assigned to predictive features from \pricepred.
\textbf{(Center+right):}
Accuracy of our choice predictor $f$, for Movielens with $d=12$ (center) and $d=100$ (right).
}
\label{fig:choce_pred+pi0}
\end{figure}

\paragraph{Variants.}
As noted, we evaluate three variants of our approach that differ in their usage at test-time:
\begin{itemize}
\item \DbRtopk:
Constructs a mask from the top-$k$ entries in the learned $\thetahat$.

\item \DbRmask:
A heuristic for choosing a mask on the basis of training data.
Here we sample 20 masks $\hat{\mask}$ according to the multinomial distribution defined by the learned $\thetahat$, and commit to the sampled mask obtaining the lowest value on the proxy objective.

\item \DbRpolicy:
Emulates using $\thetahat$ as a masking policy $\pihat=\pi_\thetahat$.
Here we sample 50 masks $\mask \sim \pihat$,
evaluate for each sampled mask its performance on the entire test set,
and average.

\end{itemize}

\subsection{Baselines}

\begin{itemize}
\item Price predictive (\pricepred):
Selects the $k$ most informative features for the regression task of predicting the price of items, based on item its features.
Data includes features and prices for all items that appear in the dataset (recall markets include the same set of items, but items can be priced differently per market).
We use the Lasso path (implementation by \emph{scikit-learn}\footnote{\url{https://scikit-learn.org/stable/modules/generated/sklearn.linear_model.lasso_path}}) to order features in terms of their importance for prediction,
and take as a mask the top $k$ features in that order.

\item Choice predictive (\choicepred):
Selects the $k$ most informative features for the classification task of predicting user choices from user and item features.
For this baseline we use the predictive model $f$,
where we interpret learned weights $W=\hat{T}$ as an estimated of the true underlying mapping $T$ between user features $u$ and (unobserved) preferences $\beta$.
Inferred parameters $\hat{T}$ are then used to obtain
estimated preferences per user via $\hat{\beta} = u\hat{T}$.
We then average preferences over users, to obtain preferences representative of an `average' user,
and from which we take the top $k$-features,
we we interpret as accounting for the largest proportion of value.

\item Random (\random): Here we report performance averaged over 100 random  masks sampled uniformly from the set of all $k$-sized masks.
\end{itemize}

\subsection{Implementation}
\paragraph{Code.}
All code is written in python.
All methods and baselines are implemented and trained with Tensorflow\footnote{\url{https://www.tensorflow.org/}} 2.11  and using Keras.
CE prices were computed using the convex programming package cvxpy\footnote{\url{https://www.cvxpy.org/}}.

\paragraph{Hardware.}
All experiments were run on a Linux machine wih AMD EPYC 7713 64-Core processors. For speedup runs were parallelized each across 4 CPUs.

\paragraph{Runtime.}
Runtime for a single experimental instance of the entire pipeline 
was clocked as:
\begin{itemize}
    \item $\approx6.5$ minutes for the $d=12$ setting
    \item $\approx13.5$ minutes for the $d=100$ setting
\end{itemize}
Data creation was employed once at the onset.

\section{Additional experimental results: synthetic data}  \label{apx:synth-additional}

\begin{figure}[t!]
    \centering
    \includegraphics[width=\linewidth]{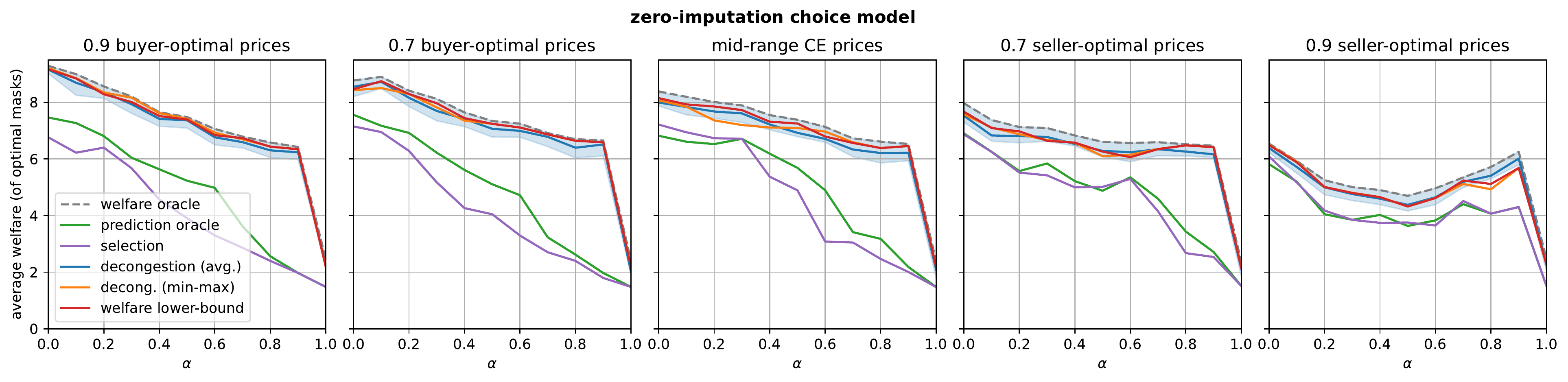} 
    \\
    \includegraphics[width=\linewidth]{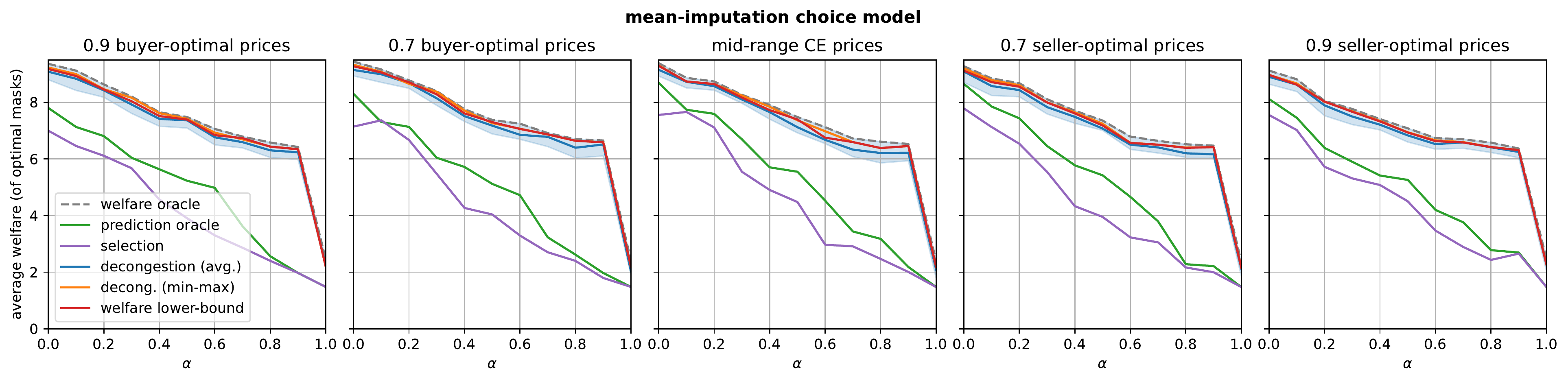} 
    \caption{
    A replication of results on synthetic data (Sec.~\ref{sec:exp_synth}) on an alternative choice model based on mean-imputed values (top), in comparison to the zero-imputed choice model studied in the main paper (bottom).
    Evaluation is also extended to additional CE price schemes.
    }
    \label{fig:zero+mean_impute}
\end{figure}

\subsection{Mean-imputation choice model} \label{apx:synth-additional}
In this section we replicate our main synthetic experiment in Sec.~\ref{sec:exp_synth} on a different user choice model.
In the main part of the paper, we model users as contending with the partial information depicted in representations by assuming that unobserved features do not contribute towards the item's value.

In particular, here we consider users who replace masked features with \emph{mean-imputed values}:
for example, if some feature $\ell$ is masked,
then features $x_{j \ell}$ are replaced with the `average' feature,
$\bar{x}_\ell = \frac{1}{m} \sum_{j'} x_{j' \ell}$,
computed over and assigned to all market items $j$.
This is in contrast to the choice model defined in (see Sec.~\ref{sec:setup}) which relies on zero-imputed values.
The main difference is that with mean imputation,
(i) perceived values can also be \emph{higher} than true values (e.g., if $\bar{x}_\ell > x_{j \ell}$ for some item $j$);
and (ii) our proxy welfare objective in Eq.~\eqref{eq:welfare_lower_bound} is no longer a lower bound on true welfare.
Nonetheless, we conjecture that if mean-imputed perceived values do not dramatically distort inherent true values, then proxy welfare can still be expected to perform well as an approximation of true welfare.
\looseness=-1

Figure~\ref{fig:zero+mean_impute} shows performance for all methods considered in Sec.~\ref{sec:exp_synth} on mean-imputed choice behavior,
for increasing $k$ and for a range of possible CE prices.
For comparison we also include results for our main zero-imputed choice model (mid-range CE prices are used in Sec.~\ref{sec:exp_synth}).
As can be seen, our approach retains performance for mean-imputed choices across all considered pricing schemes. Whereas for zero-imputed choices overall welfare decreases when prices are higher (likely since higher prices increase null choices),
mean-imputed choices exhibit a similar degree of welfare regardless of the particular price range.



\begin{figure}[t!]
    \centering
    \includegraphics[width=0.35\linewidth]{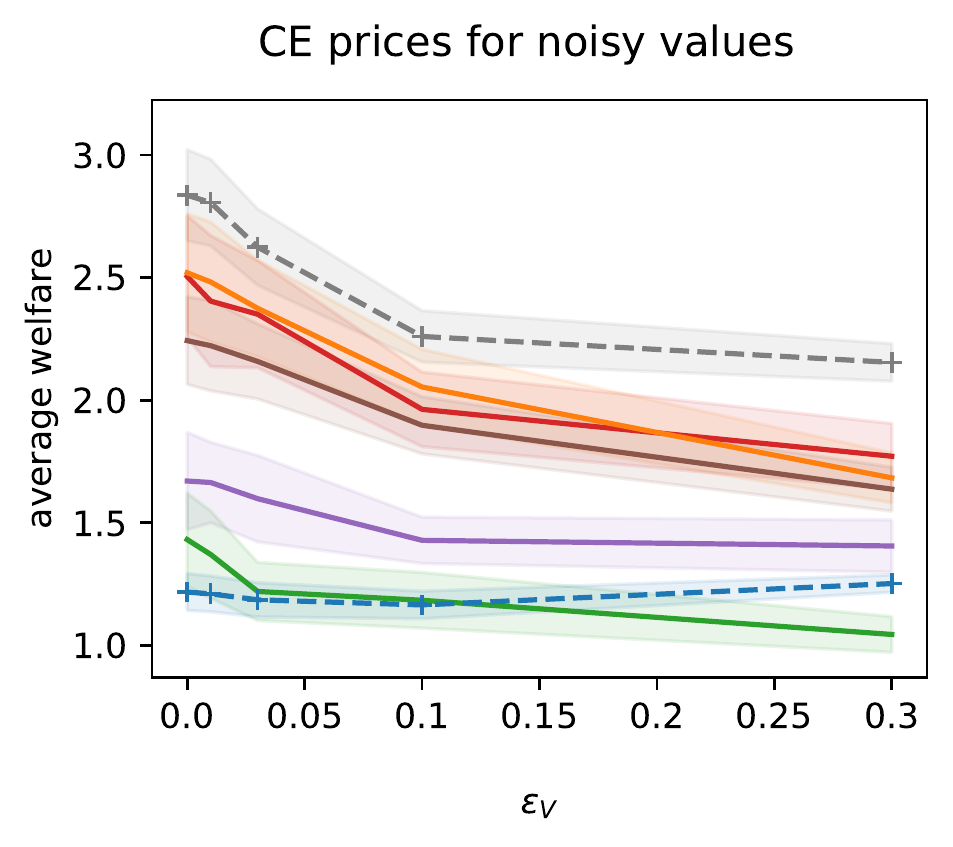} 
    \includegraphics[width=0.355\linewidth]{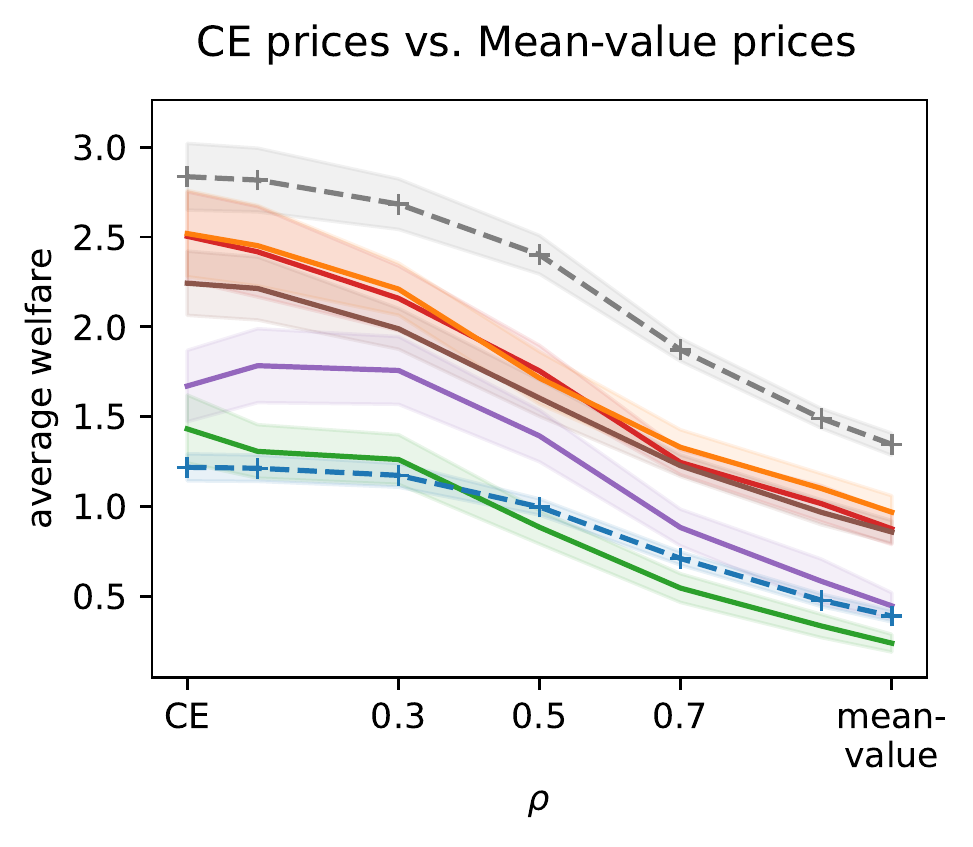}     \includegraphics[width=0.18\linewidth]{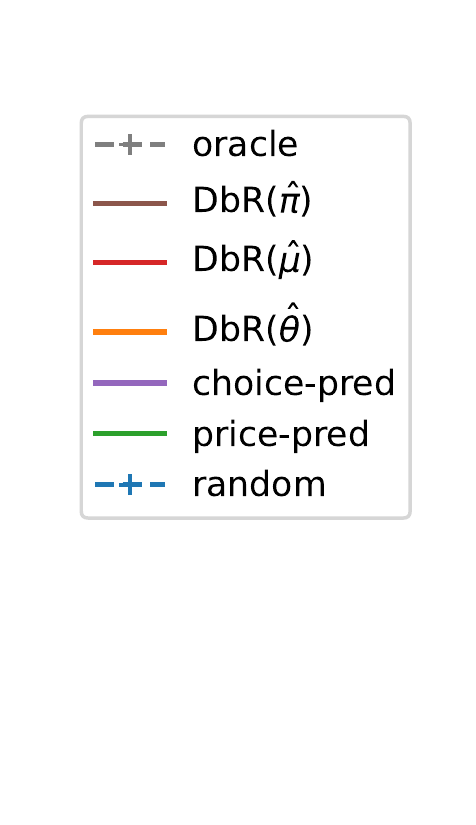}
    \caption{Other prices schemes for real data experiments.
    \textbf{(Left:)} Welfare (absolute) obtained for CE prices computed on noisy valuations $v+\epsilon_v$ for increasing additive noise $\epsilon_v$. 
    \textbf{(Right:)} Welfare (absolute) obtained for prices that interpolate between CE prices and heuristic (non-CE) prices set to average user values.} 
    \label{fig:additional_prices_plots}
\end{figure}

\begin{figure}[b!]
    \centering
    \includegraphics[width=0.38\linewidth]{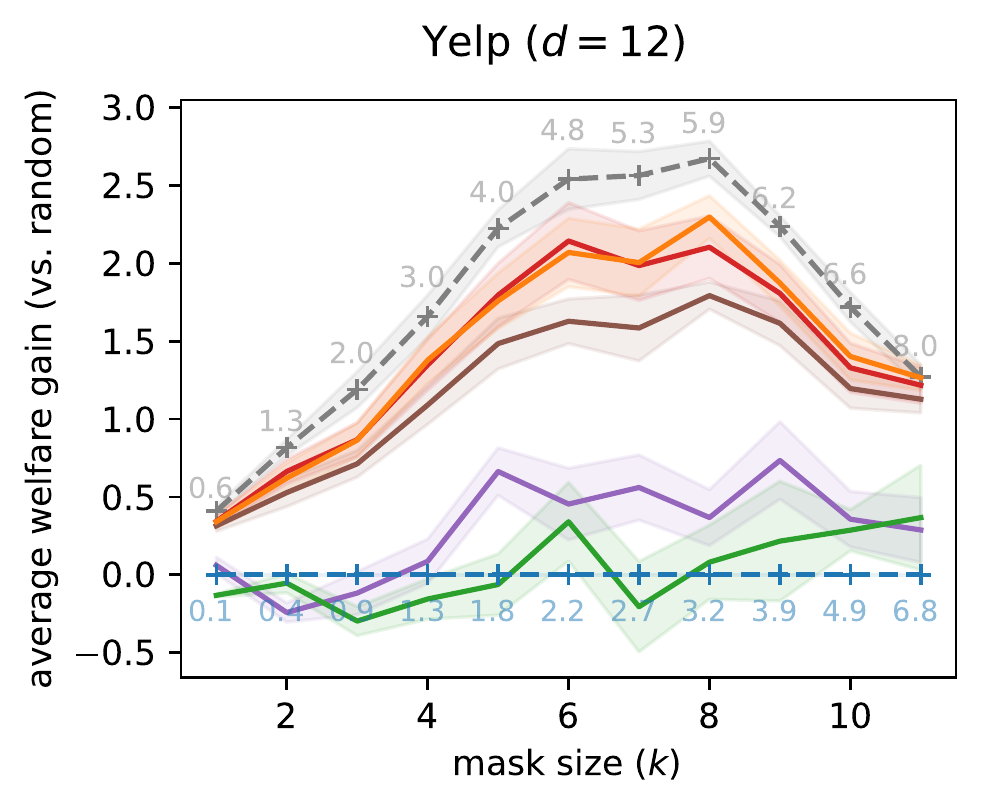} 
    \includegraphics[width=0.5\linewidth]{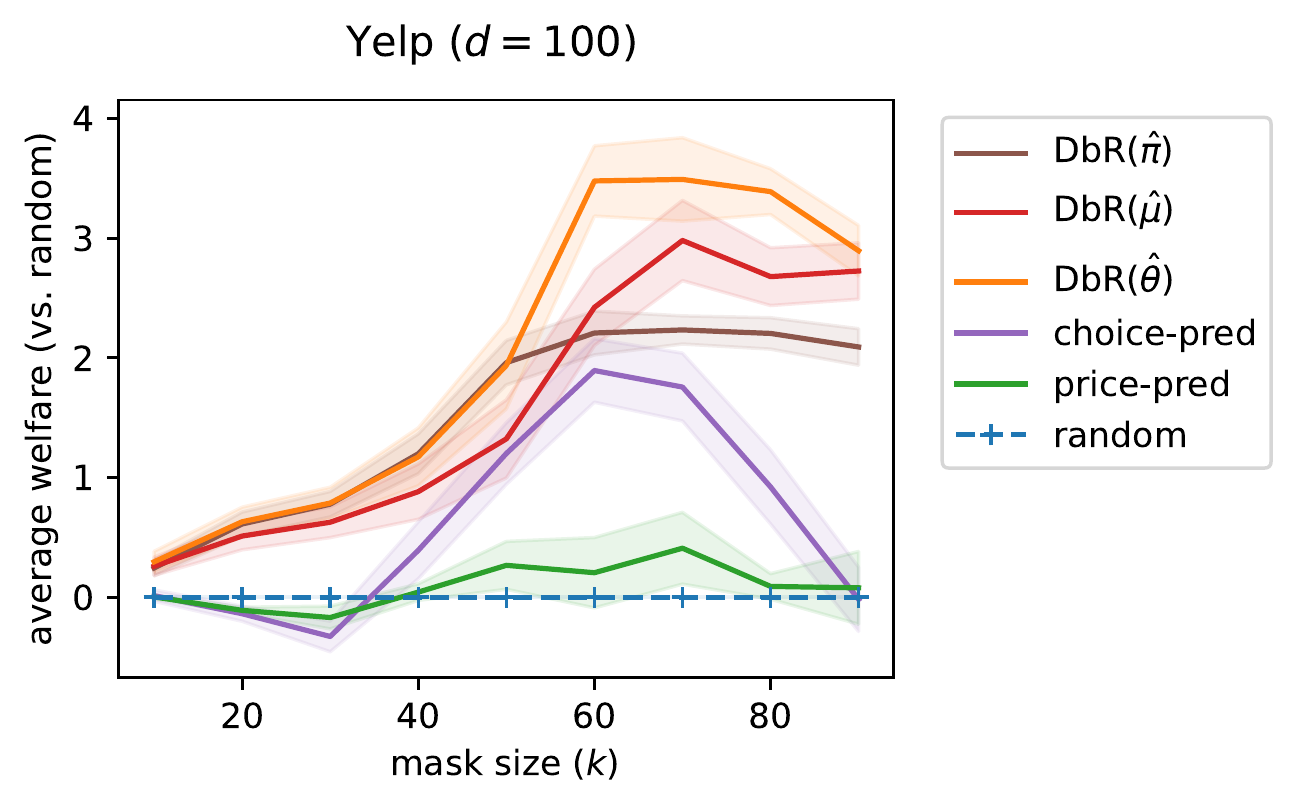} \caption{
    A replication of our main experiment on the Yelp restaurant reviews dataset.
    \looseness=-1
    }
    \label{fig:yelp}
\end{figure}

\section{Additional experimental results: real data}  \label{apx:real-additional}

\subsection{Additional dataset - Yelp restaurants} \label{apx:real-yelp}
Here we present a replication of our main experiment in 
and an additional dataset---the restaurants portion of the Yelp! reviews dataset, which is publicly-available.\footnote{\url{https://www.kaggle.com/datasets/yelp-dataset/yelp-dataset}}
We use the same preprocessing procedure and experimental setup as for MovieLens (see Appendix~\ref{sec:appendix_real_data_detailes}),
but also filter to keep restaurants that received at least 20 reviews, and users that gave at least 20 reviews.
Figure~\ref{fig:yelp} shows results. 
As can be seen,
general trends are qualitatively similar to the those observed for the Movielens dataset (Figure~\ref{fig:real} in the main paper).

\subsection{Pricing schemes} \label{apx:real-pricing_schemes}
Here we present robustness results for additional pricing schemes, which complement our results from Sec.~\ref{sec:prices_experiments}.
In particular, we examined performance for:
\begin{itemize}
\item 
Prices set by solving Eq.~\eqref{eq:ce_dual}, but for noisy valuations $v+\epsilon_v$, for increasing levels of noise $\epsilon$.
These simulate a setting where prices are CE, but for the `wrong' valuations, $p^*(v+\epsilon_v)$.
Results are shown in Figure~\ref{fig:additional_prices_plots} (left).

\item
Prices that interpolate from mid-range CE prices (as in the main experiments) to heuristically-set, non-CE prices. Specifically, here we use prices based on average values assigned by users to items. 
Results are shown in Figure~\ref{fig:additional_prices_plots} (right).
\end{itemize}

Overall, as in the main paper, moving away from CE prices causes a reduction in potential welfare, and in the performance of all methods.
Results here demonstrate that in the above additional pricing settings, our approach is still robust in that it maintains it's relative performance compared to baselines and the welfare oracle.



\subsection{The importance of $\lambda$} \label{sec:appendix_lambda}
In principle, and due to the counterfactual nature of learning representations (see Appendix~\ref{apx:method},
tuning $\lambda$ requires experimentation, i.e., deploying a learned masking model $\hat{\mask}$ trained on data using some $\lambda$,
to be evaluated on other candidate $\lambda'$.
Nonetheless, in our experiments we observe that learning is fairly robust to the choice of $\lambda$, even if kept constant throughout training.
Figure~\ref{fig:other_real} (bottom-right) shows welfare (normalized) obtained for a different $\lambda$ on Movielens using $d=12$.
As can be seen, any $\lambda>0.5$ works well and on par with our heuristic choice of $\lambda=1-k/2d$, used in Sec.~\ref{sec:experiments}.


\subsection{No-choice penalty}
\label{appendix: no-choice penalty}
One empirical observation that came up during experimentation was that the optimization of our proposed differential welfare proxy occasionally converged to a degenerate solution in which all users choose the null option. To circumvent this, we added to the objective a penalty term that discourages the outcome in which all users choose the null item (see end of Sec.~\ref{sec:method}). This proved useful in steering the optimization trajectory away from such undesired local optima, which trivially implies no congestion but for the “wrong” reasons, and is by definition sub-optimal (in terms of both the proxy objective value and actual welfare).
A possible concern that could arise from using this penalty would be that it could inadvertently coerce users to always choose some (non-null) item, which is undesirable.
However, Figure \ref{fig:null_choices} (Left) demonstrates that this is not the case: user do indeed choose the null item even when the penalty is present in the objective.
Furthermore, 
and as ban be expected,
such non-choices become even more frequent when prices are higher.

\begin{figure}[t!]
    \centering
    \includegraphics[width=0.38\linewidth]{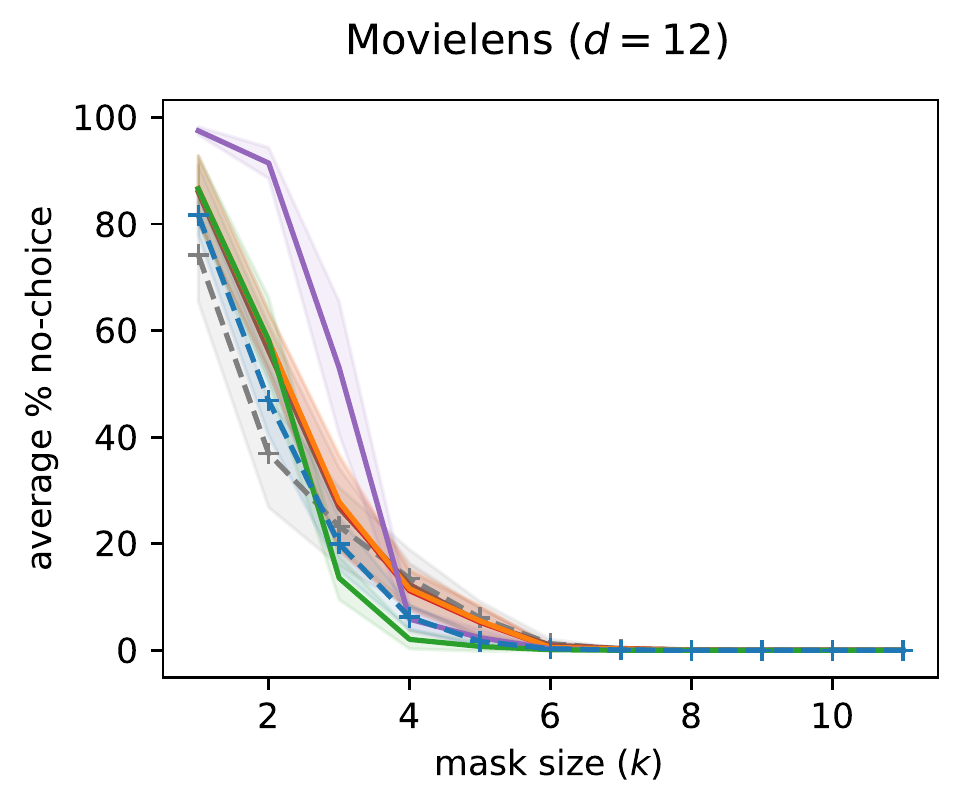} 
    \includegraphics[width=0.5\linewidth]{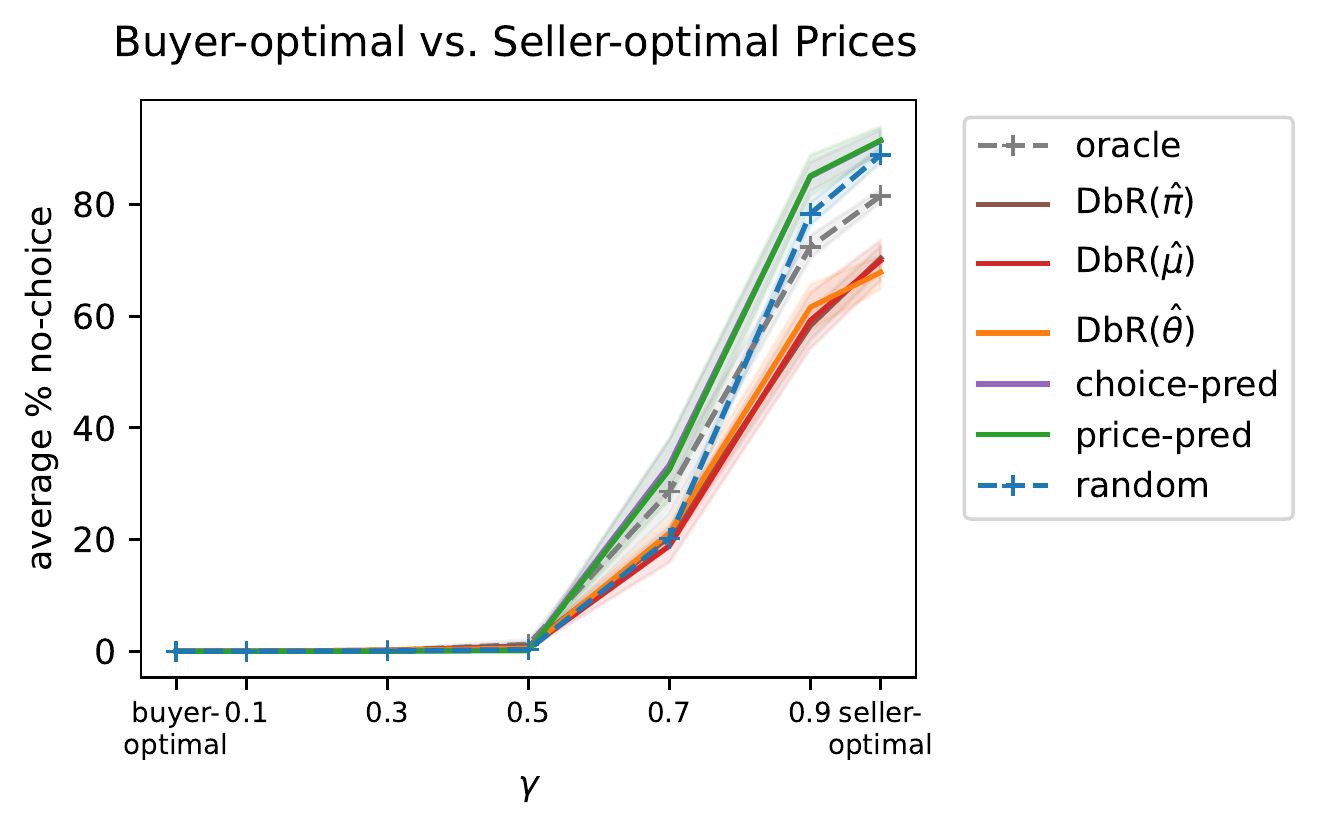}
    \caption{
    Null-item choices (equivalent to no-choice).
    \textbf{(Left:)} Main experiment on $d=12$.
    \textbf{(Right:)} Prices scheme in which CE prices ranges from buyer-optimal (minimal) to seller-optimal (maximal).
    }
    \label{fig:null_choices}
\end{figure}

\subsection{Higher-resolution results for Movielens}
\label{appendix:movielens-extra}
When examining Fig.~\ref{fig:real} from Sec.~\ref{sec:exp_real}, which shows results for the Movielens dataset,
it may seem as if there is a qualitative difference between outcomes for $d=12$ and $d=100$:
whereas for $d=12$ the improvement of the \DbR\ methods in terms of welfare (relative to random) seems to increase with $k$ and then decrease, for $d=100$, it appears to be only increasing.
This, however, is an artifact of the range of values of $k$ considered in each experiment; 
Clearly, for any $d$, performance cannot only increase in $k$, since for $k=d$ performance for all methods is the same, and so the relative gain vs. the random baseline is always zero (as it is also for $k=0$).
Hence, and whereas for $d=12$ performance peaks at around $k=9$, the optimal point for $d=100$ (again, in terms of relative welfare gain) is for some $k \in [90,100]$.

To validate this, we evaluated performance on a tighter grid of values for large $k$,
and in particular for $k \in \{91,92,\dots,99\}$.
Results are shown in Fig.~\ref{fig:movielens-extra}, together with all previous $k$. As expected, for $d=100$ relative welfare gains do indeed increase first and then decrease, with the maximum attained at around $k=96$.

\begin{figure}[t!]
    \centering
    \includegraphics[width=0.55\linewidth]{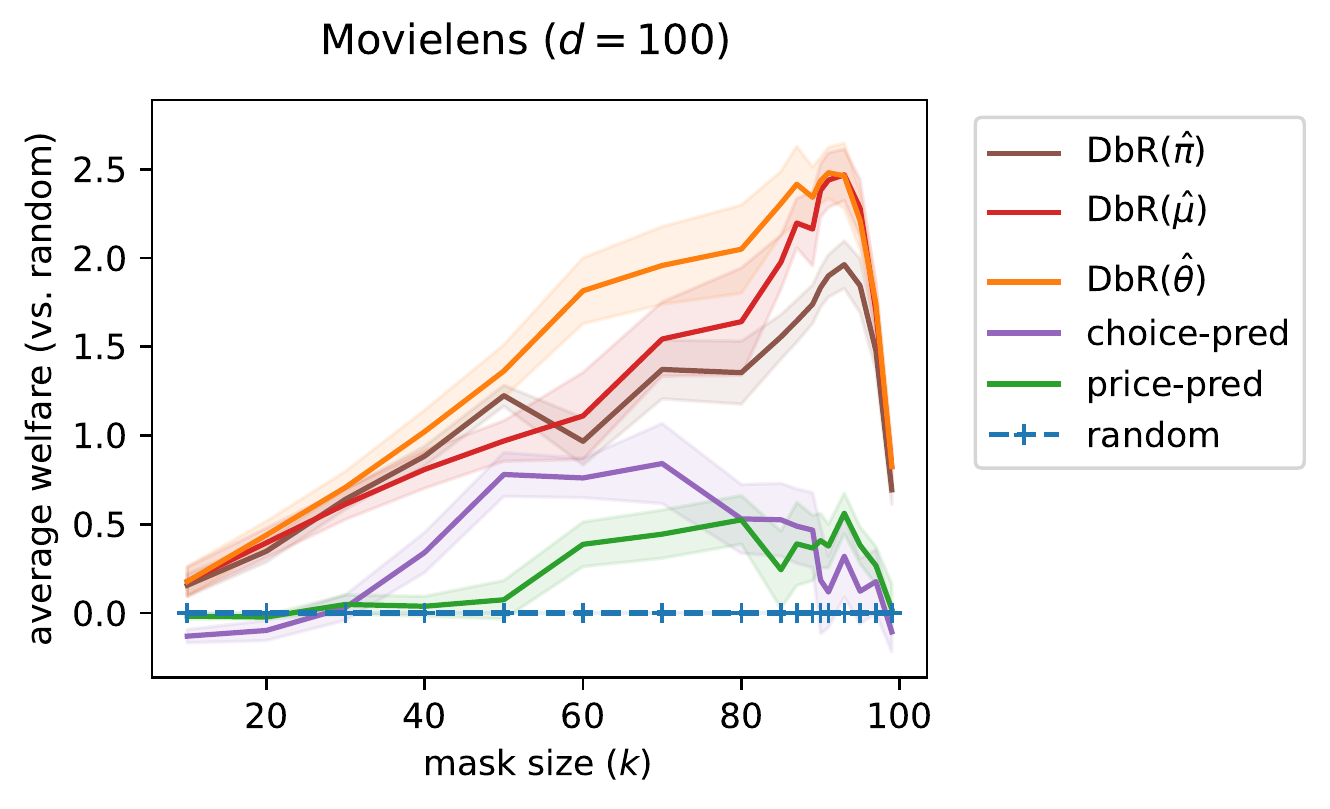} 
    \caption{
    Results on Movielens with $d=100$ for additional values of $k>90$.
    }
    \label{fig:movielens-extra}
\end{figure}

\subsection{Relative and absolute performance}
In Sec.~\ref{sec:experiments}, for our experiments which vary $k$,
we chose to portray results normalized from below to match random performance (\random). This was mainly since the overall effect on performance of increasing $k$ is larger than that which can be obtained by any method (i.e., the gap between \random\ and \oracle).
For completeness, Figure~\ref{fig:other_real} (top row)
shows unormalized results, which show in absolute terms how overall performance increases for $k$.
Figure~\ref{fig:other_real} (bottom-left) shows results normalized from both below (matching \random) and above (matching \oracle);
as can be seen, our approach obtains fairly constant relative performance across $k$.
For completeness, Figure~\ref{fig:num_unique} shows in more detail the number of allocated items for the $d=12$ setting.

\begin{figure}[b!]
    \centering
    \includegraphics[width=0.43\linewidth]{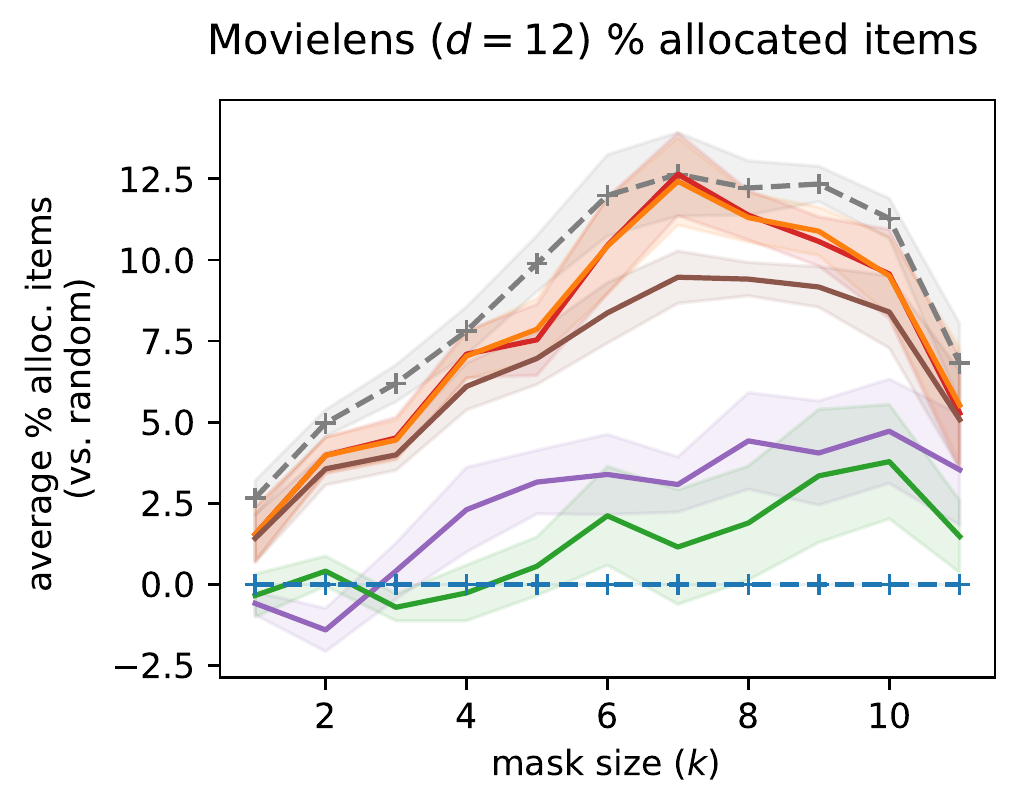} 
    \caption{
    Number of unique items (enlarged version of inlay in Fig.~\ref{fig:real} (left)).
    }
    \label{fig:num_unique}
\end{figure}

\begin{figure}[p!]
    \centering
    \qquad
    \includegraphics[width=0.37\linewidth]{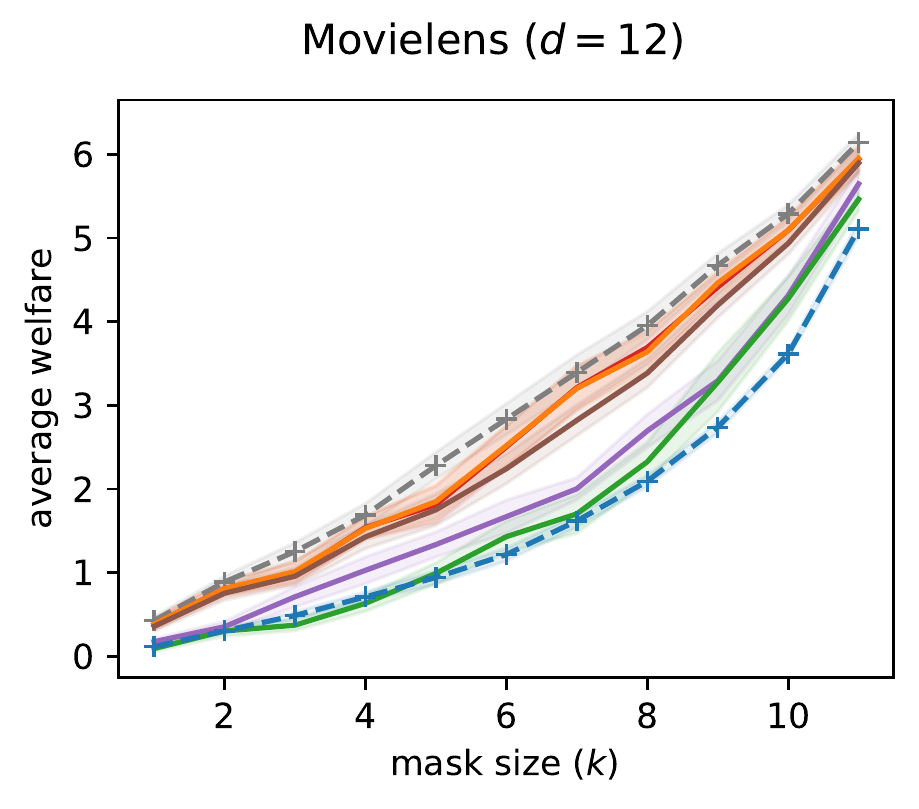} 
    \includegraphics[width=0.37\linewidth]{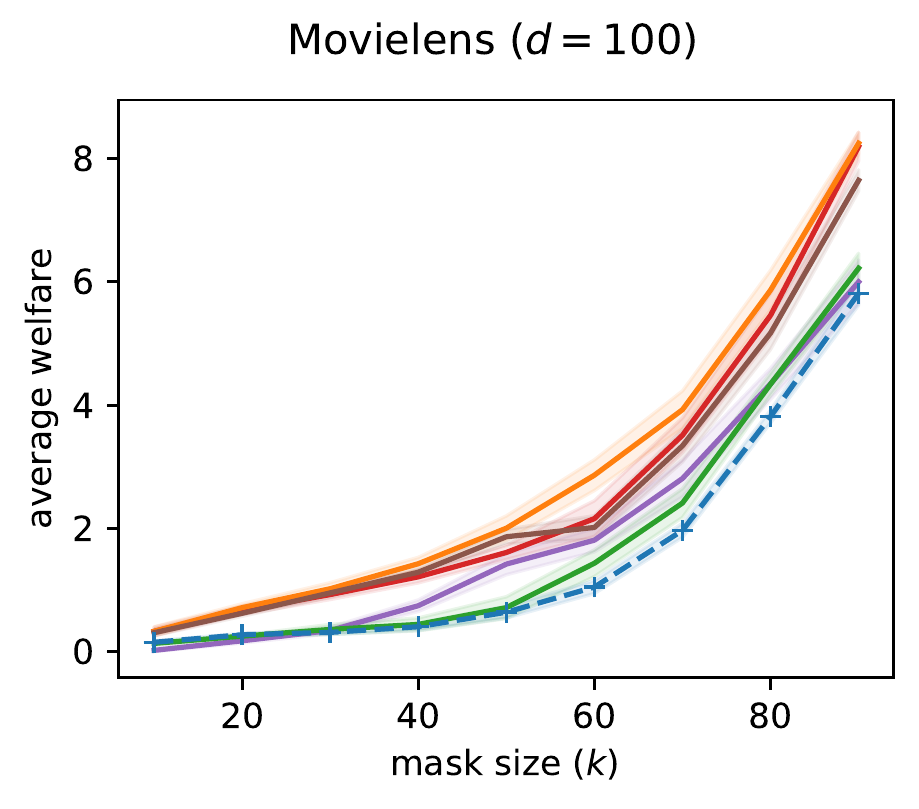}     
    \includegraphics[width=0.18\linewidth]{graphics/real_data/legend.pdf}
    \\
    \includegraphics[width=0.41\linewidth]{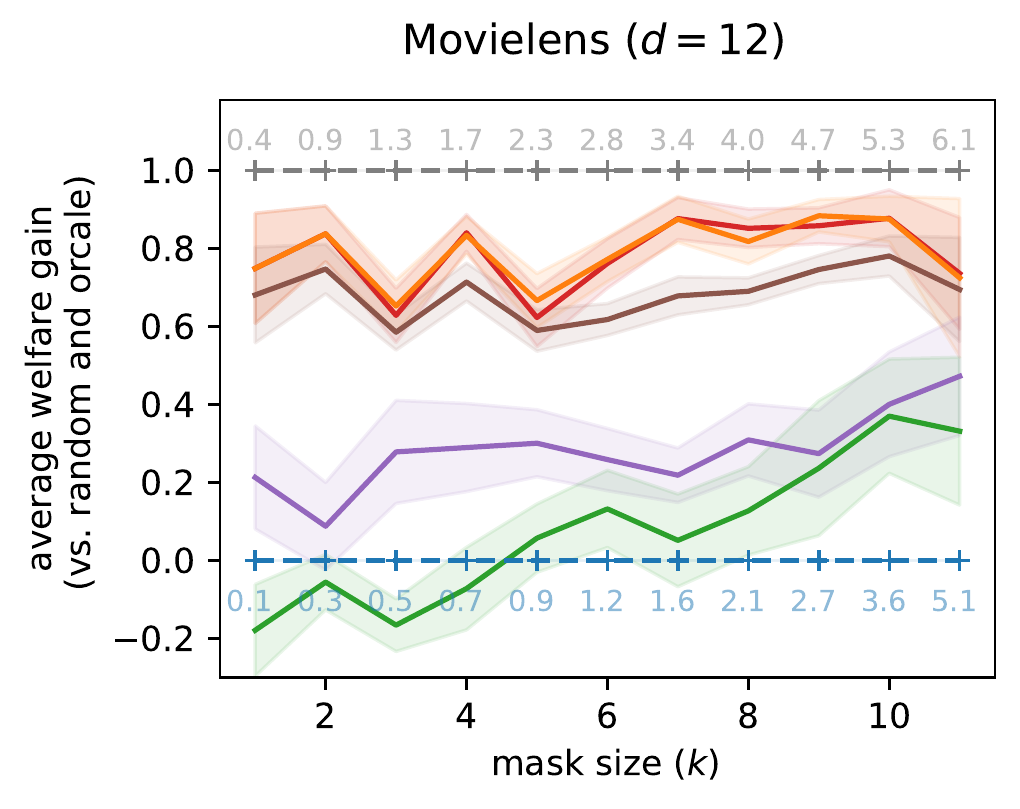} \,\,
    \includegraphics[width=0.51\linewidth]{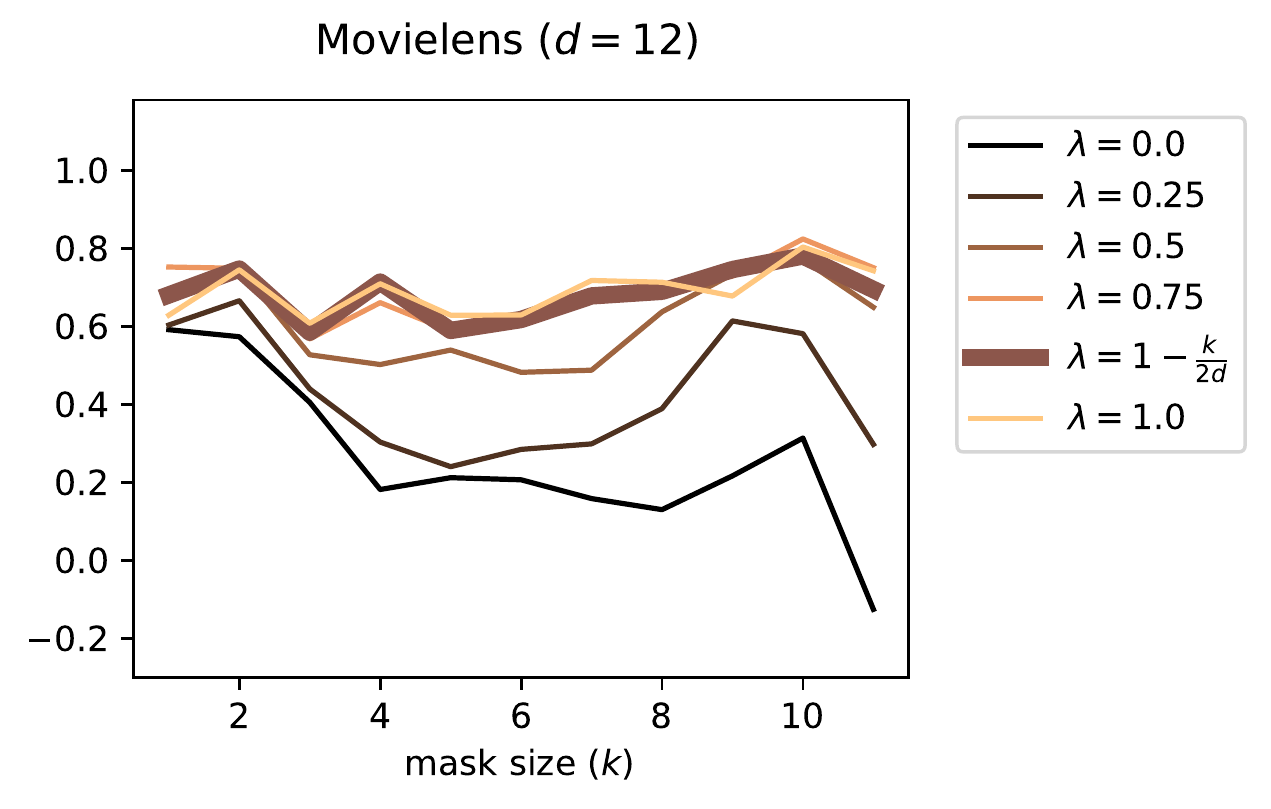} 
    \caption{
    \textbf{(Top row:)}
    Unormalized results, showing how potential and obtained welfare increase with $k$.
    \textbf{(Bottom-left:)}
    Results normalized to 1 from above (matching the \oracle, as in the main paper) and below to 0 (matching \random).
    Our approach shows relative performance that is fairly constant across $k$.
    \textbf{(Bottom-right:)}
    Performance (also normalized) for various fixed $\lambda$.
    }
    \vspace{128in}
    \label{fig:other_real}
\end{figure}


\end{document}